\pdfoutput=1
\documentclass[10pt]{scrartcl}

\usepackage[english]{babel} 
\usepackage[protrusion=true,expansion=true]{microtype} 
\usepackage{amsmath,amsfonts,amssymb,amsthm,color,fullpage} 
\usepackage{verbatim,fancyvrb}
\usepackage{alltt}
\usepackage{tikz}
\usepackage{enumerate}
\usepackage{caption}
\usepackage{subfigure}
\usepackage[]{algorithm}
\usepackage{algpseudocode}
\usepackage{stmaryrd}
\usepackage{afterpage}
\usepackage{url}
\usepackage[english=american]{csquotes}
\usepackage{booktabs}
\usepackage{capt-of}


\newtheorem{definition}{Definition}
\newtheorem{theorem}[definition]{Theorem}

\newtheorem{proposition}[definition]{Proposition}
\newtheorem{lemma}[definition]{Lemma}
\newtheorem{example}[definition]{Example}

\newtheorem{notation}[definition]{Notation}


\def\B{{\mathbb B}}
\def\N{{\mathbb N}}

\def\R{{\mathbb R}}


\DeclareMathOperator{\dtw}{\delta}

\newcommand{\QED}{}
\newcommand{\commentout}[1]{}


\newcommand{\abs}[1]{\mathop{\left\lvert #1 \right\rvert}} 
\newcommand{\args}[1]{\mathop{\left( #1 \right)}} 

\newcommand{\norm}[1]{\mathop{\left\lVert #1 \right\rVert}}
\newcommand{\cbrace}[1]{\mathop{\left\{ #1 \right\}}}

\newcommand{\argsS}[2]{\mathop{\left( #1 \right)#2}} 

\newcommand{\normS}[2]{\mathop{\left\lVert #1 \right\rVert#2}}



\renewcommand{\S}[1]{{\mathcal{#1}}}           	

\renewenvironment{cases}{%
\left\{\begin{array}{c@{\quad : \quad}l}}%
{%
\end{array}\right.}

\begin{document}

\title{Semi-Metrification of the Dynamic Time Warping Distance}

\author{Brijnesh J.~Jain\\
 Technische Universit\"at Berlin, Germany\\
 e-mail: brijnesh.jain@gmail.com}
\date{}
\maketitle

\begin{abstract}
The dynamic time warping (dtw) distance fails to satisfy the triangle inequality and the identity of indiscernibles. As a consequence, the dtw-distance is not warping-invariant, which in turn results in peculiarities in data mining applications. This article converts the dtw-distance to a semi-metric and shows that its canonical extension is warping-invariant. Empirical results indicate that the nearest-neighbor classifier in the proposed semi-metric space performs comparably to the same classifier in the standard dtw-space. To overcome the undesirable peculiarities of dtw-spaces, this result suggests to further explore the semi-metric space for data mining applications.
\end{abstract}

\bigskip

\begin{small}
\fbox{\parbox{0.95\textwidth}{\tableofcontents}}
\end{small}
\clearpage


\section{Introduction}

Time series such as stock prices, weather data, biomedical measurements, and biometrics data are sequences of time-dependent observations. Comparing time series is a fundamental task in various data mining applications \cite{Aghabozorgi2015,Esling2012,Fu2011}. One challenge in comparing time series is to eliminate their temporal differences \cite{Sakoe1978}. A common and widely applied technique to deal with such temporal variation is the \emph{dynamic time warping} (dtw) distance \cite{Mueller2007}. 

The dtw-distance is not a metric. Consequently, dtw-spaces are mathematically less structured than metric spaces. To overcome the inherent structural limitations of the dtw-distance, metric distances and positive semi-definite alignment kernels have been proposed \cite{Abanda2018,Cuturi2011,Marteau2009}. The resulting (dis)similarities, however, differ from the original dtw-distance and therefore do not contribute to a better understanding of dtw-spaces. 

The missing metric properties of the dtw-distance are the triangle inequality and the identity of indiscernibles. The absence of the triangle inequality is widely acknowledged in the literature and has been analyzed theoretically \cite{Lemire2009} and empirically \cite{Casacuberta1987}. In contrast, the effects and possible peculiarities caused by the lack of the identity of indiscernibles in conjunction with the lack of the triangle inequality have not been clearly exposed in the literature.

\medskip

In this article, we convert the dtw-distance into a warping-invariant semi-metric to overcome peculiarities caused by the missing metric properties of the dtw-distance. A semi-metric is a distance that satisfies all axioms of a metric with exception of the triangle inequality. Since the proposed semi-metric is warping-invariant, it overcomes the peculiarities of the dtw-distance in data mining applications. In more detail:

\bigskip

\noindent
\emph{1. We define warping-invariance and show that the dtw-distance is not warping-invariant.}

Informally, warping-invariance refers to the property that a distance between two time series remains unchanged under compositions of compressions and expansions. The lack of warping-invariance is notable because the dtw-distance has been designed to overcome the inability of the Euclidean distance to cope with temporal variations. According to Kruskal and Liberman \cite{Kruskal1983}, the dtw-distance measures how different two time series are in a way that is insensitive to expansions and compressions but sensitive to other differences \cite{Kruskal1983}. According to Sakoe and Chiba \cite{Sakoe1978}, the dtw-distance eliminates timing difference by warping the time axis to minimize the accumulated differences between two time series. The limitation of the dtw-distance is that invariance under temporal transformations is only a pairwise but not a transitive property. 

\bigskip

\noindent
\emph{2. We show that the lack of warping-invariance causes peculiarities in data mining applications.}

The lack of warping-invariance results in a peculiar behavior of the nearest-neighbor rule. This peculiarity propagates to data mining methods in dtw-spaces based on the nearest-neighbor rule such as, $k$-nearest-neighbor classification \cite{Bagnall2017}, k-means clustering \cite{Abdulla2003,Cuturi2017,Hautamaki2008,Morel2018,Petitjean2016,Rabiner1979,Soheily-Khah2016}, learning vector quantization \cite{Jain2018,Somervuo1999}, and self-organizing maps \cite{Kohonen1998}. 

\bigskip

\noindent
\emph{3. We convert the dtw-distance into a warping-invariant semi-metric.} 

As illustrated in Figure \ref{fig:semi-metric}, the semi-metric is a dtw-distance defined on a subset of time series, called condensed forms. A condensed form is a time series without two consecutive identical elements. To compute the semi-metric of two time series, we first transform the underlying time series to their condensed forms by collapsing consecutive replicates to singletons. Then we apply the standard dtw-distance on the resulting condensed forms. Thus, the proposed semi-metric space is a dtw-space restricted to condensed forms. To convert the dtw-distance to a semi-metric, we develop a theoretical framework that combines novel results from words over arbitrary alphabets with warping walks -- a generalization of warping paths -- and matrix algebra. 

\begin{figure}
\centering
\includegraphics[width=0.8\textwidth]{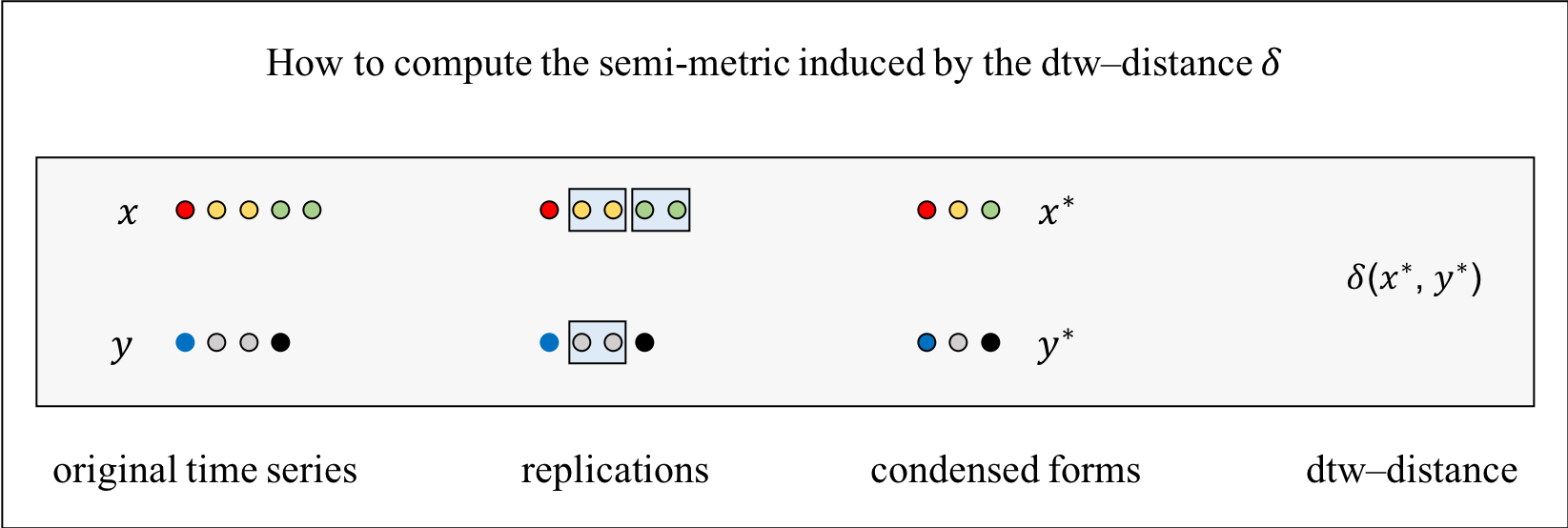}
\caption{Process of computing the proposed semi-metric. The first column shows two time series $x$ and $y$ consisting of $5$ and $4$ elements, respectively. Elements are shown by filled balls, where different colors refer to different real values. The second column identifies consecutive replications within the time series as highlighted by the blue-shaded boxes. The third column collapses replications to singletons resulting in condensed forms $x^*$ and $y^*$. The semi-metric is the dtw-distance between the condensed forms $x^*$ and $y^*$.}
\label{fig:semi-metric}
\end{figure}

\bigskip

\noindent
\emph{4. In experiments, we show that the error-rates of the nearest-neighbor classifiers in the semi-metric space and in the original dtw-space are comparable.}

This result suggests to further explore time series data mining methods in the proposed semi-metric space in order to overcome undesirable peculiarities caused by the standard dtw-distance.

\bigskip

The rest of this article is organized as follows: Section \ref{sec:results} informally outlines the basic approach and presents the main results. Section \ref{sec:peculiarities} discusses peculiarities caused by the lack of warping-invariance of the dtw-distance. Section \ref{sec:theory} develops the theoretical framework to prove the main results. Experiments are presented and discussed in Section \ref{sec:experiments}. Finally, Section \ref{sec:conclusion} concludes with a summary of the main findings and an outlook on further research.

\section{Results}\label{sec:results}

This section informally sketches the basic idea of the proposed approach and the main results. We begin with introducing the dtw-distance and defining warping-invariance. 

\subsection{The DTW-Distance}
Let $\N$ be the set of all positive integers and $\N_0$ is the set of all non-negative integers. We write $[n] = \cbrace{1, \ldots, n}$ for $n \in \N$. A real-valued \emph{time series} is a sequence $x = (x_1, \ldots, x_n)$ with elements $x_i \in \R$ for all $i \in [n]$. We denote the length of $x$ by $\abs{x}$ and the set of all real-valued time series of finite length by $\S{T}$. 

A \emph{warping path} of order $m \times n$ and length $\ell$ is a sequence $p = (p_1 , \dots, p_\ell)$ consisting of $\ell$ points $p_l = (i_l,j_l) \in [m] \times [n]$ such that
\begin{enumerate}
\item $p_1 = (1,1)$ and $p_\ell = (m,n)$ \hfill (\emph{boundary conditions})
\item $p_{l+1} - p_{l} \in \cbrace{(1,0), (0,1), (1,1)}$ for all $l \in [\ell-1]$ \hfill(\emph{step condition})
\end{enumerate}
We denote the set of all warping paths of order $m \times n$ by $\S{P}_{m,n}$. Suppose that $p = (p_1, \ldots, p_\ell) \in \S{P}_{m,n}$ is a warping path with $p_l = (i_l, j_l)$ for all $l \in [\ell]$. Then $p$ defines an expansion (warping) of time series $x = (x_1, \ldots, x_m)$ and $y = (y_1, \ldots, y_n)$ to time series $\phi_p(x) = (x_{i_1}, \ldots, x_{i_\ell})$ and $\psi_p(y) = (y_{j_1}, \ldots, y_{j_\ell})$ of the same length $\ell$. By definition, the length $\ell$ of a warping path satisfies $\max(m,n) \leq \ell < m+n$. This shows that $\ell \geq \max(m,n)$ and therefore $\phi_p(x)$ and $\psi_p(y)$ are indeed expansions of $x$ and $y$.

The \emph{cost} of warping time series $x$ and $y$ along warping path $p$ is defined by
\begin{equation*}
C_p(x, y) = \normS{\phi_p(x)-\psi_p(y)}{^2} = \sum_{(i,j) \in p} \argsS{x_i-y_j}{^2},
\end{equation*}
where $\norm{\cdot}$ denotes the Euclidean norm and $\phi_p$ and $\psi_p$ are the expansions defined by $p$. The \emph{dtw-distance} of $x$ and $y$ is of the form
\begin{align*}
\dtw(x, y) = \min \cbrace{\sqrt{C_p(x, y)} \,:\, p \in \S{P}_{m,n}}.
\end{align*}
A warping path $p$ with $C_p(x, y) = \dtw^2(x, y)$ is called an \emph{optimal warping path} of $x$ and $y$. By definition, the dtw-distance minimizes the Euclidean distance between all possible expansions that can be derived from warping paths. In addition, the dtw-distance satisfies the properties 
\begin{enumerate}
\itemsep0em
\item $\dtw(x, y) \geq 0$
\item $\dtw(x, x) = 0$
\item $d(x, y) = d(y, x)$
\end{enumerate} 
for all $x, y \in \S{T}$ and fails to satisfy the properties 
\begin{enumerate}
\itemsep0em
\item $\dtw(x, y) = 0 \; \Leftrightarrow \; x = y$ \hfill (identity of indiscernibles)
\item $\dtw(x, z) \leq \dtw(x, y) + \dtw(y, z)$ \hfill (triangle inequality)
\end{enumerate} 
for all $x, y, z \in \S{T}$. Computing the dtw-distance and deriving an optimal warping path is usually solved by applying techniques from dynamic programming \cite{Sakoe1978}.

\subsection{Warping-Invariance}

This section defines warping-invariance. Warping appears as expansion and compression with respect to the time axis. The dtw-distance measures how different two time series are in a way that is not sensitive to expansion-compression but sensitive to other differences \cite{Kruskal1983}. To define warping-invariance we need to introduce some concepts. For the sake of readability, Figure \ref{fig:expansions} illustrates most of the concepts we consider in this and the next section. 

\begin{figure}[t]
\centering
\includegraphics[width=0.8\textwidth]{./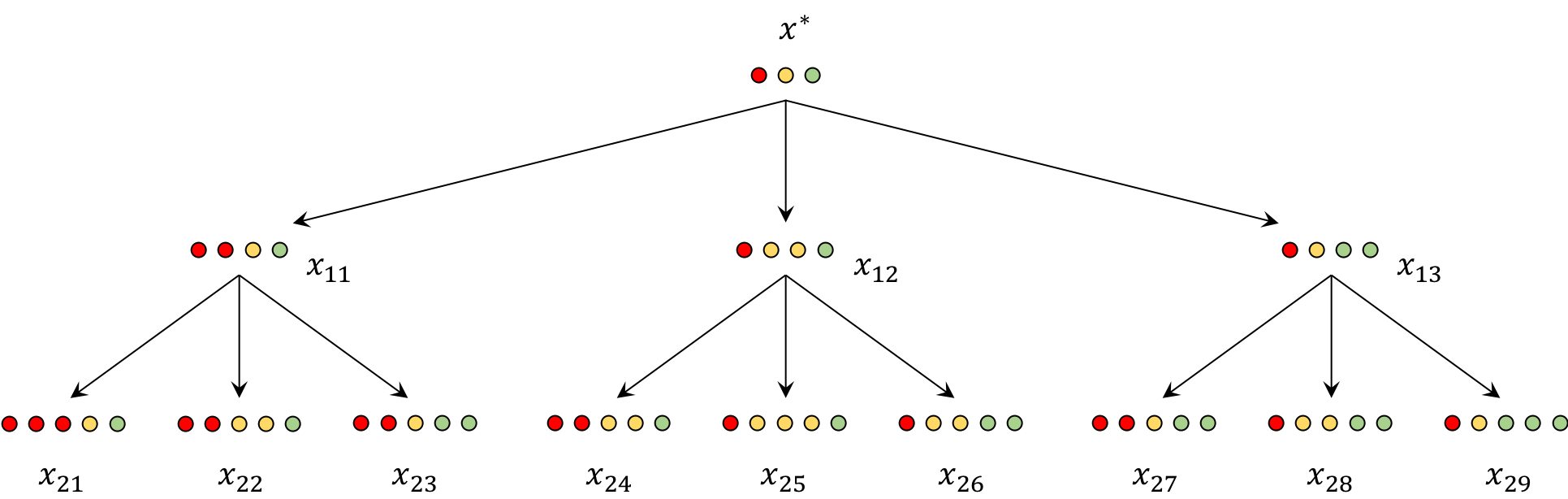}
\caption{Directed rooted tree with root $x^*$ of infinite depth. Nodes represent time series and edges represent one-element expansions. As in Figure \ref{fig:semi-metric}, elements of a time series are shown by filled balls, where different colors correspond to different values. The childs of a node $x$ are the one-element expansions of $x$. Every descendant $x'$ of a node $x$ is an expansion of $x$ (and $x$ is a compression of $x'$). If two nodes $x$ and $y$ have a common ancestor $z$, then $z$ is a common compression of $x$ and $y$. The root $x^*$ can not be expressed as an expansion of a shorter time series and is therefore irreducible. Thus, $x^*$ is a common compression and the condensed form of all nodes in the tree. }
\label{fig:expansions}
\end{figure}

An expansion of a time series is obtained by replicating a subset of its elements. More precisely, suppose that $x =(x_1, \ldots, x_n)$ is a time series. A time series $x'$ is an \emph{expansion} of time series $x$, written as $x' \succ x$, if there are indices $i_1, \ldots, i_k \in [n]$ and positive integers $r_1, \ldots, r_k \in \N$ for some $k \in \N_0$ such that $x'$ is of the form
\[
x' = (x_1, \ldots, 
\underbrace{x_{i_1}, \ldots, x_{i_1}}_{r_1\text{\,-times}}, x_{i_1 + 1}, \ldots, 
\underbrace{x_{i_2}, \ldots, x_{i_2}}_{r_2\text{\,-times}}, x_{i_2 + 1}, \ldots, 
\underbrace{x_{i_k}, \ldots, x_{i_k}}_{r_k\text{\,-times}}, x_{i_k+1}\ldots, x_n)
\]
By setting $k = 0$ we find that a time series is always an expansion of itself. A time series $x$ is a \emph{compression} of time series $x'$, written as $x \prec x'$, if $x'$ is an expansion of $x$. A time series $z$ is a \emph{common compression} of time series $x$ and $y$ if $z$ is a compression of $x$ and of $y$, that is $z \prec x$ and $z \prec y$. We write $x \sim y$ if both time series have a common compression. A distance function $d:\S{T} \times \S{T} \rightarrow \R_{\geq 0}$ is \emph{warping-invariant} if 
\[
d(x, y) = d(x', y')
\]
for all time series $x, y, x', y' \in \S{T}$ with $x \sim x'$ and $y \sim y'$. Figure \ref{eq:warping-identification} illustrates the concept of warping-invariance. The next example shows that the dtw-distance is not warping-invariant, although it has been designed to eliminate expansions and compressions.
\begin{example}\label{ex:warping-invariance} Let $x = (0, 1)$ and $x' = (0, 1, 1)$ two time series. Then $x'$ is an expansion of $x$ and $\dtw(x, x') = 0$. Suppose that $y = (0, 2)$ is another time series. Then we have
\begin{align*}
\dtw(x, y)^2 &= (0-0)^2+(1-2)^2 = 1\\
\dtw(x', y)^2 &= (0-0)^2+(1-2)^2+(1-2)^2 = 2.
\end{align*}
This implies $\dtw(x, y) \neq \dtw(x', y)$, where $x$ is a common compression of $x, x'$ and $y$ is a compression of itself. Hence, the dtw-distance is not warping-invariant. 
\qed
\end{example}

\subsection{Approach and Results}
The proposed approach to convert a dtw-space into a semi-metric space follows the standard approach to convert a pseudo-metric space to a metric space. A pseudo-metric is a distance that satisfies all axioms of a metric with exception of the identity of indiscernibles. However, further problems need to be resolved, because the dtw-distance additionally fails to satisfy the triangle inequality and therefore conveys less mathematical structure than a pseudo-metric. 

\medskip

Consider the relation $x \sim y \,\Leftrightarrow\, \dtw(x, y) = 0$ for all time series $x, y \in \S{T}$, called \emph{warping identification} henceforth. From Section \ref{sec:theory} follows that warping identification and common compression are equivalent definitions of the same relation. The next result forms the foundation for conversion of the dtw-distance to a semi-metric.

\paragraph*{\textbf{Proposition~\ref{prop:warping-identification-class}:}} {\em Warping identification is an equivalence relation on $\S{T}$.} \qed

\medskip

The assertion of Prop.~\ref{prop:warping-identification-class} is not self-evident, because warping paths are not closed under compositions and the dtw-distance fails to satisfy the triangle inequality. Therefore, it might be the case that there are time series $x, y,z$ such that $\delta(x, y) = \delta(y, z) = 0$ but $\delta(x, z) > \delta(x, y) + \delta(y, z) = 0$. 

Since warping identification $\sim$ is an equivalence relation, we can construct a quotient space. For every time series $x$ we define the equivalence class $[x] = \cbrace{y \in \S{T}\,:\, y \sim x}$ consisting of all time series that are warping identical to $x$. Then the set 
\[
\S{T}^* = \cbrace{[x] \,:\ x \in \S{T}}
\]
of all equivalence classes is the quotient space of $\S{T}$ by warping identification $\sim$. We may think of the quotient space as the space obtained by collapsing warping identical time series to a single point. Next, we endow the quotient space $\S{T}^*$ with the quotient distance
\[
\delta^*: \S{T}^* \times \S{T}^* \rightarrow \R_{\geq 0}, \quad ([x], [y]) \mapsto \inf_{x' \in [x]} \; \inf_{y' \in [y]} \; \delta(x', y').
\]

\medskip

The definition of the quotient distance $\dtw^*$ is posed as a solution to a min-min problem of dtw-distances over infinite sets. This formulation is inconvenient for theoretical and practical purposes. To show that $\dtw^*$ is a semi-metric that can be efficiently computed, we derive an equivalent formulation of $\dtw^*$. For this, we introduce irreducible time series. A time series is said to be \emph{irreducible} if it cannot be expressed as an expansion of a shorter time series. The next result shows that an equivalence class can be generated by expanding an irreducible time series in all possible ways.

\paragraph*{\textbf{Proposition~\ref{prop:generator-of-[x]}:}} \emph{For every time series $x$ there is an irreducible time series $x^*$ such that} 
\begin{align*}
[x] &= \cbrace{y \in \S{T} \,:\, y \succ x^*}.
\end{align*}

\medskip

The irreducible time series $x^*$ in Prop.~\ref{prop:generator-of-[x]} is called the \emph{condensed form} of $x$. Every time series has a unique condensed form by Prop.~\ref{prop:existence-of-condensed-form}. Hence, $x^*$ is the condensed form of all time series contained in the equivalence class $[x]$. Proposition~\ref{prop:generator-of-[x]} states that the equivalence classes are the trees rooted at a their respective condensed forms as illustrated in Figure \ref{fig:expansions}. Next, we show that expansions do not decrease the dtw-distance to other time series. 

\paragraph*{\textbf{Proposition~\ref{prop:expansion-inequality}:}} \emph{Let $x, x' \in \S{T}$ such that $x' \succ x$. Then $\dtw(x', z) \geq \dtw(x,z)$ for all $z \in \S{T}$.\qed}

\medskip

We will use Prop.~\ref{prop:expansion-inequality} in Section \ref{sec:peculiarities} to discuss peculiarities of data-mining applications on time series. By invoking Prop.~\ref{prop:generator-of-[x]} and Prop.~\ref{prop:expansion-inequality} we obtain the first key result of this contribution.

\paragraph*{\textbf{Theorem \ref{theorem:semi-metric}:}} \emph{The quotient distance $\dtw^*$ induced by the dtw-distance $\dtw$ is a well-defined semi-metric satisfying 
\[
\dtw^*([x], [y]) = \dtw(x^*,y^*)
\]
for all $x, y \in \S{T}$ with condensed forms $x^*$ and $y^*$, respectively. 
} 

\medskip

Theorem \ref{theorem:semi-metric} converts the dtw-distance to a semi-metric that satisfies the identity of indiscernibles and shows that the quotient distance $\dtw^*([x], [y])$ can be efficiently computed by first compressing $x$ and $y$ to their condensed forms $x^*$ and $y^*$, resp., and then taking their dtw-distance $\dtw(x^*,y^*)$.

The second key result of this contribution shows that the semi-metric induced by the dtw-distance is warping-invariant. Since warping-invariance is defined on the set $\S{T}$ rather than on the quotient space $\S{T}^*$, we extend the quotient distance $\dtw^*$ to a distance on $\S{T}$ by virtue of
\[
\dtw^\sim: \S{T} \times \S{T} \rightarrow \R_{\geq 0}, \quad (x, y) \mapsto \dtw^\sim(x, y) = \dtw^*([x], [y]).
\]
We call $\dtw^\sim$ the \emph{canonical extension} of $\dtw^*$.

\paragraph*{\textbf{Theorem \ref{theorem:warping-invariance}:}} \emph{The canonical extension $\dtw^\sim$ of the quotient distance $\dtw^*$ is warping-invariant.
} 

\section{Peculiarities of the DTW-Distance}\label{sec:peculiarities}

This section discusses peculiarities of data-mining applications on time series caused by the absence of the triangle inequality and the identity of indiscernibles of the dtw-distance. 

\subsection{The Effect of Expansions on the Nearest-Neighbor Rule}

Suppose that $x'$ is an expansion of $x$. Then Prop.~\ref{prop:expansion-inequality} yields
\begin{align}
\label{eq:warping-identification}
\dtw(x, x') &= 0\\
\label{eq:expansion-inequality}
\dtw(x, y)\phantom{'} &\leq \dtw(x', y)
\end{align}
for all time series $y$. Equation \eqref{eq:warping-identification} states that a time series is warping identical with its expansions. Equation \eqref{eq:expansion-inequality} says that expansions do not decrease the dtw-distance to other time series. Observe that Eq.~\eqref{eq:warping-identification} and \eqref{eq:expansion-inequality} describe the situation of Example \ref{ex:warping-invariance} in general terms. 

Equation \eqref{eq:expansion-inequality} affects the nearest-neighbor rule. To see this, we consider a set $\S{D} = \cbrace{x, y} \subseteq \S{T}$ of two prototypes. The Voronoi cells of $\S{D}$ are defined by 
\begin{align*}
\S{V_D}(x) &= \cbrace{z \in \S{T} \,:\, \dtw(x, z) \leq \dtw(y, z)}\\
\S{V_D}(y) &= \cbrace{z \in \S{T} \,:\, \dtw(y, z) \leq \dtw(x, z)}
\end{align*}
The nearest neighbor of a time series $z$ is prototype $x$ if $z \in \S{V_D}(x)$ and prototype $y$ otherwise. The nearest neighbor rule assigns $z$ to its nearest neighbor. To break ties, we arbitrarily assign $x$ as nearest neighbor for all time series residing on the boundary 
\[
\S{B_D}(x,y) = \S{V_D}(x) \cap \S{V_D}(y) = \cbrace{z \in \S{T} \,:\, \dtw(x, z) = \dtw(y, z)}.
\]
Suppose that we replace prototype $x$ by an expansion $x'$ to obtain a modified set $\S{D}' = \cbrace{x', y}$ of prototypes. Although $x'$ and $x$ are warping identical by Eq.~\eqref{eq:warping-identification}, the Voronoi cells of $\S{D}$ and $\S{D}'$ differ. From Eq.~\eqref{eq:expansion-inequality} follows that $\S{V_{D'}}(x') \subseteq \S{V_D}(x)$, which in turn implies $\S{V_{D'}}(y) \supseteq \S{V_D}(y)$. Keeping prototype $y$ fixed, expansion of prototype $x$ decreases its Voronoi cell and thereby increases the Voronoi cell of prototype $y$. This shows that the nearest-neighbor rule in dtw-spaces depends on temporal variation of the prototypes. 

This peculiarity of the nearest-neighbor rule affects data mining methods on time series such as $k$-nearest-neighbor classification \cite{Bagnall2017}, k-means clustering \cite{Abdulla2003,Cuturi2017,Hautamaki2008,Morel2018,Petitjean2016,Rabiner1979,Soheily-Khah2016}, learning vector quantization \cite{Jain2018,Somervuo1999}, and self-organizing maps \cite{Kohonen1998}.

\subsection{The Effect of Expansions on k-Means}

In this section, we discuss peculiarities of the k-means algorithm in dtw-spaces as an example application of the nearest-neighbor rule. To apply k-means in dtw-spaces, we need a concept of average of time series. Different forms of time series averages have been proposed (see \cite{Schultz2018} for an overview). One principled formulation of an average is based on the notion of Fr\'echet function \cite{Frechet1948}: Suppose that $\S{S} = \args{x_1, \dots, x_n}$ is a sample of $n$ time series $x_i \in \S{T}$. Then the \emph{Fr\'echet function} of $\S{S}$ is defined by
\[
F: \S{T} \rightarrow \R, \quad z \mapsto \sum_{i=1}^n \dtw\!\argsS{x_i, z}{^2},
\]
A \emph{sample mean} of $\S{S}$ is any time series $\mu \in \S{T}$ that satisfies $F(\mu) \leq F(z)$ for all $z \in \S{T}$. A sample mean exists but is not unique in general \cite{Jain2016b}. Computing a mean of a sample of time series is NP-hard \cite{Bulteau2018}. Efficient heuristics to approximate a mean of a fixed and pre-specified length are the stochastic subgradient method \cite{Schultz2018}, soft-dtw \cite{Cuturi2017}, and a majorize-minimize algorithm \cite{Hautamaki2008,Petitjean2011} that has been popularized by \cite{Petitjean2011} under the name DTW Barycenter Averaging algorithm. 

\medskip

The k-means algorithm can be generalized to dtw-spaces by replacing arithmetic means of vectors with sample means of time series. A partition of a set $\S{S}= \cbrace{x_1, \dots, x_n} \subseteq \S{T}$ of time series is a set $\S{C} = \args{\S{C}_1, \ldots, \S{C}_k}$ of $k$ non-empty subsets $\S{C}_i \subseteq \S{S}$, called \emph{clusters}, such that $\S{S}$ is the disjoint union of these clusters. By $\Pi_k$ we denote the set of all partitions consisting of $k$ clusters. The objective of k-means is to minimize the cost function
\[
J: \Pi_k \rightarrow \R, \quad \args{\S{C}_1, \ldots, \S{C}_k} \mapsto \sum_{i=1}^k \sum_{x \in \S{C}_i} \dtw(x, \mu_i)^2,
\]
where $\mu_i$ is a mean of cluster $\S{C}_i$ for all $i \in [k]$. We can equivalently express the cost function $J$ by
\[
J \args{\S{C}_1, \ldots, \S{C}_k} = \sum_{i=1}^k F_i(\mu_i)
\]
where $F_i$ is the Fr\'echet function of cluster $\S{C}_i$. To minimize the cost function $J$, the standard k-means algorithm starts with an initial set $\mu_1, \ldots, \mu_k$ of means and then proceeds by alternating between two steps until termination:
\begin{enumerate}
\itemsep0em
\item \emph{Assignment step}: Assign each sample time series to the cluster of its closest mean. 
\item \emph{Update step}: Recompute the means for every cluster. 
\end{enumerate}
Due to non-uniqueness of sample means, the performance of k-means does not only depend on the choice of initial centroids as in Euclidean spaces, but also on the choice of centroids in the update step. In the following, we present two examples of peculiar behavior of k-means in dtw-spaces. 

\begin{example}\label{example:assignment-k-means}\em
We assume that the four time series 
\begin{align*}
x_1 &= (-1, 0, 0) & x_3 &= (0, 2, 3)\\
x_2 &= (-1, 0, 2) & x_4 &= (1, 2, 3)
\end{align*}
are partitioned into two clusters $\S{C}_1 = \cbrace{x_1, x_2}$ and $\S{C}_2 = \cbrace{x_3, x_4}$. Cluster $\S{C}_1$ has a unique mean $\mu_1 = (-1,0,1)$. Cluster $\S{C}_2$ has infinitely many means as indicated by Figure \ref{fig:non-condensed}. For every $r \in \N$ the time series 
\[
\mu_2^r = (0.5, 2, \underbrace{3, \ldots, 3}_{r-\text{times}}),
\]
is a mean of $\S{C}_2$. From Eq.~\eqref{eq:warping-identification} and the transitivity of expansions follows that $\mu_2^r$ and $\mu_2^s$ are warping identical for all $r, s \in \N$. Equation \eqref{eq:expansion-inequality} yields $\dtw(\mu_2^1, y) \leq \cdots \leq \dtw(\mu_2^r, y)$ for all time series $y \in \S{T}$ and all $r \in \N$. Hence, with increasing number $r$ of replications, the Voronoi cell of site $\mu_2^r$ decreases, whereas the Voronoi cell of site $\mu_1$ increases. 
\qed
\end{example}

\medskip

Example \ref{example:assignment-k-means} shows that the Voronoi cell of a centroid $\mu_i$ can be externally controlled by expanding or compressing $\mu_i$ without changing the Fr\'echet variation $F(\mu_i)$. This in turn affects the assignment step, which is based on the nearest-neighbor rule. 

\begin{figure*}[t]
\centering
 \includegraphics[width=0.85\textwidth]{./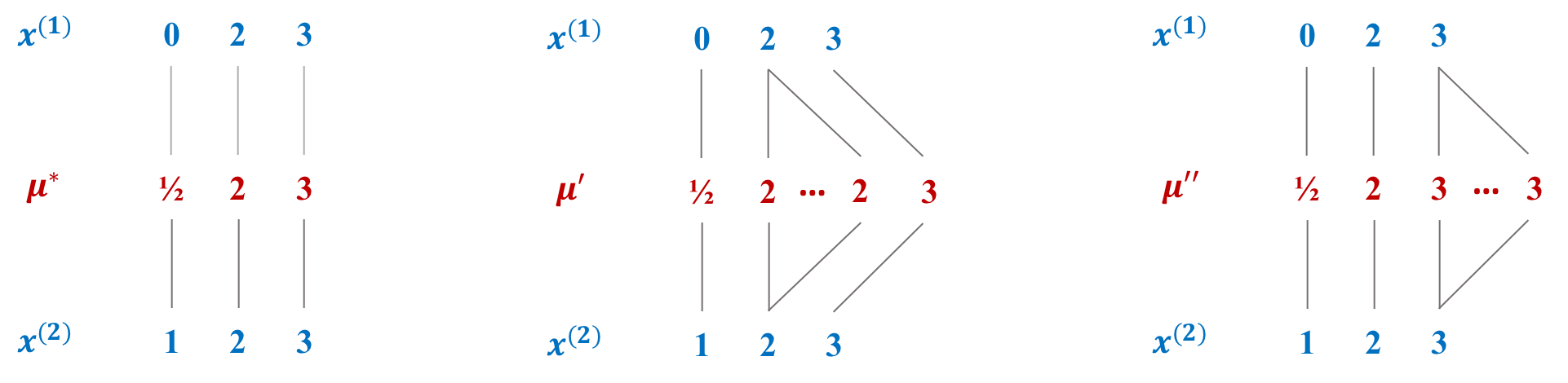}
 \caption{Means $\mu^*, \mu', \mu''$ of sample $\S{S} = \args{x^{(1)}, x^{(2)}}$. The means $\mu'$ and $\mu''$ are expansions of $\mu^*$.}
\label{fig:non-condensed}
\end{figure*}

\begin{example}\label{example:non-condensed-k-means}\em
Consider the two clusters of the four time series from Example \ref{example:assignment-k-means}. One way to measure the quality of the clustering $\S{C} = \cbrace{\S{C}_1, \S{C}_2}$ is by \emph{cluster cohesion} and \emph{cluster separation}. Cluster cohesion is typically defined by
\[
F_{\text{cohesion}}(\mu_1, \mu_2) = F_1(\mu_1) + F_2(\mu_2),
\]
where $F_1(z)$ and $F_2(z)$ are the Fr\'echet functions of $\S{C}_1$ and $\S{C}_2$, respectively. Thus, cluster cohesion sums the Fr\'echet variations within each cluster. Cluster separation can be defined by
\[
F_{\text{separation}}(\mu_1, \mu_2) = \dtw(\mu_1, \mu_2)^2.
\]
Cluster separation measures how well-separated or distinct two clusters are. A good clustering simultaneously minimizes cluster cohesion and maximizes cluster separation. Cluster cohesion is invariant under the choice of cluster means, because the Fr\'echet variations $F_1(\mu_1)$ and $F_2(\mu_2)$ are well-defined. The situation is different for cluster separation. From Eq.~\eqref{eq:warping-identification} follows $\dtw(\mu_1, \mu_2^1)^2 \leq \cdots \leq \dtw(\mu_1, \mu_2^r)$ for all $r \in \N$. Thus, cluster separation depends on the choice of mean $\mu_2$ of the second cluster $\S{C}_2$ (the mean $\mu_1$ of $\S{C}_1$ is unique). Figure \ref{fig:clus_sep} shows that cluster separation linearly increases with increasing number $r$ of replications. 
\begin{figure}[t]
\centering
\includegraphics[width=0.6\textwidth]{./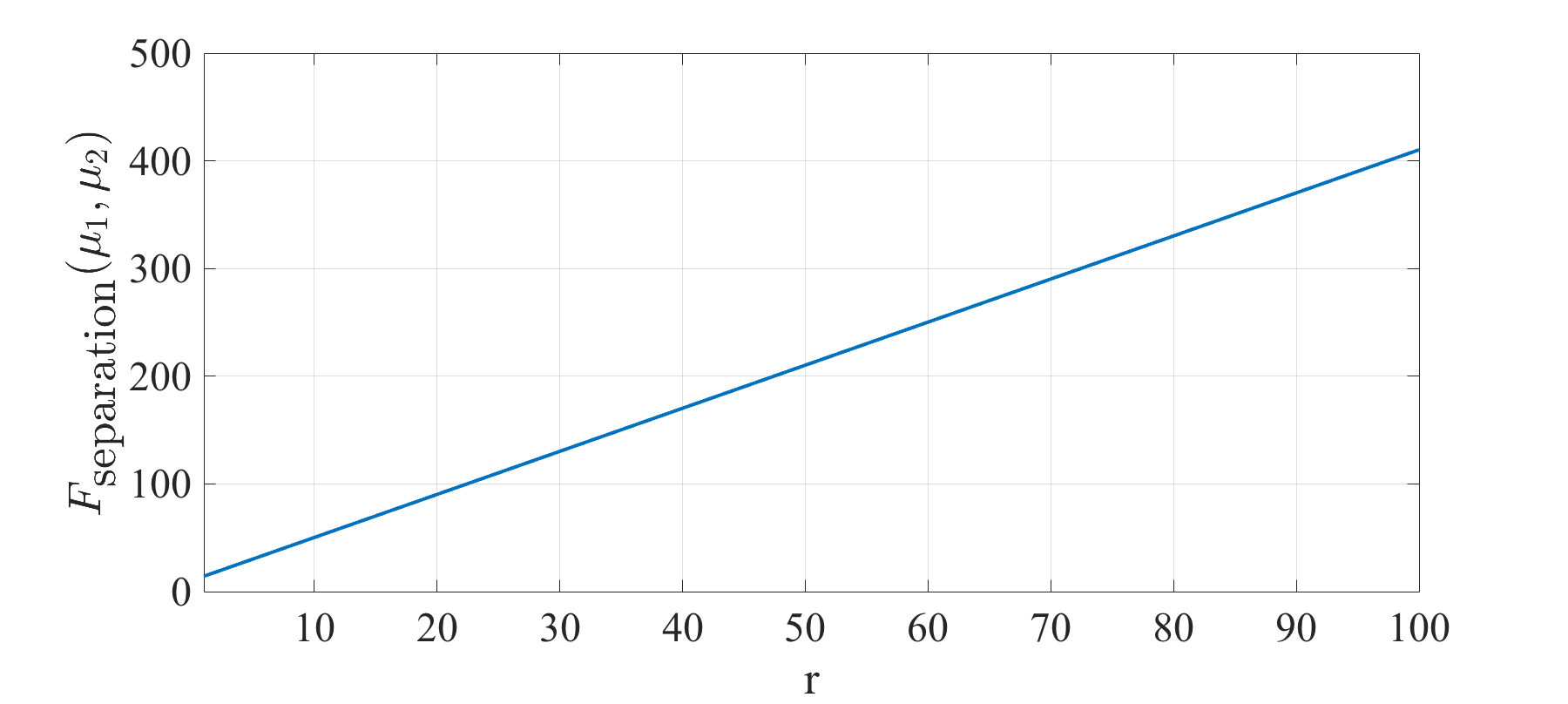}
\caption{Cluster separation $F_{\text{separation}}(\mu_1, \mu_2)$ as a function of the number $r$ of replications of the third element of $\mu^*_1 = (0.5, 2, 3)$.}
\label{fig:clus_sep}
\end{figure}
\qed
\end{example}

\medskip

While cluster cohesion is independent of the choice of mean, cluster separation can be maximized to infinity by expansions. Thus, cluster quality using cohesion and separation as defined in Example \ref{example:non-condensed-k-means} is not an inherent property of a clustering but rather a property of the chosen means as centroids, whose lengths can be controlled externally.

\section{Semi-Metrification of DTW-Spaces}\label{sec:theory}

This section converts the dtw-space to a semi-metric space and shows that the canonical extension of the proposed semi-metric is warping-invariant. The technical treatment is structured as follows: Section \ref{subsec:words} studies expansions, compressions, condensed forms, and irreducibility in terms of words over arbitrary alphabets. Sections \ref{subsec:warping-walks} and \ref{subsec:properties-of-warping-walks} propose warping walks as a more general and convenient concept than warping paths. In addition, we study some properties of warping walks. Section \ref{subsec:semi-metrification} constructs a semi-metric quotient space induced by the dtw-space as described in Section \ref{sec:results}. Finally, Section \ref{subsec:warping-invariance} proves that the canonical extension of the quotient metric is warping-invariant. 

For the sake of readability, this section is self-contained and restates all definitions and notations mentioned in earlier sections. Throughout this section, we use the following notations: 

\begin{notation}
The set $\R_{\geq 0}$ is the set of non-negative reals, $\N$ is the set of positive integers, and $\N_0$ is the set of non-negative integers. We write $[n] = \cbrace{1, \ldots, n}$ for $n \in \N$.
\end{notation}

\subsection{Prime Factorizations, Expansions, and Condensed Forms of Words}\label{subsec:words}

The key results of this article apply the same auxiliary results on warping paths and on time series. Both, warping paths and time series, can be regarded as words over different alphabets. We derive the auxiliary results on the more general structure of words. We first propose the notions of prime words and prime factorization of words. Then we show that every word has a unique prime factorization. Using prime factorizations we define expansions, compressions, condensed forms, and irreducible words. Finally, we study the relationships between these different concepts. 

\medskip

Let $\S{A}$ be a set, called \emph{alphabet}. We impose no restrictions on the set $\S{A}$. Thus, the elements of $\S{A}$ can be symbols, strings, trees, graphs, functions, reals, feature vectors, and so on. A \emph{word} over $\S{A}$ is a finite sequence $x = x_1 \cdots x_n$ with elements $x_i \in \S{A}$ for all $i \in [n]$. The set $\S{A}(x) = \cbrace{x_1, \ldots, x_n}$ is the set of elements contained in $x$. We denote by $\S{A}^*$ the set of all words over $\S{A}$ and by $\varepsilon$ the empty word. We write $\S{A}^+ = \S{A} \setminus \cbrace{\varepsilon}$ to denote the set of non-empty words over $\S{A}$.

Let $x = x_1 \cdots x_m$ and $y = y_1 \cdots y_n$ be two words over $\S{A}$. The \emph{concatenation} $z = xy$ is a word $z = z_1 \cdots z_{m+n}$ with elements
\[
z_i = \begin{cases}
x_i & 1 \leq i \leq m\\
y_{m-i} & m < i \leq m+n
\end{cases}
\]
for all $i \in [m+n]$. The concatenation is an associative operation on $\S{A}^*$. Hence, we can omit parentheses and write $xyz$ instead of $(xy)z$ or $x(yz)$ for all $x, y, z \in \S{A}^*$. For a given $n \in \N_0$, we write $x^n$ to denote the $n$-fold concatenation of a word $x \in \S{A}^*$. We set $x^0 = \varepsilon$ for every $x \in \S{A}^*$ and $\varepsilon^n = \varepsilon$ for all $n \in \N_0$. 

The \emph{length} of a word $x$ over $\S{A}$, denoted by $\abs{x}$, is the number of its elements. A \emph{prime word} is a word of the form $x = a^n$, where $a \in \S{A}$ and $n \in \N$ is a positive integer. From the definition follows that $\S{A}(a^n) = \cbrace{a}$ and that the empty word $\varepsilon$ is not a prime word. 

A non-empty word $v \in \S{A}^+$ is a \emph{factor} of a word $x \in \S{A}^*$ if there are words $u, w \in \S{A}^*$ such that $x = uvw$. Every non-empty word $x$ is a factor of itself, because $x = \varepsilon x \varepsilon$. In addition, the empty word $\varepsilon$ has no factors. 

Let $\S{A}^{**} = (\S{A}^*)^*$ be the set of all finite words over the alphabet $\S{A}^*$. A \emph{prime factorization} of a word $x \in \S{A}^*$ is a word $\Pi(x) = p_1 \cdots p_k \in \S{A}^{**}$ of prime factors $p_t \in \S{A}^*$ of $x$ for all $t \in [k]$ such that 
\begin{enumerate}
\itemsep0em
\item $p_1 \cdots p_k = x$ \hfill (\emph{partition-condition})
\item $\S{A}(p_t) \neq \S{A}(p_{t+1})$ for all $t \in [k-1]$ \hfill (\emph{maximality-condition})
\end{enumerate}
The partition-condition demands that the concatenation of all prime factors yields $x$. The maximality-condition demands that a prime factorization of $x$ consists of maximal prime factors. By definition, we have $\Pi(\varepsilon) = \varepsilon$.

\begin{proposition}
Every word over $\S{A}$ has a unique prime factorization.
\end{proposition}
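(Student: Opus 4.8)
The plan is to prove existence and uniqueness separately, each by induction on the length $\abs{x}$, since the natural operation is to peel off the maximal run of the leading symbol of $x$. Throughout I use that every prime word has the form $a^n$ with $a \in \S{A}$, $n \geq 1$, so that $\S{A}(a^n) = \cbrace{a}$ is a singleton; consequently the maximality-condition $\S{A}(p_t) \neq \S{A}(p_{t+1})$ simply says that consecutive prime factors have distinct underlying symbols.

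For existence, the base case is $x = \varepsilon$, for which $\Pi(\varepsilon) = \varepsilon$ is a prime factorization by definition. For a non-empty $x = x_1 \cdots x_n$, I would let $r$ be the length of the maximal initial run, i.e.\ the largest index with $x_1 = \cdots = x_r$ (so either $r = n$, or $x_{r+1} \neq x_1$). Then $p_1 = x_1^r$ is a prime word, and the suffix $x' = x_{r+1} \cdots x_n$ is strictly shorter, so by the induction hypothesis it has a prime factorization $p_2 \cdots p_k$. Concatenation gives $p_1 \cdots p_k = x$, which is the partition-condition; the internal maximality-conditions are inherited from the factorization of $x'$, and the single new boundary condition $\S{A}(p_1) \neq \S{A}(p_2)$ holds because $\S{A}(p_1) = \cbrace{x_1}$, $\S{A}(p_2) = \cbrace{x_{r+1}}$, and $x_1 \neq x_{r+1}$ by maximality of $r$.

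For uniqueness, the key observation --- and the only step requiring care --- is that the first prime factor of \emph{any} factorization of a non-empty $x$ is forced by $x$ alone. Suppose $p_1 \cdots p_k$ is a prime factorization with $p_t = a_t^{n_t}$, so the maximality-condition reads $a_t \neq a_{t+1}$. Then $a_1 = x_1$, and whenever position $n_1 + 1$ exists its symbol is $a_2 \neq a_1$; hence $n_1$ equals exactly the length $r$ of the maximal initial run of $x$ (and if $k = 1$ then $x = a_1^{n_1}$, so again $n_1 = r$). Thus $p_1 = x_1^r$ is determined by $x$ and therefore agrees across all factorizations. Deleting $p_1$ leaves two prime factorizations of the strictly shorter suffix $x'$, which coincide by the induction hypothesis; the base case $x = \varepsilon$ admits only $\Pi(\varepsilon) = \varepsilon$, which forces the lengths to match as well.

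I expect the main obstacle to be purely bookkeeping rather than conceptual: verifying that the maximality-condition behaves correctly at the splice point --- both when it is \emph{created} in the existence step and when it is \emph{removed} in the uniqueness step --- and confirming that deleting the leading factor genuinely yields a valid prime factorization of the suffix $x'$ (partition- and maximality-conditions included), rather than merely a concatenation that happens to equal $x'$.
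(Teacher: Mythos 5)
Your proof is correct. It differs from the paper's in organization, though both ultimately rest on the same observation that the maximality-condition pins down the length of each run. For existence, you peel the maximal initial run off the left and apply (strong) induction to the strictly shorter suffix, whereas the paper appends one letter at a time on the right and does a case split on whether the new letter extends the last prime factor; your version produces the whole factorization greedily in one pass, the paper's is a more incremental simple induction. For uniqueness the difference is more substantive: you prove a "forced first factor" lemma --- $p_1$ must equal $x_1^r$ with $r$ the maximal initial run, since $n_1 \leq r$ because $p_1$ is a constant prefix and $n_1 \geq r$ because otherwise $a_2 = x_{n_1+1} = a_1$ violates maximality --- and then recurse on the suffix, while the paper takes two putative distinct factorizations, locates the smallest index $i$ where they disagree, shows $a_i = b_i$ by comparing the letters following the common prefix, and derives a contradiction with the maximality-condition from $m_i \neq n_i$. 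Your inductive formulation is arguably cleaner and makes the "the factorization is determined symbol-run by symbol-run" intuition explicit; the paper's minimal-counterexample argument avoids having to verify that deleting the leading factor leaves a valid prime factorization of the suffix, which you correctly identify as the one piece of bookkeeping your route requires (and which does go through, since the partition-condition restricts to the suffix and the maximality-condition only constrains consecutive pairs).
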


\begin{proof}
Since $\Pi(\varepsilon) = \varepsilon$, it is sufficient to consider non-empty words over $\S{A}$. We first show that every $x \in \S{A}^+$ has a prime factorization by induction over the length $\abs{x} = n$. 

\medskip

\noindent
\emph{Base case}: Suppose that $n = 1$. Let $x$ be a word over $\S{A}$ of length one. Then $x$ is prime and a factor of itself. Hence, $\Pi(x) = x$ is a prime factorization of $x$.

\medskip

\noindent
\emph{Inductive step}: Suppose that every word over $\S{A}$ of length $n$ has a prime factorization. Let $x = x_1 \cdots x_{n+1}$ be a word of length $\abs{x} = n+1$. By induction hypothesis, the word $x' = x_1 \cdots x_n$ has a prime factorization $\Pi(x') = p_1 \cdots p_k$ for some $k \in [n]$. Suppose that $p_k = a^{n_k}$ for some $n_k \in \N$. Then $\S{A}(p_k) = a$. We distinguish between two cases: 
\begin{enumerate}
\itemsep0em
\item 
Case $a = x_{n+1}$: The concatenation $\tilde{p}_k = p_k a = a^{n_k + 1}$ is a prime word with $\S{A}(\tilde{p}_k) = \cbrace{a}$. Let $q = p_1\cdots p_{k-1}$. We have 
\[
x = x'a = p_1 \cdots p_k a \stackrel{(1)}{=} p_1 \cdots p_{k-1}\tilde{p}_k \stackrel{(2)}{=} q \tilde{p}_k \varepsilon.
\] 
Equation (1) shows the first property of a prime factorization and equation (2) shows that $\tilde{p}_k$ is a factor of $x$. From the induction hypothesis follows that $\S{A}(p_{k-1}) \neq \S{A}(p_k) = \S{A}(\tilde{p}_k)$. This shows that $\Pi(x) = p_1 \cdots p_{k-1} \tilde{p}_k$ is a prime factorization of $x$. 
\item 
Case $a \neq x_{n+1}$: The word $p_{k+1} = x_{n+1}$ is a prime factor of $x$ satisfying $p_1 \cdots p_k p_{k+1} = x$ and 
$\S{A}(p_k) = a \neq x_{n+1} = \S{A}(p_{k+1})$. Then from the induction hypothesis follows that $\Pi(x) = p_1 \cdots p_{k+1}$ is a prime factorization of $x$.
\end{enumerate}

\medskip

It remains to show that a prime factorization is unique. Suppose that $\Pi(x) = p_1 \cdots p_{k}$ and $\Pi'(x) = q_1 \cdots q_l$ are different prime factorizations of $x$. Suppose that the prime factors are of the form $p_r = a_r^{m_r}$ and $q_s = b_s^{n_s}$ for all $r \in [k]$ and $s \in [l]$, where $a_r, b_s \in \S{A}$ and $m_r, n_s \in \N$. The partition-condition of a prime factorization yields
\[
x = a_1^{m_1} \cdots a_k^{m_k} = b_1^{n_1} \cdots b_l^{n_l}.
\]
Since $\Pi(x)$ and $\Pi'(x)$ are different, we can find a smallest index $i \in [k] \cap [l]$ such that $p_i \neq q_i$. 
From $p_i \neq q_i$ follows that $a_i^{m_i} \neq b_i^{n_i}$. Since $i$ is the smallest index for which the prime factors of both prime factorizations differ, we have
\[
u = a_1^{m_1} \cdots a_{i-1}^{m_{i-1}} = b_1^{n_1} \cdots b_{i-1}^{n_{i-1}}.
\]
There are words $v, w \in \S{A}^*$ such that $x = u a_i v = u b_i w = x$. This shows that $a_i = b_i$. Hence, from $a_i^{m_i} \neq b_i^{n_i}$ and $a_i = b_i$ follows that $m_i \neq n_i$. We distinguish between two cases: (i) $m_i < n_i$ and (ii) $m_i > n_i$. We assume that $m_i < n_i$. Suppose that
\[
u = a_1^{m_1} \cdots a_{i-1}^{m_{i-1}}a_i^{m_i} = b_1^{n_1} \cdots b_{i-1}^{n_{i-1}}b_i^{m_i}.
\]
From $m_i < n_i$ follows $\abs{u} < \abs{x}$. This implies that $i < k$, that is $p_{i+1}$ is an element of $\Pi(x)$. Thus, there are words $v, w \in \S{A}^*$ such that $x = u a_{i+1} v = u b_i w$. We obtain 
\[
\S{A}(p_{i+1}) = a_{i+1} = b_i = a_i = \S{A}(p_i). 
\]
This violates the maximality-condition of a prime factorization and therefore contradicts our assumption that $m_i < n_i$. In a similar way, we can derive a contradiction for the second case $m_i > n_i$. Combining the results of both cases gives $m_i = n_i$, which contradicts our assumption that both prime factorizations are different. Thus a prime factorization is uniquely determined. This completes the proof. 
\end{proof}

\medskip

Let $x$ and $x'$ be words over $\S{A}$ whose prime factorizations $\Pi(x) = p_1 \cdots p_k$ and $\Pi(y) = p'_1 \cdots p'_k$, resp., have the same length $k$. We say $x$ is an \emph{expansion} of $x'$, written as $x \succ x'$, if $\S{A}(p_t) = \S{A}(p'_t)$ and $\abs{p_t} \geq \abs{p'_t}$ for all $t \in [k]$. If $x$ is an expansion of $x'$, then $x'$ is called a \emph{compression} of $x$, denoted by $x' \prec x$. By definition, every word is an expansion (compression) of itself. The set 
\[
\S{C}(x) = \cbrace{x' \in \S{A}^* \,:\, x' \prec x}
\]
is the set of all compressions of $x$. Suppose that $x_0, x_1, \ldots, x_k$ are $k+1$ time series. Occasionally, we write $x_0 \succ x_1, \ldots, x_k$ for $x_0 \succ x_1, \ldots, x_0 \succ x_k$ and similarly $x_0 \prec x_1, \ldots, x_k$ for $x_0 \prec x_1, \ldots, x_0 \prec x_k$.

An \emph{irreducible word} is a word $x = x_1 \cdots x_n$ over $\S{A}$ with prime factorization $\Pi(x) = x_1 \cdots x_n$. From $\Pi(\varepsilon) = \varepsilon$ follows that the empty word $\varepsilon$ is irreducible. A \emph{condensed form} of $x$ a word over $\S{A}$ is an irreducible word $x^*$ such that $x \succ x^*$. The condensed form of $\varepsilon$ is $\varepsilon$. We show that every word has a uniquely determined condensed form.

\begin{proposition}\label{prop:existence-of-condensed-form}
Every word has a unique condensed form. 
\end{proposition}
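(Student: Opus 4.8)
The plan is to read everything off the unique prime factorization guaranteed by the preceding proposition. Since the empty word is already declared irreducible and its own condensed form, I would restrict attention to a non-empty word $x \in \S{A}^+$ and write its unique prime factorization as $\Pi(x) = p_1 \cdots p_k$ with prime factors $p_t = a_t^{m_t}$, where $a_t \in \S{A}$ and $m_t \in \N$. The maximality-condition of a prime factorization then reads $\cbrace{a_t} = \S{A}(p_t) \neq \S{A}(p_{t+1}) = \cbrace{a_{t+1}}$, so consecutive letters satisfy $a_t \neq a_{t+1}$. These inherited inequalities are the engine of the whole argument.

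For existence, I would set $x^* = a_1 a_2 \cdots a_k$ and check the two requirements in the definition of a condensed form. Irreducibility amounts to verifying that $\Pi(x^*) = a_1 \cdots a_k$, i.e.\ that each single letter $a_t$ is already a maximal prime factor; this is exactly the statement $a_t \neq a_{t+1}$ obtained above, so $x^*$ is irreducible. To see $x \succ x^*$, I would observe that $\Pi(x)$ and $\Pi(x^*)$ both have length $k$, and that for every $t \in [k]$ we have $\S{A}(p_t) = \cbrace{a_t} = \S{A}(a_t)$ and $\abs{p_t} = m_t \geq 1 = \abs{a_t}$, which are precisely the conditions defining an expansion. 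Hence $x^*$ is a condensed form of $x$.

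For uniqueness, I would argue that any condensed form is forced to equal this $x^*$. Let $y^*$ be an arbitrary irreducible word with $x \succ y^*$. By the definition of expansion the prime factorizations $\Pi(x)$ and $\Pi(y^*)$ share the same length $k$; since $y^*$ is irreducible its prime factorization is $y^*$ itself, so $y^*$ is a word of $k$ letters $y^*_1 \cdots y^*_k$ with $\Pi(y^*) = y^*_1 \cdots y^*_k$. The expansion condition gives $\S{A}(y^*_t) = \S{A}(p_t) = \cbrace{a_t}$ for every $t$, and because $y^*_t$ is a single letter this forces $y^*_t = a_t$. Therefore $y^* = a_1 \cdots a_k = x^*$, which establishes uniqueness.

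The only genuinely delicate point is the irreducibility check in the existence step: one must confirm that collapsing each prime block $a_t^{m_t}$ to the single letter $a_t$ does not accidentally create a new adjacency of equal letters, which would merge two blocks and destroy maximality. This is handled entirely by the maximality-condition $a_t \neq a_{t+1}$ inherited from $\Pi(x)$, and the uniqueness of the prime factorization from the previous proposition is what guarantees that the letters $a_t$, and hence $x^*$, are canonically determined rather than dependent on any choice.
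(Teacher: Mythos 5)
Your proof is correct and follows essentially the same route as the paper: construct $x^*$ by collapsing each prime factor $p_t = a_t^{m_t}$ of the unique prime factorization to its letter $a_t$, inherit the maximality-condition to get irreducibility, and derive uniqueness directly from the definition of expansion applied to an arbitrary irreducible compression. Your write-up is somewhat more explicit about the irreducibility check, but there is no substantive difference in approach.
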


\begin{proof}
It is sufficient to show existence and uniqueness of a condensed form of non-empty words. Let $x$ be a word over $\S{A}$ with prime factorization $\Pi(x) = p_1 \cdots p_k$. Let $\S{A}(p_t) = \cbrace{a_t}$ for all $t \in [k]$. We define the word $x^* = a_1 \cdots a_k$. Then $\Pi(x^*) = a_1 \cdots a_k$ is the prime factorization of $x^*$ that obviously satisfies the partition-condition and whose maximality-condition is inherited by the maximality-condition of the prime factorization $\Pi(x)$. This shows that $x^*$ is irreducible and a compression of $x$. Hence, $x^*$ is a condensed form of $x$. 

Suppose that $z = b_1 \cdots b_l$ is an irreducible word such that $z \prec x$. Then we have $k = l$, $\S{A}(p_t) = \S{A}(b_t)$, and $\abs{p_t} \geq \abs{b_t} = 1$ for all $t \in [k]$. Observe that $\cbrace{a_t} = \S{A}(p_t) = \S{A}(b_t) = \cbrace{b_t}$. We obtain $z = a_1 \cdots a_k$ showing the uniqueness of the condensed form of $x$. 
\QED\end{proof}

\medskip

The next result shows that expansions (compressions) are transitive. 
\begin{proposition}\label{prop:expansions-are-transitive}
Let $x, y, z \in \S{A}^*$. From $x \succ y$ and $y \succ z$ follows $x \succ z$. 
\end{proposition}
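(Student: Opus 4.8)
The plan is to unwind the definition of expansion, which is stated entirely component-wise on prime factorizations, and then to chain the two hypotheses using transitivity of equality (for the element-sets) and of the order $\geq$ (for the factor lengths). Since every word has a well-defined prime factorization, I may freely name the factorizations of $x$, $y$, and $z$.

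First I would write $\Pi(x) = p_1 \cdots p_k$, $\Pi(y) = q_1 \cdots q_m$, and $\Pi(z) = r_1 \cdots r_n$. The hypothesis $x \succ y$ forces the two factorizations to have equal length, so $k = m$, and it supplies $\S{A}(p_t) = \S{A}(q_t)$ together with $\abs{p_t} \geq \abs{q_t}$ for every $t \in [k]$. Likewise $y \succ z$ gives $m = n$ and, for every $t \in [m]$, the relations $\S{A}(q_t) = \S{A}(r_t)$ and $\abs{q_t} \geq \abs{r_t}$.

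Then I would combine these facts index by index. The length equalities yield $k = m = n$, so $\Pi(x)$ and $\Pi(z)$ have the same number of prime factors, which is the first requirement in the definition of $x \succ z$. For each $t \in [k]$, transitivity of equality gives $\S{A}(p_t) = \S{A}(q_t) = \S{A}(r_t)$, and transitivity of $\geq$ gives $\abs{p_t} \geq \abs{q_t} \geq \abs{r_t}$, hence $\abs{p_t} \geq \abs{r_t}$. These are exactly the remaining two clauses of the definition, so $x \succ z$ follows directly.

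There is no real obstacle here: the definition of expansion was deliberately phrased so that each of its three clauses is itself a transitive relation, and the only thing to verify is that the common index range is respected, which the equal-length clause secures. The empty-word case needs no separate treatment, since $\Pi(\varepsilon) = \varepsilon$ has length zero and the length-matching clause already subsumes it.
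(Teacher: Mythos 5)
Your proof is correct and follows essentially the same route as the paper's: both unwind the definition of expansion on the prime factorizations, observe that the three factorizations must have equal length, and then chain the per-factor conditions using transitivity of equality on the element-sets and of $\geq$ on the factor lengths. The paper merely writes the factors explicitly as powers $a_t^{\alpha_t}$, $a_t^{\beta_t}$, $a_t^{\gamma_t}$, which is a cosmetic difference.
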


\begin{proof}
Let $\Pi(x) = u_1 \cdots u_k$, $\Pi(y) = v_1 \cdots v_l$, and $\Pi(z) = w_1 \cdots w_m$ be the prime factorizations of $x$, $y$, and $z$, respectively. From $x \succ y$ and $y \succ z$ follows that $k = l = m$. Let $t \in [k]$. We have $u_t \succ v_t$ and $v_t \succ w_t$. From $u_t \succ v_t$ follows that there are elements $a_t \in \S{A}$ and $\alpha_t, \beta_t \in \N$ with $\alpha_t \geq \beta_t$ such that $u_t = a_t^{\alpha_t}$ and $v_t = a_t^{\beta_t}$. From $v_t \succ w_t$ follows that there is an element $\gamma_t \in \N$ with $\beta_t \geq \gamma_t$ such that $w_t = a_t^{\gamma_t}$. Since $\alpha_t \geq \beta_t$ and $\beta_t \geq \gamma_t$, we obtain $u_t \succ w_t$. We have chosen $t \in [k]$ arbitrarily. Hence, we find that $x \succ z$. This proves the assertion.
\QED\end{proof}

\medskip

Expansions have been introduced as expansions on the prime factors. Lemma \ref{lemma:expansions} states that expansions are obtained by replicating a subset of elements of a given word.

\begin{lemma}\label{lemma:expansions}
Let $x = x_1 \cdots x_n$ and $y = y_1 \cdots y_m$ be words over $\S{A}$. Then the following statements are equivalent:
\begin{enumerate}
\itemsep0em
\item
$x$ is an expansion of $y$.
\item 
There are $\alpha_1, \ldots, \alpha_m \in \N$ such that $x = y_1^{\alpha_1} \cdots y_m^{\alpha_m}$.
\end{enumerate}
\end{lemma}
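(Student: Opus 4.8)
The plan is to prove the two implications by carefully relating the element-wise writing $y = y_1 \cdots y_m$ to the prime factorization of $y$. The key observation throughout is that the grouping $y_1^{\alpha_1} \cdots y_m^{\alpha_m}$ in statement~(2) is in general \emph{not} a prime factorization: whenever consecutive letters coincide, $y_i = y_{i+1}$, the corresponding prime factors must be merged, and the prime factorization is recovered only after collapsing maximal runs of equal letters. So the first step in both directions is to fix $\Pi(y) = q_1 \cdots q_k$ with $q_t = a_t^{\beta_t}$ and $a_t \neq a_{t+1}$, and to record that this partitions the position set $[m]$ into consecutive blocks $I_1, \ldots, I_k$ with $\abs{I_t} = \beta_t$ and $y_i = a_t$ for every $i \in I_t$.

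For $(2) \Rightarrow (1)$, I would restrict the product $y_1^{\alpha_1} \cdots y_m^{\alpha_m}$ to each block $I_t$. Every factor in that block is a power of the single letter $a_t$, so the block contributes $a_t^{\gamma_t}$ with $\gamma_t := \sum_{i \in I_t} \alpha_i$; since each $\alpha_i \in \N$ and $\abs{I_t} = \beta_t$, this gives $\gamma_t \geq \beta_t$. Concatenating over the blocks yields $x = a_1^{\gamma_1} \cdots a_k^{\gamma_k}$, and because the letters still satisfy $a_t \neq a_{t+1}$ this expression already meets both the partition- and the maximality-conditions. By uniqueness of prime factorizations it therefore \emph{is} $\Pi(x)$, whence $\Pi(x)$ and $\Pi(y)$ have the same length $k$, satisfy $\S{A}(a_t^{\gamma_t}) = \cbrace{a_t} = \S{A}(a_t^{\beta_t})$, and $\abs{a_t^{\gamma_t}} = \gamma_t \geq \beta_t = \abs{a_t^{\beta_t}}$ for all $t$; this is precisely $x \succ y$.

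For $(1) \Rightarrow (2)$, I would start from $\Pi(x) = p_1 \cdots p_k$ and $\Pi(y) = q_1 \cdots q_k$ with $\S{A}(p_t) = \S{A}(q_t)$ and $\abs{p_t} \geq \abs{q_t}$. Since prime factors are powers of a single letter, $\S{A}(p_t) = \S{A}(q_t)$ lets me write $q_t = a_t^{\beta_t}$ and $p_t = a_t^{\gamma_t}$ with $\gamma_t \geq \beta_t$. The idea is then to distribute the surplus $\gamma_t - \beta_t$ among the $\beta_t$ copies of $a_t$ sitting in block $I_t$: for instance set $\alpha_i = 1$ for all but one position of the block and load the remaining $\gamma_t - \beta_t + 1$ copies onto that single position, so that $\sum_{i \in I_t} \alpha_i = \gamma_t$ with every $\alpha_i \in \N$. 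Restricted to block $t$ the product $y_1^{\alpha_1} \cdots y_m^{\alpha_m}$ then equals $a_t^{\gamma_t} = p_t$, and concatenating over blocks recovers $p_1 \cdots p_k = x$.

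The main obstacle is purely bookkeeping: one must never conflate the $m$ element-wise positions of $y$ with the $k$ prime factors, and in the direction $(1) \Rightarrow (2)$ the surplus distribution works only because each block contains at least one position, so every $\alpha_i$ can be kept positive. Once the block partition $I_1, \ldots, I_k$ is in place, both directions reduce to summing exponents within blocks, and the earlier proposition on uniqueness of prime factorizations is exactly what licenses reading off $\Pi(x)$ from the collapsed form in $(2) \Rightarrow (1)$.
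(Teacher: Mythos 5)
Your proposal is correct and follows essentially the same route as the paper: both directions work with the prime factorization of $y$, partition the positions $[m]$ into blocks corresponding to the prime factors, and in the direction $(1)\Rightarrow(2)$ load the surplus exponent onto a single position of each block while setting the remaining $\alpha_i$ to $1$ (indeed, your exponent $\gamma_t-\beta_t+1$ fixes a small sign typo in the paper's $\nu_t$). Your explicit appeal to uniqueness of prime factorizations to identify the collapsed form of $x$ in $(2)\Rightarrow(1)$ is the same step the paper performs implicitly.
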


\begin{proof} 
Suppose that $\Pi(x) = p_1 \cdots p_k$ and $\Pi(y) = q_1 \cdots q_l$ are the prime factorization of $x$ and $y$, respectively. 

\medskip

\noindent
$\Rightarrow$: We assume that $x$ is an expansion of $y$. Then from $x \succ y$ follows that $k = l$, $\S{A}(p_t) = \S{A}(q_t)$, and $\abs{p_t} \geq \abs{q_t}$ for all $t \in [k]$. We arbitrarily pick an element $t \in [k]$. There are elements $a_t \in \S{A}$ and $\beta_t, \gamma_t \in \N$ such that $p_t = a_t^{\beta_t}$, $q_t = a_t^{\gamma_t}$ and $\beta_t \geq \gamma_t$. In addition, there is an index $i_t \in [m]$ such that 
\begin{align*}
q_t = y_{i_t} y_{i_t+1}\cdots y_{i_t+\gamma_t-1} = a_t^{\gamma_t}.
\end{align*}
Let $\nu_t = \beta_t - \gamma_t -1$. Then $\nu_t \geq 0$ and we have
\[
p_t = y_{i_t}^{\nu_t} y_{i_t+1}^1 \cdots y_{i_t+\gamma_t-1}^{1} = a_t^{\beta_t}.
\]
We set $\alpha_{i_t} = \nu_t$ and $\alpha_{i_t+1} = \cdots \alpha_{i_t+\gamma_t-1} = 1$. Concatenating all prime factors $p_t$ of $x$ yields the assertion. 

\medskip

\noindent
$\Leftarrow$: We assume that there are integers $\alpha_1, \ldots, \alpha_m \in \N$ such that $x = y_1^{\alpha_1} \cdots y_m^{\alpha_m}$. Suppose that $\S{A}(q_t) = \cbrace{a_t}$ for all $t \in [l]$. Then there are integers $i_0, \ldots, i_l \in \N$ such that $0 = i_0 < i_1 < \cdots < i_l = m$ and 
\[
a_t = y_{i_{t-1}+1} = \cdots = y_{i_t}
\] 
for all $t \in [l]$. We set $\beta_t = i_t - (i_{t-1}+1)$ and $\gamma_t = \alpha_{i_{t-1}+1} + \cdots + \alpha_{i_t}$ for all $t \in [l]$. From $1 \leq \alpha_i$ for all $i \in [m]$ follows $\beta_t \leq \gamma_t$ for all $t \in [l]$. Hence, we have 
\begin{align*}
y = a_1^{\beta_1} \cdots a_l^{\beta_l} \quad \text{ and } \quad 
x = a_1^{\gamma_1} \cdots a_l^{\gamma_l}.
\end{align*}
Hence, $k = l$, $\S{A}(p_t) = \S{A}(q_t)$, and $\abs{p_t} = \gamma_t \geq \beta_t=\abs{q_t}$ for all $t \in [l]$. This shows $x \succ y$. 
\QED\end{proof}

\medskip

The next result shows that the set of compressions of an irreducible word is a singleton. 

\begin{proposition}
Let $x \in \S{A}^*$ be irreducible. Then $\S{C}(x) = \cbrace{x}$. 
\end{proposition}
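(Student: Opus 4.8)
The plan is to unfold the definitions of irreducibility and of compression and to observe that irreducibility forces every prime factor of a compression to collapse to length one. First I would dispose of the trivial case $x = \varepsilon$, where $\S{C}(\varepsilon) = \cbrace{\varepsilon}$ holds directly, since the condensed form of $\varepsilon$ is $\varepsilon$ and the empty word is irreducible. So I would then assume $x$ is a non-empty irreducible word; by definition of irreducibility its prime factorization is $\Pi(x) = x_1 \cdots x_n$, which means that each prime factor $p_t = x_t$ has length $\abs{p_t} = 1$.

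Since every word is a compression of itself, the inclusion $\cbrace{x} \subseteq \S{C}(x)$ is immediate, so the real work lies in the reverse inclusion. I would take an arbitrary $x' \in \S{C}(x)$, that is $x' \prec x$, equivalently $x \succ x'$. By the definition of expansion, the prime factorizations $\Pi(x) = p_1 \cdots p_n$ and $\Pi(x') = p'_1 \cdots p'_n$ have the same length $n$, and for every $t \in [n]$ one has $\S{A}(p_t) = \S{A}(p'_t)$ and $\abs{p_t} \geq \abs{p'_t}$.

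Next I would invoke irreducibility: $\abs{p_t} = 1$ for all $t$. Since a prime factor is by definition a non-empty word, $\abs{p'_t} \geq 1$, and combining this with $\abs{p_t} \geq \abs{p'_t}$ pins down $\abs{p'_t} = 1$. A prime word of length one over $\S{A}$ is completely determined by its single element, and the condition $\S{A}(p_t) = \S{A}(p'_t)$ forces that element to agree, so $p'_t = p_t$ for every $t$. Concatenating over $t$ yields $x' = p'_1 \cdots p'_n = p_1 \cdots p_n = x$, giving $\S{C}(x) \subseteq \cbrace{x}$; together with the reflexive inclusion this establishes $\S{C}(x) = \cbrace{x}$.

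The argument is essentially a direct reading of the definitions, so I do not expect a serious obstacle. The one point I would take care not to skip is that prime factors are non-empty words, hence of length at least one: this is exactly what keeps $\abs{p'_t}$ at $1$ rather than allowing it to drop to $0$, and it is the step that makes the length inequality $\abs{p_t} \geq \abs{p'_t}$ collapse to an equality.
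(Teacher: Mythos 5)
Your proof is correct, but it takes a more direct route than the paper's. You unfold the definition of $x \succ x'$ on prime factorizations: irreducibility forces $\abs{p_t}=1$ for every prime factor of $x$, non-emptiness of prime words gives $\abs{p'_t}\geq 1$, and the inequality $\abs{p_t}\geq\abs{p'_t}$ together with $\S{A}(p_t)=\S{A}(p'_t)$ then pins each $p'_t$ down to $p_t$ exactly, so $x'=x$ by concatenation. The paper instead passes through the condensed form $y^*$ of the compression $y$, invokes transitivity of expansions (Prop.~\ref{prop:expansions-are-transitive}) to get $y^*\prec x$, applies Lemma~\ref{lemma:expansions} to write $x=a_1^{\alpha_1}\cdots a_k^{\alpha_k}$, uses irreducibility to force all $\alpha_t=1$ so that $x=y^*$, and finally concludes $x=y$ from $x\succ y$ and $y\succ x$. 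Your argument buys simplicity: it needs no auxiliary results and, in particular, avoids the paper's final antisymmetry step ($x\succ y$ and $y\succ x$ imply $x=y$), which the paper asserts without explicit justification, whereas you get equality of the words factor by factor. The paper's detour has the minor virtue of reusing machinery (condensed forms, transitivity, Lemma~\ref{lemma:expansions}) already established for other purposes, but for this particular statement your direct reading of the definitions is cleaner and fully rigorous; your care about prime factors being non-empty is exactly the right point to make explicit.
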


\begin{proof}
By definition, we have $x \in \S{C}(x)$. Suppose there is a word $y \in \S{C}(x)$ with prime factorization $\Pi(y) = p_1 \cdots p_k$. Let $y^* = a_1 \cdots a_k$ be the condensed form of $y$, where $a_t \in \S{A}(p_t)$ for all $t \in [k]$. From $y \prec x$ and $y^* \prec y$ follows $y^* \prec x$ by Prop.~\ref{prop:expansions-are-transitive}. According to Lemma \ref{lemma:expansions} there are positive integers $\alpha_1, \ldots, \alpha_k \in \N$ such that $x = a_1^{\alpha_1} \cdots a_k^{\alpha_k}$. Since $x$ is irreducible all $\alpha_t$ have value one giving $x = a_1 \cdots a_k$. Hence, we have $x = y^*$. In addition, from $x \succ y$ and $y \succ x$ follows $x = y$. This shows the assertion.
\QED\end{proof}

\medskip

Proposition \ref{prop:compression-is-expansion-of-condensed-form} states that every compression of a word is an expansion of its condensed form. 

\begin{proposition}\label{prop:compression-is-expansion-of-condensed-form}
Let $x$ be a word over $\S{A}$ with condensed form $x^*$. Suppose that $y\in \S{A}^*$ such that $y \prec x$. Then $x^* \prec y$. 
\end{proposition}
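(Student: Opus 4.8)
The plan is to read off the prime factorizations of all three words and verify the expansion conditions directly from their definition. First I would write $\Pi(x) = p_1 \cdots p_k$ with $p_t = a_t^{m_t}$ for letters $a_t \in \S{A}$ and exponents $m_t \in \N$. By the construction in the proof of Proposition \ref{prop:existence-of-condensed-form}, the condensed form is then $x^* = a_1 \cdots a_k$, and since $x^*$ is irreducible, this expression is already its prime factorization $\Pi(x^*) = a_1 \cdots a_k$, with each prime factor $a_t = a_t^1$ of length one.

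Next I would exploit the hypothesis $y \prec x$, i.e. $x \succ y$. Unfolding the definition of expansion, the prime factorization $\Pi(y) = q_1 \cdots q_l$ must have the same length $l = k$ as $\Pi(x)$, and it must satisfy $\S{A}(q_t) = \S{A}(p_t) = \cbrace{a_t}$ together with $\abs{q_t} \leq \abs{p_t}$ for every $t \in [k]$. Consequently each $q_t$ is a prime word over the same letter, say $q_t = a_t^{n_t}$ with $n_t \in \N$ and $n_t \leq m_t$. The only point that needs care here is that expansion is defined so as to force equal factorization length and matching alphabets position by position; once that is invoked, the shape of $\Pi(y)$ is completely pinned down.

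Finally I would verify $y \succ x^*$ by checking the two defining conditions against $\Pi(y) = q_1 \cdots q_k$ and $\Pi(x^*) = a_1 \cdots a_k$. Both factorizations have length $k$; for each $t$ we have $\S{A}(q_t) = \cbrace{a_t} = \S{A}(a_t)$, and $\abs{q_t} = n_t \geq 1 = \abs{a_t}$ because every prime word has a positive exponent. Hence $y \succ x^*$, which is exactly $x^* \prec y$, the claim.

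I expect no serious obstacle, since the whole argument is a direct comparison of prime factorizations; the one thing to be careful about is to invoke the definition of expansion correctly, so as to conclude that $\Pi(y)$ inherits both the length and the letter pattern of $\Pi(x)$. As an even shorter route I could instead observe that collapsing each prime factor of $y$ yields $a_1 \cdots a_k = x^*$, so that $x^*$ is the unique condensed form of $y$ by Proposition \ref{prop:existence-of-condensed-form}, whence $x^* \prec y$ follows immediately from the definition of condensed form; this avoids re-verifying the expansion inequalities by hand.
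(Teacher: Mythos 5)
Your proposal is correct and follows essentially the same route as the paper: from $y \prec x$ you read off that $\Pi(y)$ has the same length and letter pattern as $\Pi(x)$, so $y$'s condensed form is $a_1 \cdots a_k = x^*$, giving $x^* \prec y$. Your ``shorter route'' at the end is in fact exactly the paper's argument, and your longer version merely spells out the expansion conditions that the paper leaves implicit.
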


\begin{proof}
Let $x \in \S{A}^*$ be a word with prime factorization $\Pi(x) = p_1 \cdots p_k$. Suppose that $y \in \S{C}(x)$ with condensed form 
$\Pi(y) = q_1 \cdots q_l$. From $y \prec x$ follows that $k = l$ and $q_t \prec p_t$ for all $t \in [k]$. This implies that $x$ and $y$ have the same condensed form $x^*$. Hence, we have $x^* \prec y$, which completes the proof.
\QED\end{proof}

\medskip

Suppose that $\S{C}(x)$ is the set of compressions of a word $x$. We show that the shortest word in $\S{C}(x)$ is the condensed form of $x$. 

\begin{proposition}\label{prop:minimum-length-of-condensed-form}
Let $x$ be a word over $\S{A}$ with condensed form $x^*$. Then $\abs{x^*} < \abs{y}$ for all $y \in \S{C}(x) \setminus \cbrace{x^*}$.
\end{proposition}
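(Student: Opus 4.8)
The plan is to establish that the condensed form is strictly the shortest compression via a prime-factor count: the \emph{number} of prime factors in a prime factorization is an invariant shared by a word and all of its compressions, whereas the \emph{length} is the sum of the lengths of the prime factors, each of which is at least one. The minimal possible length is therefore attained exactly when every prime factor is a single letter, which is precisely the irreducible (condensed) case.

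Concretely, I would start by writing $\Pi(x) = p_1 \cdots p_k$, so that the condensed form is $x^* = a_1 \cdots a_k$ with $\S{A}(p_t) = \cbrace{a_t}$, giving $\abs{x^*} = k$. Now fix any $y \in \S{C}(x)$, i.e.\ $y \prec x$. By Proposition~\ref{prop:compression-is-expansion-of-condensed-form} we have $x^* \prec y$, so $y$ is an expansion of $x^*$; by the definition of expansion the prime factorizations of $x^*$ and $y$ have the same length, hence $\Pi(y) = q_1 \cdots q_k$ consists of exactly $k$ prime factors. Since every prime factor is non-empty, $\abs{q_t} \geq 1$ for all $t \in [k]$, and therefore
\[
\abs{y} = \sum_{t=1}^k \abs{q_t} \geq k = \abs{x^*}.
\]

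To finish I would analyze the equality case: $\abs{y} = k$ forces $\abs{q_t} = 1$ for every $t$, so each prime factor $q_t$ is a single letter and $y$ is irreducible. An irreducible compression of $x$ is a condensed form of $x$, so by the uniqueness established in Proposition~\ref{prop:existence-of-condensed-form} we obtain $y = x^*$. Contrapositively, every $y \in \S{C}(x) \setminus \cbrace{x^*}$ has at least one prime factor of length at least $2$, whence $\abs{y} > k = \abs{x^*}$, which is the assertion.

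The only real subtlety—hardly an obstacle—is justifying that all compressions of $x$ carry the same number $k$ of prime factors; this is exactly what the definition of expansion (equal-length prime factorizations) together with Proposition~\ref{prop:compression-is-expansion-of-condensed-form} supplies, so no new machinery is needed and the remainder is the elementary length count above.
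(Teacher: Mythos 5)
Your argument is correct and follows essentially the same route as the paper's proof: both reduce to Proposition~\ref{prop:compression-is-expansion-of-condensed-form} to get $x^* \prec y$, then count lengths of the $k$ prime factors of $y$ and analyze the equality case. The only cosmetic difference is that the paper invokes Lemma~\ref{lemma:expansions} to write $y = a_1^{\alpha_1}\cdots a_k^{\alpha_k}$ explicitly, whereas you argue directly from the definition of expansion; both settle equality by forcing all factors to be singletons.
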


\begin{proof}
Let $x^* = a_1 \cdots a_k$ and let $y \in \S{C}(x)$. From Prop.~\ref{prop:compression-is-expansion-of-condensed-form} follows that $x^* \prec y$. Lemma \ref{lemma:expansions} gives positive integers $\alpha_1, \ldots \alpha_k \in \N$ such that $y = a_1^{\alpha_1} \cdots a_k^{\alpha_k}$. This shows that $\abs{y} = \alpha_1 + \cdots + \alpha_k \geq k = \abs{x^*}$. Suppose that $\abs{y} = k$. In this case, we have $\alpha_1 = \cdots = \alpha_k = 1$ and therefore $y = x^*$. This shows that $\abs{x^*} < \abs{y}$ for all $y \in \S{C}(x)\setminus \cbrace{x^*}$.
\QED\end{proof}

Suppose that $x, y, z$ are words over $\S{A}^*$. We say, $z$ is a \emph{common compression} of $x$ and $y$ if $z \prec x, y$. A \emph{compression-expansion} (co-ex) function is a function $f: \S{A}^* \rightarrow \S{A}^*$ such that there is a common compression of $x$ and $f(x)$. Proposition \ref{prop:composition-of-co-ex-functions} states that co-ex functions are closed under compositions. 

\begin{proposition}\label{prop:composition-of-co-ex-functions}
The composition of two co-ex functions is a co-ex function.
\end{proposition}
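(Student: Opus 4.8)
The plan is to reduce the statement about common compressions to one about condensed forms, where it becomes transparent. The guiding observation is that two words admit a common compression precisely when they share the same condensed form. I would first record the auxiliary fact that compression preserves condensed forms: if $z \prec x$, then $z$ and $x$ have the same condensed form. This follows from Prop~\ref{prop:compression-is-expansion-of-condensed-form}, which yields $x^* \prec z$; since $x^*$ is irreducible and compresses $z$, uniqueness of the condensed form (Prop~\ref{prop:existence-of-condensed-form}) forces the condensed form of $z$ to equal $x^*$.

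With this in hand, I would characterize co-ex functions: a map $f$ is co-ex if and only if $x$ and $f(x)$ have the same condensed form for every $x \in \S{A}^*$. For the forward direction, any common compression $z$ of $x$ and $f(x)$ satisfies $z \prec x$ and $z \prec f(x)$, so the auxiliary fact gives that $x$, $z$, and $f(x)$ all share the condensed form $x^*$. For the converse, if $x$ and $f(x)$ have the same condensed form $x^*$, then $x^*$ is itself a common compression, since $x^* \prec x$ by definition and $x^* \prec f(x)$ because $x^*$ is also the condensed form of $f(x)$.

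Given the characterization, the composition result is immediate. Let $f$ and $g$ be co-ex and fix $x \in \S{A}^*$. Applying the characterization to $g$ shows that $g(x)$ has condensed form $x^*$, and applying it to $f$ at the argument $g(x)$ shows that $f(g(x))$ has condensed form $g(x)^* = x^*$. Hence $(f \circ g)(x) = f(g(x))$ has condensed form $x^*$, so $x^*$ is a common compression of $x$ and $(f \circ g)(x)$, and therefore $f \circ g$ is co-ex.

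The main obstacle is not the composition step, which collapses to a short chain of equalities, but isolating the right invariant. The key realization is that \emph{admits a common compression} is equivalent to \emph{has the same condensed form}, which replaces an existential quantifier over shared compressions by an equality of canonical representatives. Once condensed forms are recognized as the quantity preserved under both compression and co-ex maps, closure under composition is automatic. I would be careful to state the auxiliary preservation fact cleanly, as it is the load-bearing step on which the rest of the argument rests.
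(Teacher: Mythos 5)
Your proof is correct and takes essentially the same route as the paper: both arguments rest on the uniqueness of the condensed form (Prop.~\ref{prop:existence-of-condensed-form}) combined with Prop.~\ref{prop:compression-is-expansion-of-condensed-form} and transitivity of compression (Prop.~\ref{prop:expansions-are-transitive}) to conclude that the condensed form of $x$ serves as the required common compression of $x$ and the composite image. The only difference is organizational: you extract the load-bearing observation as an explicit characterization (a map is co-ex if and only if it preserves condensed forms), whereas the paper carries out the same reasoning inline by passing to the condensed forms of the two given common compressions and identifying them via uniqueness.
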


\begin{proof}
To prove the assertion, we repeatedly apply transitivity of expansions (Prop.~\ref{prop:expansions-are-transitive}). 
Let $x \in \S{A}^*$ and let $f = g \circ h$ be the composition of two co-ex functions $g, h: \S{A}^* \rightarrow \S{A}^*$. Then there are words $z_h$ and $z_g$ such that $z_h \prec x, h(x)$ and $z_g \prec h(x), g(h(x))$. Suppose that $z_h^*$ and $z_g^*$ are the condensed forms of $z_h$ and $z_g$, respectively. From 
\begin{align*}
z_h^* \prec z_h \prec h(x)
\quad \text{ and } \quad
z_g^* \prec z_g \prec h(x)
\end{align*}
follows $z_h^* \prec h(x)$ and $z_g^* \prec h(x)$ by the transitivity of expansions. According to Prop.~\ref{prop:existence-of-condensed-form}, the condensed form of a word is unique. Hence, we have $z_h^* = z_g^*$. We set $x^* = z_h^* = z_g^*$. Then from 
\begin{align*}
x^* = z_h^* \prec z_h \prec x
\quad \text{ and } \quad
x^* = z_g^*\prec z_g \prec g(h(x)) = f(x)
\end{align*}
follows $x^* \prec x, f(x)$ by the transitivity of expansions. This shows that $x^*$ is a common compression of $x$ and $f(x)$. Since $x$ was chosen arbitrarily, the assertion follows. 
\QED\end{proof}

\subsection{Warping Walks}\label{subsec:warping-walks}

The standard definition of the dtw-distance is inconvenient for our purposes. The recursive definition of warping paths is easy to understand and well-suited for deriving algorithmic solutions, but often less suited for a theoretical analysis. In addition, warping paths are not closed under compositions. As a more convenient definition, we introduce warping walks. Warping walks generalize warping paths by slightly relaxing the step condition. Using warping functions and matrices, this section shows that warping walks do not affect the dtw-distance. The next section shows that warping walks are closed under compositions.

\medskip

\commentout{
\begin{notation}
Let $\B = \cbrace{0,1}$ and let $I_n \in \R^{n \times n}$ be the identity matrix. 
Let $A = (a_{ij}) \in \R^{m \times n}$ be a matrix. Then we write $A_i = (a_{i1}, \ldots, a_{in})$ for the $i$-th row and $A^j = (a_{1j}, \ldots, a_{mj})$ for the $j$-th column of $A$.
\qed
\end{notation}
}

\begin{notation}
Let $\B = \cbrace{0,1}$ and let $I_n \in \R^{n \times n}$ be the identity matrix. 
\qed
\end{notation}

\medskip

Let $\ell, n \in \N$. A function $\phi:[\ell] \rightarrow [n]$ is a \emph{warping function} if it is surjective and monotonically increasing. Thus, for a warping function we always have $\ell \geq n$. The \emph{warping matrix} associated with warping function $\phi$ is a matrix of the form
\[
\Phi = \begin{pmatrix}
e_{\phi(1)}\\
\vdots\\
e_{\phi(\ell)}
\end{pmatrix} \in \B^{\ell \times n},
\]
where $e_i$ is the $i$-th standard basis vector of $\R^n$, denoted as a row vector, with $1$ in the $i$-th position and $0$ in every other position. The next result shows the effect of multiplying a time series with a warping matrix. 

\begin{lemma}\label{lemma:expansion-by-phi}
Let $\phi:[\ell] \rightarrow [n]$ be a warping function with associated warping matrix $\Phi$. Suppose that $x = (x_1, \ldots, x_n)s \in \S{T}$ is a time series of length $\abs{x} = n$. Then there are elements $\alpha_1, \ldots, \alpha_n \in \N$ such that 
\[
\Phi x = (\underbrace{x_1, \ldots, x_1}_{\alpha_1-\text{times}}, \underbrace{x_2, \ldots, x_2}_{\alpha_2-\text{times}},\ldots \underbrace{x_n, \ldots, x_n}_{\alpha_n-\text{times}})^\intercal.
\] 
\end{lemma}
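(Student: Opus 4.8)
The plan is to compute the product $\Phi x$ entry-by-entry and then read off the block structure directly from the defining properties of the warping function $\phi$. First I would use the definition of the warping matrix: its $k$-th row is the standard basis row vector $e_{\phi(k)}$. Hence the $k$-th component of $\Phi x$ equals $e_{\phi(k)}\,x = x_{\phi(k)}$, so that
\[
\Phi x = \args{x_{\phi(1)}, x_{\phi(2)}, \ldots, x_{\phi(\ell)}}^\intercal.
\]
It then suffices to show that this sequence has the claimed form of consecutive replications of the elements of $x$ in their natural order.

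Next, for each $i \in [n]$ I would set $\alpha_i = \abs{\cbrace{k \in [\ell] \,:\, \phi(k) = i}}$, the number of indices mapped to $i$. Since $\phi$ is surjective, every value $i \in [n]$ is attained, so each $\alpha_i \geq 1$ and therefore $\alpha_i \in \N$. Moreover $\alpha_1 + \cdots + \alpha_n = \ell$, because the preimages $\phi^{-1}(1), \ldots, \phi^{-1}(n)$ partition the domain $[\ell]$.

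The key step is to locate these preimages using monotonicity. Because $\phi$ is monotonically increasing, I would argue that the indices mapped to a common value $i$ form a contiguous block: if $\phi(k) = \phi(k') = i$ with $k < k'$, then for every $k \leq k'' \leq k'$ we have $i = \phi(k) \leq \phi(k'') \leq \phi(k') = i$, forcing $\phi(k'') = i$. These blocks appear in increasing order of $i$ as $k$ runs from $1$ to $\ell$. Consequently the sequence $\args{x_{\phi(1)}, \ldots, x_{\phi(\ell)}}$ lists $x_1$ exactly $\alpha_1$ times, then $x_2$ exactly $\alpha_2$ times, and so on up to $x_n$ exactly $\alpha_n$ times. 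Substituting this into the component-wise expression for $\Phi x$ yields the asserted block form.

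The argument is essentially routine; the only point requiring genuine care is the contiguity of the preimages, which is precisely where the monotonicity hypothesis (rather than mere surjectivity) is used, and which guarantees that no value $x_i$ is split into separated blocks. I note that ``monotonically increasing'' here must be read as non-decreasing, consistent with the paper's convention $\ell \geq n$, since a strictly increasing surjection $[\ell] \to [n]$ would force $\ell = n$.
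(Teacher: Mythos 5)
Your proposal is correct and follows essentially the same route as the paper: both identify $\alpha_i$ as the size of the fiber $\phi^{-1}(i)$ and use surjectivity plus monotonicity to conclude that $\phi$ is constant on consecutive blocks $\cbrace{1,\ldots,\alpha_1}, \cbrace{\alpha_1+1,\ldots,\alpha_1+\alpha_2},\ldots$, so that the rows of $\Phi$ are repeated standard basis vectors and $\Phi x$ has the stated form. You merely spell out the contiguity of the preimages in more detail than the paper, which simply asserts the piecewise-constant form of $\phi$ from the two hypotheses.
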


\begin{proof}
Since $\phi$ is surjective and monotonic, we can find integers $\alpha_1, \ldots, \alpha_n \in \N$ such that 
\[
\phi(i) = \begin{cases}
1 & i \leq \alpha_1\\
2 & \alpha_1 < i \leq \alpha_1 + \alpha_2\\
\cdots & \cdots \\
n & \alpha_1 + \cdots + \alpha_{n-1}< i 
\end{cases}
\]
for all $i \in [\ell]$. Let $\Phi \in \B^{\ell \times n}$ be the warping matrix associated with $\phi$. Then the $n$ rows $\Phi_i$ of $\Phi$ are of the form
\begin{align*}
\Phi_1 &= \cdots = \Phi_{\alpha_1} = e_1\\
\Phi_{\alpha_1+1} &= \cdots = \Phi_{\alpha_2} = e_2\\
&\;\;\vdots \\
\Phi_{\alpha_{m-1}+1} &= \cdots = \Phi_{\alpha_m} = e_m,
\end{align*}
Obviously, the warping matrix $\Phi$ satisfies 
\[
\Phi x = (\underbrace{x_1, \ldots, x_1}_{\alpha_1-\text{times}}, \underbrace{x_2, \ldots, x_2}_{\alpha_2-\text{times}},\ldots \underbrace{x_n, \ldots, x_n}_{\alpha_n-\text{times}})^\intercal.
\] 
\QED\end{proof}

\medskip

A \emph{warping walk} is a pair $w = (\phi, \psi)$ consisting of warping functions $\phi:[\ell] \rightarrow [m]$ and $\psi:[\ell] \rightarrow [n]$ of the same domain $[l]$. The warping walk $w$ has \emph{order} $m \times n$ and \emph{length} $\ell$. By $\S{W}_{m,n}$ we denote the set of all warping walks of order $m \times n$ and of finite length.

In the classical terminology of dynamic time warping, a warping walk can be equivalently expressed by a sequence $w = (w_1, \ldots, w_{\ell})$ of $\ell$ points $w_l = (\phi(l), \psi(l)) \in [m] \times [n]$ such that the following conditions are satisfied:
\begin{enumerate}
\item $w_1 = (1,1)$ and $w_\ell = (m,n)$ \hfill (\emph{boundary condition})
\item $w_{l+1} - w_{l} \in \B \times \B$ for all $l \in [\ell-1]$ \hfill(\emph{weak step condition})
\end{enumerate}
The weak step condition relaxes the standard step condition of warping paths by additionally allowing zero-steps of the form $w_l - w_{l+1} = (0,0)$. Zero-steps duplicate points $w_l$ and thereby admit multiple correspondences between the same elements of the underlying time series. 

\begin{notation}
We identify warping walks $(\phi, \psi)$ with their associated warping matrices $(\Phi, \Psi)$ and interchangeably write $(\phi, \psi) \in \S{W}_{m,n}$ and $(\Phi, \Psi) \in \S{W}_{m,n}$.
\end{notation}

\medskip

A warping walk $p = (p_1, \ldots, p_{\ell})$ is a \emph{warping path} if $p_{l+1} \neq p_l$ for all $l \in [\ell-1]$. By $\S{P}_{m,n}$ we denote the subset of all warping paths of order $m \times n$. Thus, a warping path is a warping walk without consecutive duplicates. Equivalently, a warping path satisfies the boundary condition and the strict step condition
\begin{enumerate}
\item[$2'.$] $w_{l+1} - w_{l} \in \B \times \B \setminus \cbrace{0,0}$ for all $l \in [\ell-1]$ \hfill(\emph{strict step condition})
\end{enumerate}

Warping walks are words over the alphabet $\S{A} = \N \times \N$ and warping paths are irreducible words over $\S{A}$. For the sake of convenience, we regard $\S{W}_{m,n}$ and $\S{P}_{m,n}$ as subsets of $\S{A}^*$. The \emph{condensation map}
\[
c: \S{A}^* \rightarrow \S{A}^*, \quad w \mapsto w^*
\]
sends a word $w$ over $\S{A}$ to its condensed form $w^*$. 
\begin{lemma}\label{lemma:c(W)=P}
Let $\S{A} = \N \times \N$ and let $c: \S{A}^* \rightarrow \S{A}^*$ be the condensation map. Then $c(\S{W}_{m,n}) = \S{P}_{m,n}$ for all $m,n \in \N$. 
\end{lemma}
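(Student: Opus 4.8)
The plan is to establish the set equality $c(\S{W}_{m,n}) = \S{P}_{m,n}$ by proving both inclusions, using the characterization of warping walks by the weak step condition and of warping paths by the strict step condition, together with the fact that warping paths are precisely the irreducible words among the relevant sequences in $\S{A}^*$.

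For the inclusion $c(\S{W}_{m,n}) \subseteq \S{P}_{m,n}$, I would take an arbitrary warping walk $w = (w_1, \ldots, w_\ell) \in \S{W}_{m,n}$ and show that its condensed form $c(w)$ is a warping path of order $m \times n$. The condensation map collapses each maximal block of equal consecutive points to a single representative. Since the first point $w_1 = (1,1)$ and the last point $w_\ell = (m,n)$ are the representatives of the first and last blocks, $c(w)$ inherits the boundary condition and keeps all its points inside $[m] \times [n]$, so it has order $m \times n$. For the step condition, observe that a zero-step $w_{l+1} - w_l = (0,0)$ occurs precisely when $w_{l+1} = w_l$, that is, exactly at consecutive duplicates. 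If $w_i = \cdots = w_j$ is a maximal block, then $w_{j+1} \neq w_j$, and $w_{j+1} - w_j \in \B \times \B$ by the weak step condition, so the difference between the representatives of consecutive blocks lies in $\args{\B \times \B} \setminus \cbrace{(0,0)}$. Hence $c(w)$ satisfies the strict step condition and is therefore a warping path.

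For the reverse inclusion $\S{P}_{m,n} \subseteq c(\S{W}_{m,n})$, I would argue that every warping path is its own condensed form. Since the strict step condition implies the weak step condition, we have $\S{P}_{m,n} \subseteq \S{W}_{m,n}$. Moreover, warping paths are irreducible words over $\S{A}$, so the condensed form of a warping path $p$ equals $p$ itself by the uniqueness of the condensed form established in Prop.~\ref{prop:existence-of-condensed-form}. Consequently $p = c(p) \in c(\S{W}_{m,n})$, which yields the inclusion.

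The argument is mostly bookkeeping once the correspondence between zero-steps and consecutive duplicates is made explicit. The only point requiring a little care is verifying that collapsing a maximal block never creates a forbidden zero-step between the surviving representatives, which is exactly where the maximality of the block is used.
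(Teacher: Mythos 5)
Your proof is correct and follows essentially the same route as the paper: both inclusions are handled by observing that condensation preserves the boundary condition and turns the weak step condition into the strict one (your maximal-block argument spells out what the paper only asserts), and that warping paths are irreducible and hence fixed by $c$. The only cosmetic difference is in the inclusion $\S{P}_{m,n} \subseteq c(\S{W}_{m,n})$: the paper exhibits a preimage of $p$ by appending a duplicated last point to form a proper walk, whereas you simply note $p \in \S{W}_{m,n}$ and $c(p) = p$, which is a slight simplification of the same idea.
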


\begin{proof}
Let $m,n \in \N$. 

\medskip

\noindent
$\S{P}_{m,n}\subseteq c(\S{W}_{m,n})$: Let $p = (p_1, \ldots, p_\ell) \in \S{P}_{m,n}$ be a warping path. From the strict step condition follows that $p$ is irreducible. Consider the word $w = (p_1, \ldots, p_\ell, p_{\ell+1})$, where $p_\ell = p_{\ell+1}$. The word $w$ satisfies the boundary and weak step condition. Hence, $w$ is a warping walk with unique prime factorization $\Pi(w) = (p_1 \cdots p_\ell)$. This shows that $p$ is the unique condensed form of $w$. Hence, we have $\S{P}_{m,n}\subseteq c(\S{W}_{m,n})$.

\medskip

\noindent
$c(\S{W}_{m,n})\subseteq \S{P}_{m,n}$: A warping walk $w \in \S{W}_{m,n}$ satisfies the boundary and the weak step condition. As an irreducible word, the condensed form $w^* = c(w)$ satisfies the boundary and the strict step condition. Hence, $w^*$ is a warping path. This proves $c(\S{W}_{m,n})\subseteq \S{P}_{m,n}$.
\QED\end{proof}

\medskip

The \emph{dtw-distance} is a distance function on $\S{T}$ of the form
\[
\dtw: \S{T} \times \S{T} \rightarrow \R_{\geq 0}, \quad (x, y) \mapsto \min \cbrace{\norm{\Phi x - \Psi y} \,:\, (\Phi, \Psi) \in \S{P}_{\abs{x},\abs{y}}}
\]
From \cite{Schultz2018}, Prop.~A.2 follows that the dtw-distance coincides with the standard definition of the dtw-distance as presented in Section \ref{sec:results}. The next result expresses the dtw-distance in terms of warping walks.

\begin{proposition}\label{prop:dtw}
Let $x, y \in \S{T}$ be two time series. Then we have
\begin{align*}
\dtw(x, y) = \min \cbrace{\norm{\Phi x - \Psi y} \,:\, (\Phi, \Psi) \in \S{W}_{\abs{x},\abs{y}}}.
\end{align*}
\end{proposition}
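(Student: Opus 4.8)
The plan is to combine the inclusion $\S{P}_{m,n}\subseteq\S{W}_{m,n}$ with Lemma~\ref{lemma:c(W)=P}, which shows that condensing a warping walk produces a warping path. Write $m=\abs{x}$ and $n=\abs{y}$. I must show that the minimum of $\norm{\Phi x-\Psi y}$ over warping walks equals $\dtw(x,y)$, which by definition is this same minimum taken over warping paths.

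First I would treat the inequality ``$\leq$''. A warping path is exactly a warping walk without consecutive duplicates, so $\S{P}_{m,n}\subseteq\S{W}_{m,n}$. Minimizing a fixed objective over the larger set $\S{W}_{m,n}$ can only lower the optimum, hence $\min_{\S{W}_{m,n}}\norm{\Phi x-\Psi y}\leq\min_{\S{P}_{m,n}}\norm{\Phi x-\Psi y}=\dtw(x,y)$.

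For the reverse inequality I would take an arbitrary warping walk $(\Phi,\Psi)\in\S{W}_{m,n}$, write it as a point sequence $w=(w_1,\ldots,w_\ell)$ with $w_l=(\phi(l),\psi(l))$, and compare its cost to that of its condensed form $p=c(w)$. Because the $l$-th rows of $\Phi$ and $\Psi$ are the basis vectors $e_{\phi(l)}$ and $e_{\psi(l)}$, the cost decomposes as a sum over the points of the walk,
\[
\norm{\Phi x-\Psi y}^2=\sum_{l=1}^{\ell}\argsS{x_{\phi(l)}-y_{\psi(l)}}{^2}.
\]
Condensing $w$ collapses every maximal run of equal consecutive points to a single point; by Lemma~\ref{lemma:c(W)=P} the result $p$ is a genuine warping path in $\S{P}_{m,n}$, with associated matrices $(\Phi^*,\Psi^*)$. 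Its cost is the same sum but with exactly one summand retained per run, so $\norm{\Phi^* x-\Psi^* y}^2$ is obtained from $\norm{\Phi x-\Psi y}^2$ by deleting some of the summands, each of which is a square and hence non-negative. Therefore $\norm{\Phi^* x-\Psi^* y}\leq\norm{\Phi x-\Psi y}$.

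This produces, for every warping walk, a warping path of no greater cost, so $\dtw(x,y)=\min_{\S{P}_{m,n}}\norm{\Phi x-\Psi y}\leq\norm{\Phi x-\Psi y}$ for all $(\Phi,\Psi)\in\S{W}_{m,n}$; taking the minimum over $\S{W}_{m,n}$ gives $\dtw(x,y)\leq\min_{\S{W}_{m,n}}\norm{\Phi x-\Psi y}$. Together with the first inequality this establishes equality, and it also shows that the minimum over $\S{W}_{m,n}$ is attained, namely by any optimal warping path. The only point requiring care is the bookkeeping that the condensed form's cost is exactly the per-run representative sum of the walk's cost; everything else follows directly from the inclusion of paths in walks and from Lemma~\ref{lemma:c(W)=P}.
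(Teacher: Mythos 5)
Your proof is correct and follows essentially the same route as the paper: both arguments condense an arbitrary warping walk to a warping path via Lemma~\ref{lemma:c(W)=P} and observe that the cost can only decrease because the condensed path's cost sum omits non-negative summands corresponding to repeated points. Your write-up is somewhat more explicit about the trivial direction coming from the inclusion $\S{P}_{m,n}\subseteq\S{W}_{m,n}$, which the paper leaves implicit, but the substance is identical.
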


\begin{proof}
Let $w \in \S{W}_{m,n}$ be a warping walk. Then $p=c(w)$ is a warping path and a condensed form of $w$ by Lemma \ref{lemma:c(W)=P}. From Prop.~\ref{prop:minimum-length-of-condensed-form} follows that $\abs{p} \leq \abs{w}$. Thus, we obtain
\[
C_p(x, y) = \sum_{(i,j) \in p} (x_i-y_j)^2 \leq \sum_{(i,j) \in w} (x_i-y_j)^2 = C_w(x,y).
\] 
This implies
\[
\dtw(x, y) \leq \min \cbrace{\norm{\Phi x - \Psi y} \,:\, (\Phi, \Psi) \in \S{W}_{m,n} \setminus \S{P}_{m,n}}
\]
and proves the assertion.
\QED\end{proof}

\medskip

We call a warping walk $(\Phi, \Psi)$ \emph{optimal} if $\norm{\Phi x - \Psi y} = \dtw(x, y)$. From Prop.~\ref{prop:dtw} follows that transition from warping paths to warping walks leaves the dtw-distance unaltered and that we can condense every optimal warping walk to an optimal warping path by removing consecutive duplicates.

\subsection{Properties of Warping Functions}\label{subsec:properties-of-warping-walks}

In this section, we compile results on compositions of warping functions and warping walks. We begin with showing that warping functions are closed under compositions. 
\begin{lemma}\label{lemma:composition-01}
Let $\phi:[\ell] \rightarrow [m]$ and $\psi:[m] \rightarrow [n]$ be warping functions. Then the composition
\[
\theta: [\ell] \rightarrow [n], \quad l \mapsto \psi(\phi(l))
\]
is also a warping function. 
\end{lemma}

\begin{proof}
The assertion follows, because the composition of surjective (monotonic) functions is surjective (monotonic). 
\QED\end{proof}

\medskip

The composition of warping functions is contravariant to the composition of their associated warping matrices. Suppose that $\Phi \in \B^{\ell \times m}$ and $\Psi \in \B^{m \times n}$ are the warping matrices of the warping functions $\phi$ and $\psi$ from Lemma \ref{lemma:composition-01}, respectively. Then the warping matrix of the composition $\theta = \psi \circ \phi$ is the matrix product $\Phi\Psi \in \B^{\ell \times n}$. The next result shows that warping walks are closed for a special form of compositions. 

\begin{lemma}\label{lemma:composition-02}
Let $(\phi, \psi) \in \S{W}_{m,n}$ be a warping walk and let $\theta: [m] \rightarrow [r]$ be a warping function. Then $(\theta \circ \phi, \psi)$ is a warping walk in $\S{W}_{r,n}$. 
\end{lemma}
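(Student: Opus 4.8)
The plan is to verify directly that the pair $(\theta \circ \phi, \psi)$ meets the definition of a warping walk of order $r \times n$: it must consist of two warping functions sharing a common domain, with the first landing in $[r]$ and the second in $[n]$. Almost all of the substantive work is already packaged in Lemma~\ref{lemma:composition-01}, so the argument will be essentially an unpacking of definitions followed by a single invocation of that lemma.

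First I would record what the hypothesis $(\phi, \psi) \in \S{W}_{m,n}$ unpacks to, namely that $\phi: [\ell] \to [m]$ and $\psi: [\ell] \to [n]$ are warping functions with the common domain $[\ell]$. The second component $\psi$ is untouched by the construction, so it remains a warping function from $[\ell]$ into $[n]$ and requires no further checking. This isolates the only component that actually changes, the first one.

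Next I would treat the first component. Since $\phi: [\ell] \to [m]$ and $\theta: [m] \to [r]$ are both warping functions, Lemma~\ref{lemma:composition-01} applies verbatim, with its $\psi$ and $n$ playing the roles of our $\theta$ and $r$, and yields that the composition $\theta \circ \phi: [\ell] \to [r]$, $l \mapsto \theta(\phi(l))$, is again a warping function. In particular it is surjective and monotonically increasing, retains the domain $[\ell]$, and has codomain $[r]$.

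Finally I would assemble the conclusion: the two warping functions $\theta \circ \phi: [\ell] \to [r]$ and $\psi: [\ell] \to [n]$ share the domain $[\ell]$, with the first mapping into $[r]$ and the second into $[n]$, which is exactly the definition of a warping walk of order $r \times n$, i.e. an element of $\S{W}_{r,n}$. I do not anticipate any genuine obstacle here; the only two points deserving a moment's attention are that left-composing $\phi$ with $\theta$ leaves the domain $[\ell]$ intact, so the two components still match up, and that the codomain of $\theta \circ \phi$ is $[r]$ rather than $[m]$. Both are immediate from the typing of $\theta$, so the lemma reduces to a one-line appeal to Lemma~\ref{lemma:composition-01}.
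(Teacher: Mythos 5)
Your proof is correct and matches the paper's argument exactly: the paper also disposes of this lemma by citing Lemma~\ref{lemma:composition-01} together with the definition of a warping walk, and your write-up simply spells out the same one-line reduction in more detail. No issues.
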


\begin{proof}
Follows from Lemma \ref{lemma:composition-01} and by definition of a warping walk.
\QED\end{proof}

\medskip

Let $\phi:[m] \rightarrow [n]$ and $\phi':[m'] \rightarrow [n]$ be warping functions. The \emph{pullback} of $\phi$ and $\phi'$ is the set of the form
\[
\phi \otimes \phi' = \cbrace{(u,u') \in [m] \times [m'] \,:\, \phi(u) = \phi'(u')}.
\]
By $\pi: \phi \otimes \phi' \rightarrow [m]$ and $\pi': \phi \otimes \phi' \rightarrow [m']$ we denote the canonical projections. Let 
$\psi = \phi \circ \pi$ and $\psi' = \phi' \circ \pi'$ be the compositions that send elements from the pullback $\phi \otimes \phi'$ to the set $[n]$. The \emph{fiber} of $i \in [n]$ under the map $\psi$ is the set $\S{F}(i) = \cbrace{(u,u') \in \phi \otimes \phi' \,:\, \psi(u,u') = i}$. In a similar way, we can define the fiber of $i$ under the map $\psi'$. The next results show some properties of pullbacks and their fibers. 

\begin{lemma}\label{lemma:psi=psi'}
Let $\phi \otimes \phi'$ be a pullback of warping functions $\phi$ and $\phi'$. Then the compositions $\psi = \phi \circ \pi$ and $\psi' = \phi' \circ \pi'$ are surjective and satisfy $\psi = \psi'$.
\end{lemma}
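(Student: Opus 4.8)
The plan is to establish both assertions directly from the definitions, treating them in order: first the pointwise identity $\psi = \psi'$, and then surjectivity, which will follow cheaply once the identity is in hand together with the fact that warping functions are by definition surjective.

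For the equality, I would take an arbitrary element $(u, u') \in \phi \otimes \phi'$. By the definition of the pullback, membership means precisely that $\phi(u) = \phi'(u')$. Since the canonical projections satisfy $\pi(u,u') = u$ and $\pi'(u,u') = u'$, unwinding the compositions gives $\psi(u,u') = \phi(\pi(u,u')) = \phi(u)$ and $\psi'(u,u') = \phi'(\pi'(u,u')) = \phi'(u')$. These two values agree by the defining pullback condition $\phi(u) = \phi'(u')$. As $(u,u')$ was arbitrary, this yields $\psi = \psi'$ as functions on $\phi \otimes \phi'$.

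For surjectivity, because I have already shown $\psi = \psi'$, it suffices to verify that $\psi$ alone is surjective. Given any target index $i \in [n]$, I would invoke surjectivity of $\phi$ (which holds because $\phi$ is a warping function) to choose $u \in [m]$ with $\phi(u) = i$, and surjectivity of $\phi'$ to choose $u' \in [m']$ with $\phi'(u') = i$. Then $\phi(u) = i = \phi'(u')$ certifies that $(u,u') \in \phi \otimes \phi'$, and consequently $\psi(u,u') = \phi(u) = i$. Hence every $i \in [n]$ lies in the image of $\psi$, so $\psi$ — and therefore $\psi'$ — is surjective.

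I do not expect a genuine obstacle here: both claims drop out of a single unwinding of the pullback definition combined with the built-in surjectivity of warping functions. The only point deserving a moment's care is the bookkeeping step of confirming that the pair $(u,u')$ assembled from the two preimages actually lies in $\phi \otimes \phi'$ before applying $\psi$ to it; this is guaranteed precisely because $\phi$ and $\phi'$ are arranged to hit the \emph{same} index $i$.
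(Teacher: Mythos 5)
Your proof is correct and follows essentially the same route as the paper: the identity $\psi=\psi'$ is the same one-line unwinding of the pullback condition, and surjectivity is handled equivalently (you construct a preimage $(u,u')$ of each $i\in[n]$ directly, while the paper phrases it as a composition of surjective maps, which amounts to the same use of surjectivity of $\phi$ and $\phi'$). No gaps.
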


\begin{proof}
Warping functions and the natural projections are surjective. As a composition of surjective functions, the functions $\psi$ and $\psi'$ are surjective. For every $(u,u') \in \phi \otimes \phi'$ we have
\[
\psi(u,u') = \phi(\pi(u,u')) = \phi(u) = \phi'(u') = \phi'(\pi'(u,u')) = \psi'(u,u').
\]
This proves the assertion $\psi = \psi'$. 
\QED\end{proof}

\medskip

Lemma \ref{lemma:psi=psi'} has the following implications: First, from $\psi = \psi'$ follows that the fiber of $i$ under the map $\psi$ coincides with the fiber of $i$ under the map $\psi'$. Second, since $\psi$ is surjective, the fibers $\S{F}(i)$ are non-empty for every $i \in [n]$. Third, the fibers $\S{F}(i)$ form a partition of the pullback $\phi \otimes \phi'$.

\begin{lemma}\label{lemma:fibers-are-order-preserving}
Let $\phi \otimes \phi'$ be a pullback of warping functions $\phi:[m] \rightarrow [n]$ and $\phi':[m'] \rightarrow [n]$. Suppose that $i, j \in [n]$ with $i < j$. From $(u_i, u'_i) \in \S{F}(i)$ and $(u_j, u'_j) \in \S{F}(j)$ follows $u_i \leq u_j$ and $u'_i \leq u'_j$.
\end{lemma}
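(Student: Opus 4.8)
The plan is to unwind the definition of the fibers so that membership in $\S{F}(i)$ pins down the common value $i$ of both coordinate functions, and then to read the conclusion off directly from the monotonicity of $\phi$ and $\phi'$. The whole argument is short; its content is really just a single application of monotonicity per coordinate, once the fibers are correctly interpreted.

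First I would record what it means for a pair to lie in a fiber. By definition $\psi = \phi \circ \pi$, so for any $(u, u') \in \phi \otimes \phi'$ we have $\psi(u, u') = \phi(u)$; moreover Lemma \ref{lemma:psi=psi'} gives $\psi = \psi'$, hence $\psi(u, u') = \phi'(u')$ as well. Consequently $(u, u') \in \S{F}(i)$ is equivalent to $\phi(u) = \phi'(u') = i$. Applying this characterization to the two given pairs yields $\phi(u_i) = \phi'(u'_i) = i$ and $\phi(u_j) = \phi'(u'_j) = j$.

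Next I would invoke monotonicity. A warping function is monotonically increasing in the weak sense, so the form I use is its contrapositive: if $\phi(a) < \phi(b)$, then $a < b$ (since $a \geq b$ would force $\phi(a) \geq \phi(b)$). Because $i < j$, we have $\phi(u_i) = i < j = \phi(u_j)$, and the contrapositive delivers $u_i < u_j$, in particular $u_i \leq u_j$. Repeating the identical reasoning with $\phi'$ in place of $\phi$ and the primed indices gives $\phi'(u'_i) = i < j = \phi'(u'_j)$, whence $u'_i \leq u'_j$.

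There is essentially no obstacle here: the only point requiring a moment's care is that \emph{monotonically increasing} must be read in the weak (non-decreasing) sense, since strict monotonicity is impossible once the length of a warping function exceeds the size of its codomain. For that reason I argue through the contrapositive of monotonicity rather than through a direct order-reversing implication. With the fiber characterization in hand and two applications of monotonicity, the claim follows immediately.
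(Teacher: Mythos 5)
Your proof is correct and is essentially the paper's argument: the paper assumes $u_i > u_j$ and derives $i = \phi(u_i) \geq \phi(u_j) = j$ by monotonicity, contradicting $i < j$, which is exactly your contrapositive phrased as a reductio. Your explicit use of Lemma \ref{lemma:psi=psi'} to justify $\phi'(u'_i) = i$ for the primed coordinate is a point the paper glosses over with ``analogous,'' but the substance is identical.
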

\begin{proof}
We show the first assertion $u_i \leq u_j$. The proof for the second assertion $u'_i \leq u'_j$ is analogous. Suppose that $u_i > u_j$. From $(u_i, u'_i) \in \S{F}(i)$ follows $\phi(u_i) = i$ and from $(u_j, u'_j) \in \S{F}(j)$ follows $\phi(u_j) = j$. Since $\phi$ is monotonic, we have $i = \phi(u_i) \geq \phi(u_j) = j$, which contradicts the assumption that $i < j$. This shows $u_i \leq u_j$. 
\QED\end{proof}

\begin{lemma}\label{lemma:representation-of-fibers}
Let $\phi \otimes \phi'$ be a pullback of warping functions $\phi:[m] \rightarrow [n]$ and $\phi':[m'] \rightarrow [n]$. For every $i \in [n]$ there are elements $u_i \in [m]$, $u'_i \in [m']$ and $k_i, l_i \in \N$ such that 
\begin{align*}
\pi(\S{F}(i)) = \cbrace{u_i, u_i + 1, \ldots, u_i + k_i} \quad \text{ and } \quad
\pi'(\S{F}(i)) = \cbrace{u'_i, u'_i + 1, \ldots, u'_i + l_i}.
\end{align*}
\end{lemma}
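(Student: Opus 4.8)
The plan is to reduce the statement about fiber projections to an elementary fact about preimages of a monotonically increasing function. First I would unravel the definition of the fiber: by Lemma~\ref{lemma:psi=psi'} a pair $(u,u')$ lies in $\S{F}(i)$ precisely when $\phi(u) = \phi'(u') = i$, since membership in the pullback forces $\phi(u) = \phi'(u')$ and $\psi(u,u') = \phi(u)$. From this I would identify the projection $\pi(\S{F}(i))$ with the full preimage $\cbrace{u \in [m] \,:\, \phi(u) = i}$: every such $u$ admits a partner $u'$ with $\phi'(u') = i$ because $\phi'$ is surjective, so $(u,u') \in \S{F}(i)$, and conversely every element of $\pi(\S{F}(i))$ maps to $i$ under $\phi$. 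The symmetric argument, now invoking surjectivity of $\phi$, gives $\pi'(\S{F}(i)) = \cbrace{u' \in [m'] \,:\, \phi'(u') = i}$. Surjectivity of $\phi$ and $\phi'$ also guarantees that both sets are non-empty for each $i \in [n]$.

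The second step is to show that the preimage of a single value under a monotonically increasing function is a block of consecutive integers. Setting $u_i = \min \cbrace{u : \phi(u) = i}$ and $u_i + k_i = \max \cbrace{u : \phi(u) = i}$ with $k_i \geq 0$, I would argue that for any integer $w$ with $u_i \leq w \leq u_i + k_i$ monotonicity yields $i = \phi(u_i) \leq \phi(w) \leq \phi(u_i + k_i) = i$, hence $\phi(w) = i$. This establishes $\pi(\S{F}(i)) = \cbrace{u_i, u_i + 1, \ldots, u_i + k_i}$.

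Applying the identical reasoning to $\phi'$, with $u'_i = \min \cbrace{u' : \phi'(u') = i}$ and $l_i = \max \cbrace{u' : \phi'(u') = i} - u'_i$, produces the corresponding description of $\pi'(\S{F}(i))$ and completes the argument.

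I do not foresee a genuine obstacle, since the lemma amounts to the observation that a fiber projection coincides with a preimage, which is an integer interval because warping functions are monotone. The one point demanding care is the identification $\pi(\S{F}(i)) = \cbrace{u : \phi(u) = i}$, where surjectivity of the \emph{other} warping function is essential to supply a valid pullback partner for each candidate index. I would also note that, strictly speaking, $k_i$ and $l_i$ range over $\N_0$ rather than $\N$, taking the value $0$ exactly when the relevant fiber projects onto a singleton.
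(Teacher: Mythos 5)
Your proof is correct, but it takes a genuinely different and more direct route than the paper. The paper lists the elements of $\pi(\S{F}(i))$ in increasing order and shows by contradiction that there can be no gap: if some integer between the smallest and largest element were missing, it would belong to the projection of a different fiber $\S{F}(j)$, and Lemma~\ref{lemma:fibers-are-order-preserving} (order preservation \emph{across} fibers) then forces a contradiction in both cases $i<j$ and $j<i$. You instead first identify $\pi(\S{F}(i))$ with the preimage $\phi^{-1}(i)$ --- correctly observing that surjectivity of the \emph{other} function $\phi'$ is what supplies a pullback partner for every $u$ with $\phi(u)=i$ --- and then note that the preimage of a point under a monotone function on $[m]$ is an integer interval, by squeezing $i=\phi(u_i)\leq\phi(w)\leq\phi(u_i+k_i)=i$. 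Your argument is shorter, avoids the case analysis and the appeal to Lemma~\ref{lemma:fibers-are-order-preserving} entirely, and makes the underlying structure (fiber projection $=$ preimage $=$ interval) transparent; the paper's argument stays closer to the combinatorics of the pullback and reuses machinery it has already set up. You are also right that the statement should read $k_i, l_i \in \N_0$ rather than $\N$, since a fiber can project onto a singleton --- a minor slip in the paper that both proofs otherwise handle identically.
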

\begin{proof}
We show the assertion for $\pi(\S{F}(i))$. The proof of the assertion for $\pi'(\S{F}(i))$ is analogous. Let $i \in [n]$ and let $\S{G}(i) = \pi(\S{F}(i))$. Since fibers are non-empty and finite, we can find elements $u_i \in [m]$ and $k_i \in \N$ such that 
$\S{G}(i) = \cbrace{u_i, u_{i+1}, \ldots, u_{i + k_i}}$ with $u_i < u_{i+1} < \cdots < u_{i + k_i}$. It remains to show that $u_{i+r} = u_i + r$ for all $r \in [k_i]$.

We assume that there is a smallest number $r \in [k_i]$ such that $u_{i+r} \neq u_i + r$. Then $r \geq 1$ and therefore $i+r-1 \geq i$. This shows that $u_{i+r-1} \in \S{G}(i)$. Observe that $u_{i+r-1} = u_i + r-1$, because $r$ is the smallest number violating $u_{i+r} = u_i + r$. From $u_{i+r-1} < u_{i+r}$ together with $u_i + r \notin \S{G}(i)$ follows
\[
u_{i+r-1} = u_i + r-1 < u_i + r < u_{i+r}.
\]
Recall that the fibers form a partition of the pullback $\phi \otimes \phi'$. Then there is a $j \in [n] \setminus \cbrace{i}$ such that $u_i+r \in \S{G}(j)$. We distinguish between two cases:\footnote{The case $i = j$ can not occur by assumption.} 

\medskip

\noindent
Case $i < j$: From Lemma \ref{lemma:fibers-are-order-preserving} follows that $u_{i+r} \leq u_i+r$, which contradicts the previously derived inequality $u_i + r < u_{i+r}$.

\medskip

\noindent
Case $j < i$: From Lemma \ref{lemma:fibers-are-order-preserving} follows that $u_i+r \leq u_i$. Observe that either $u_i = u_{i+r-1}$ or $u_i < u_{i+r-1}$. We obtain the contradiction $u_i \leq u_{i+r-1} < u_i + r \leq u_i$. 

\medskip

From both contradictions follows that $u_{i+r} = u_i + r$ for every $r \in [k_i]$. This completes the proof. 
\QED\end{proof}

\medskip

Lemma \ref{lemma:pullback} uses pullbacks to show that pairs of warping functions with the same co-domain can be equalized by composition with a suitable warping walk. 

\begin{lemma}\label{lemma:pullback}
Let $\phi:[m] \rightarrow [n]$ and $\phi':[m'] \rightarrow [n]$ be warping functions. Then there are warping functions $\theta: [\ell] \rightarrow [m]$ and $\theta': [\ell] \rightarrow [m']$ for some $\ell \geq \max(m, m')$ such that $\phi \circ \theta = \phi' \circ \theta'$. 
\end{lemma}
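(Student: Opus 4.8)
The plan is to construct the pair $(\theta, \theta')$ explicitly as an ordered traversal of the pullback $\phi \otimes \phi'$, exploiting the combinatorial structure exposed by Lemmas~\ref{lemma:psi=psi'}, \ref{lemma:fibers-are-order-preserving}, and~\ref{lemma:representation-of-fibers}. The guiding observation is that for \emph{any} sequence of points $(u_1, u'_1), \ldots, (u_\ell, u'_\ell)$ in $\phi \otimes \phi'$, the coordinate maps $\theta(l) = u_l$ and $\theta'(l) = u'_l$ automatically satisfy $\phi(\theta(l)) = \phi'(\theta'(l))$, since by definition every point of the pullback has $\phi(u) = \phi'(u')$. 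Hence the defining equation $\phi \circ \theta = \phi' \circ \theta'$ comes for free, and the whole task reduces to choosing the order of the points so that $\theta$ and $\theta'$ are each surjective and monotonically (weakly) increasing.

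First I would record that each fiber $\S{F}(i)$ is the full rectangle $\phi^{-1}(i) \times (\phi')^{-1}(i)$: a point $(u,u') \in \S{F}(i)$ forces $\phi(u) = i$, and since it lies in the pullback also $\phi'(u') = i$; conversely any such pair lies in the fiber. By Lemma~\ref{lemma:representation-of-fibers} the two sides of this rectangle are the consecutive intervals $\pi(\S{F}(i)) = \cbrace{u_i, \ldots, u_i + k_i}$ and $\pi'(\S{F}(i)) = \cbrace{u'_i, \ldots, u'_i + l_i}$. Within a single fiber I would list its points along an L-shaped staircase, first incrementing the first coordinate from $(u_i, u'_i)$ to $(u_i + k_i, u'_i)$ and then incrementing the second coordinate up to $(u_i + k_i, u'_i + l_i)$. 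Along this staircase both coordinates are weakly increasing, the horizontal leg sweeps out every element of $\pi(\S{F}(i))$, the vertical leg sweeps out every element of $\pi'(\S{F}(i))$, and every visited point again lies in the rectangle $\S{F}(i)$ and hence in the pullback.

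Next I would concatenate these per-fiber staircases in the order $i = 1, 2, \ldots, n$ and read off $\theta, \theta'$. Since $\psi$ is surjective (Lemma~\ref{lemma:psi=psi'}) every fiber is non-empty, so each $i$ contributes, and the fibers partition $\phi \otimes \phi'$. Surjectivity of $\theta$ is then immediate, because the first projections of the fibers are exactly the preimages $\phi^{-1}(i)$, which cover $[m]$, and the staircases visit all of them; the argument for $\theta'$ and $[m']$ is symmetric. In particular $\ell \geq m$ and $\ell \geq m'$, so $\ell \geq \max(m, m')$. Monotonicity holds within each fiber by construction; across a fiber boundary it follows from Lemma~\ref{lemma:fibers-are-order-preserving}, since passing from fiber $i$ to fiber $i+1$ the first coordinate moves from $\max \phi^{-1}(i)$ to $\min \phi^{-1}(i+1)$, and monotonicity of $\phi$ makes these preimage intervals consecutive so the value increases; the second coordinate behaves likewise.

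The main obstacle is achieving monotonicity of \emph{both} coordinates simultaneously while still forcing surjectivity of both. A naive enumeration of the entire pullback (say in lexicographic order) fails, because incrementing the first coordinate within a fiber resets the second coordinate downward. The resolution is precisely that one must not traverse every pullback point but only a monotone staircase through each fiber-rectangle: Lemma~\ref{lemma:representation-of-fibers} guarantees the interval shape that lets such a staircase hit all required values, and Lemma~\ref{lemma:fibers-are-order-preserving} guarantees the staircases of successive fibers splice together without breaking monotonicity.
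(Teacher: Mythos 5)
Your proposal is correct and takes essentially the same route as the paper's proof: both traverse each fiber-rectangle of the pullback $\phi \otimes \phi'$ along a weakly monotone staircase that projects onto the full intervals $\pi(\S{F}(i))$ and $\pi'(\S{F}(i))$, then concatenate these staircases in fiber order to obtain the surjective monotone maps $\theta$ and $\theta'$, with $\phi\circ\theta = \phi'\circ\theta'$ holding automatically because every visited point lies in the pullback. The only cosmetic differences are the shape of the staircase (your horizontal-then-vertical leg versus the paper's diagonal-then-vertical leg) and that the paper verifies monotonicity by showing the lexicographic order on its chosen subset coincides with the componentwise order, whereas you argue it directly from the construction.
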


\begin{proof}
We first construct a suitable set $\S{Z}$ of cardinality $\abs{Z} = \ell$. For this, let $\phi \otimes \phi'$ be the pullback of $\phi$ and $\phi'$ and let $i \in [n]$. From Lemma \ref{lemma:representation-of-fibers} follows that there are elements $u_i \in [m]$, $u'_i \in [m']$ and $k_i, l_i \in \N$ such that 
\begin{align*}
\pi(\S{F}(i)) = \cbrace{u_i, u_{i + 1}, \ldots, u_{i + k_i}} 
\qquad \text{and} \qquad
\pi'(\S{F}(i)) = \cbrace{u'_i, u'_{i + 1}, \ldots, u'_{i + l_i}}.
\end{align*}
where $u_{i+r} = u_i + r$ for all $r \in [k_i]$ and $u'_{i+s} = u'_i + s$ for all $s \in [l_i]$. Without loss of generality we assume that $k_i \leq l_i$. For every $i \in [n]$, we construct the subset
\[
\S{Z}(i) = \cbrace{(u_i, u'_i), (u_{i+1}, u'_{i+1}), \ldots, (u_{i + k_i}, u'_{i+k_i}), (u_{i + k_i}, u'_{i+k_i+1})\ldots, (u_{i + k_i}, u'_{i+l_i})} \subseteq \S{F}(i)
\]
and form their disjoint union
\[
\S{Z} = \bigcup_{i \in [n]} \S{Z}(i) \subseteq \phi \otimes \phi'.
\]
Let $\leq_{\S{Z}}$ be the lexicographical order on $\S{Z}$ defined by
\[
(u_i,u'_i) \leq_{\S{Z}} (u_j,u'_j) \quad \text{ if and only if } \quad (u_i < u_j) \text{ or } (u_i = u_j \text{ and } u'_i \leq u'_j).
\]
for all $(u_i,u'_i), (u_j,u'_j) \in \S{Z}$. We show that the properties of $\S{Z}$ yield a tighter condition on $\leq_{\S{Z}}$. Let $(u_i,u'_i), (u_j,u'_j) \in \S{Z}$ such that $(u_i,u'_i) \leq_{\S{Z}} (u_j,u'_j)$. Then there are $i,j \in [n]$ such that $(u_i,u'_i) \in \S{Z}(i)$ and $(u_j,u'_j) \in \S{Z}(j)$. We distinguish between three cases:
\begin{enumerate}
\itemsep0em
\item $i = j$: By construction of $\S{Z}(i)$, the relationship $u_i \leq u_j$ gives $u'_i \leq u'_j$.
\item $i < j$: From Lemma \ref{lemma:fibers-are-order-preserving} follows that $u_i \leq u_j$ and $u'_i \leq u'_j$. 
\item $i > j$: Lemma \ref{lemma:fibers-are-order-preserving} yields $u_i \geq u_j$ and $u'_i \geq u'_j$. The assumption $(u_i,u'_i) \leq_{\S{Z}} (u_j,u'_j)$ gives $u_i \leq u_j$. Then from $u_i \geq u_j$ and $u_i \leq u_j$ follows $u_i = u_j$. In addition, we have $u'_i \leq u'_j$ by $(u_i,u'_i) \leq_{\S{Z}} (u_j,u'_j)$ and $u_i = u_j$. Hence, from $u'_i \geq u'_j$ and $u'_i \leq u'_j$ follows $u'_i = u'_j$. In summary, we have $u_i = u_j$ and $u'_i = u'_j$.
\end{enumerate}
The case distinction yields 
\[
(u_i,u'_i) \leq_{\S{Z}} (u_j,u'_j) \quad \text{ if and only if } \quad (u_i \leq u_j) \text{ and } (u'_i \leq u'_j).
\]
for all $(u_i,u'_i), (u_j,u'_j) \in \S{Z}$. Suppose that $\ell = \abs{\S{Z}}$. Then there is a bijective function
\[
f: [\ell] \rightarrow \S{Z}, \quad i \mapsto f(i)
\]
such that $i \leq j$ implies $f(i) \leq_{\S{Z}} f(j)$ for all $i,j \in [\ell]$. 

Next, we show that the functions $\theta = \pi \circ f$ and $\theta' = \pi' \circ f$ are warping functions on $[\ell]$. Both functions $\theta$ and $\theta'$ are surjective as compositions of surjective functions. To show that $\theta$ and $\theta'$ are monotonic, we assume that $i, j \in [\ell]$ with $i < j$. Suppose that $f(i) = (u_i,u'_i)$ and $f(j) = (u_j,u'_j)$. From $i < j$ follows $(u_i,u'_i) \leq_{\S{Z}} (u_j,u'_j)$ by construction of $f$. From $(u_i,u'_i) \leq_{\S{Z}} (u_j,u'_j)$ follows $u_i \leq u_j$ and $u'_i \leq u'_j$ as shown in the first part of this proof. Hence, we find that 
\begin{align*}
\theta(i) = \pi(f(i)) = \pi(u_i, u'_i) = u_i &\leq u_j = \pi(u_j, u'_j) = \pi(f(j)) = \theta(j)\\
\theta'(i) = \pi'(f(i)) = \pi'(u_i, u'_i) = u'_i &\leq u'_j = \pi'(u_j, u'_j) = \pi'(f(j)) = \theta'(j).
\end{align*}
Thus, $\theta$ and $\theta'$ are monotonic. This proves that $\theta$ and $\theta'$ are warping functions. 

It remains to show $\phi \circ \theta = \phi' \circ \theta'$. From Lemma \ref{lemma:psi=psi'} follows $\phi \circ \pi = \phi' \circ \pi'$. Then we have 
\[
\phi \circ \theta = (\phi \circ \pi) \circ f = (\phi' \circ \pi') \circ f = \phi' \circ \theta'.
\]
This completes the proof.
\QED\end{proof}

\subsection{Semi-Metrification of DTW-Spaces}\label{subsec:semi-metrification}

In this section, we convert the dtw-distance to a semi-metric. For this, we regard time series as words over the alphabet $\S{A} = \R$. The set of finite time series is denoted by $\S{T} = \S{A}^*$. The next result shows that expansions of words over numbers can be expressed by matrix multiplication. 

\begin{lemma}\label{lemma:x>y=>x=PHIy}
Let $x, y \in \S{T}$ be two time series. Then the following statements are equivalent:
\begin{enumerate}
\item $x$ is an expansion of $y$.
\item There is a warping matrix $\Phi$ such that $x = \Phi y$.
\end{enumerate}
\end{lemma}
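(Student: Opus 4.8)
The plan is to reduce both implications to two results already in hand: the combinatorial characterization of expansions in Lemma~\ref{lemma:expansions} and the description of the action of a warping matrix in Lemma~\ref{lemma:expansion-by-phi}. Write $y = y_1 \cdots y_m$ with $m = \abs{y}$. The key observation is that both ``$x$ is an expansion of $y$'' and ``$x = \Phi y$ for some warping matrix $\Phi$'' unfold to the very same normal form $x = y_1^{\alpha_1} \cdots y_m^{\alpha_m}$ with exponents $\alpha_1, \ldots, \alpha_m \in \N$, so the entire argument amounts to passing through this normal form in each direction.

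For the implication $(2) \Rightarrow (1)$, I would start from a warping matrix $\Phi$ with $x = \Phi y$. By definition $\Phi$ is associated to some warping function $\phi : [\ell] \rightarrow [m]$, so Lemma~\ref{lemma:expansion-by-phi} applied to $y$ directly yields integers $\alpha_1, \ldots, \alpha_m \in \N$ with $\Phi y = y_1^{\alpha_1} \cdots y_m^{\alpha_m}$. Hence $x = y_1^{\alpha_1} \cdots y_m^{\alpha_m}$, and the implication $(2) \Rightarrow (1)$ of Lemma~\ref{lemma:expansions} gives $x \succ y$. This direction is essentially immediate.

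For $(1) \Rightarrow (2)$, I would start from $x \succ y$ and invoke the implication $(1) \Rightarrow (2)$ of Lemma~\ref{lemma:expansions} to obtain $\alpha_1, \ldots, \alpha_m \in \N$ with $x = y_1^{\alpha_1} \cdots y_m^{\alpha_m}$. The actual work is to manufacture a warping matrix realizing these exponents. I would set $\ell = \alpha_1 + \cdots + \alpha_m$ and define $\phi : [\ell] \rightarrow [m]$ by sending the first $\alpha_1$ arguments to $1$, the next $\alpha_2$ to $2$, and so on; concretely, $\phi(i) = t$ whenever $\alpha_1 + \cdots + \alpha_{t-1} < i \leq \alpha_1 + \cdots + \alpha_t$. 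This $\phi$ is monotonically increasing by construction and surjective because each $\alpha_t \geq 1$, so it is a warping function; let $\Phi$ be its associated warping matrix. Applying Lemma~\ref{lemma:expansion-by-phi} to this $\phi$ recovers exactly $\Phi y = y_1^{\alpha_1} \cdots y_m^{\alpha_m} = x$, which is the desired matrix.

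The only step requiring care is the construction of $\phi$ in the forward direction, specifically verifying surjectivity, which hinges on every exponent $\alpha_t$ being a positive integer (guaranteed by Lemma~\ref{lemma:expansions}). Since both Lemma~\ref{lemma:expansions} and Lemma~\ref{lemma:expansion-by-phi} state their exponents over $\N$, the two normal forms are index-compatible and no genuine edge cases arise beyond the empty word, which is handled trivially because $\Pi(\varepsilon) = \varepsilon$.
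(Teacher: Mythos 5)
Your proof is correct and follows essentially the same route as the paper: both directions pass through the normal form $x = y_1^{\alpha_1} \cdots y_m^{\alpha_m}$ via Lemma~\ref{lemma:expansions}, and the warping function $\phi$ you construct for the forward direction (blocks of sizes $\alpha_1, \ldots, \alpha_m$, with surjectivity from $\alpha_t \geq 1$) is exactly the one the paper uses before invoking Lemma~\ref{lemma:expansion-by-phi}.
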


\begin{proof}
Suppose that $\abs{x} = n$ and $y = (y_1, \ldots, y_m)$. 

\medskip

\noindent
$\Rightarrow$: We assume that $x \succ y$. According to Lemma \ref{lemma:expansions} there are positive integers $\alpha_1, \ldots, \alpha_m \in \N$ such that $n = \alpha_1 + \cdots + \alpha_m$ and $x = y_1^{\alpha_1} \cdots y_m^{\alpha_m}$. Consider the function $\phi:[n] \rightarrow [m]$ defined by
\[
\phi(i) = \begin{cases}
1 & i \leq \alpha_1\\
2 & \alpha_1 < i \leq \alpha_1 + \alpha_2\\
\cdots & \cdots \\
m & \alpha_1 + \cdots \alpha_{m-1}< i 
\end{cases}
\]
for all $i \in [n]$. The function $\phi$ is surjective: Suppose that $j \in [m]$. We set $i = \alpha_1 + \cdots + \alpha_j$. Then $1 \leq i \leq n$ and $\phi(i) = j$ by definition of $\phi$. By construction, the function $\phi$ is also monotonically increasing. Hence, $\phi$ is a warping function. Then from Lemma \ref{lemma:expansion-by-phi} follows the second statement. 

\medskip

\noindent
$\Leftarrow$: Let $\Phi \in \B^{n \times m}$ be a warping matrix such that $x = \Phi y$. Then there is a warping function $\phi: [n] \rightarrow [m]$ associated with $\Phi$. The first statement follows by first applying Lemma \ref{lemma:expansion-by-phi} and then by Lemma \ref{lemma:expansions}.
\QED\end{proof}

\medskip

\emph{Warping identification} is a relation on $\S{T}$ defined by $x \sim y \,\Leftrightarrow\, \dtw(x, y) = 0$ for all $x, y \in \S{T}$. We show that warping identification is an equivalence relation.

\begin{proposition}\label{prop:warping-identification-class}
The warping-identification $\sim$ is an equivalence relation on $\S{T}$. 
\end{proposition}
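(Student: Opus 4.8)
The plan is to verify the three axioms of an equivalence relation for the relation $x \sim y \Leftrightarrow \delta(x,y) = 0$. Reflexivity and symmetry are immediate from the basic properties of the dtw-distance: reflexivity follows from $\delta(x,x) = 0$, and symmetry follows from $\delta(x,y) = \delta(y,x)$, so that $\delta(x,y) = 0$ holds precisely when $\delta(y,x) = 0$. All the substance lies in transitivity, and here the obvious shortcut is unavailable: since $\delta$ violates the triangle inequality, I cannot argue $\delta(x,z) \leq \delta(x,y) + \delta(y,z) = 0$ directly. This is exactly the subtlety flagged after the statement.

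Instead I would reroute transitivity through condensed forms by first establishing the characterization that $\delta(x,y) = 0$ if and only if $x$ and $y$ have the same condensed form $x^* = y^*$. For the forward direction, I would invoke the warping-walk formulation of the dtw-distance (Prop.~\ref{prop:dtw}): from $\delta(x,y) = 0$ I obtain a warping walk $(\Phi, \Psi) \in \S{W}_{\abs{x},\abs{y}}$ with $\Phi x = \Psi y =: w$. By Lemma \ref{lemma:x>y=>x=PHIy}, $w$ is then a common expansion, $w \succ x$ and $w \succ y$. Because a word and its compressions share the same condensed form (this is exactly what is shown inside the proof of Prop.~\ref{prop:compression-is-expansion-of-condensed-form}), uniqueness of condensed forms (Prop.~\ref{prop:existence-of-condensed-form}) forces $x^* = w^* = y^*$. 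For the backward direction, writing the common condensed form as $x^* = a_1 \cdots a_k$, Lemma \ref{lemma:expansions} yields $x = a_1^{\alpha_1} \cdots a_k^{\alpha_k}$ and $y = a_1^{\beta_1} \cdots a_k^{\beta_k}$; setting $w = a_1^{\gamma_1} \cdots a_k^{\gamma_k}$ with $\gamma_t = \max(\alpha_t,\beta_t)$ gives a common expansion $w \succ x$ and $w \succ y$. Lemma \ref{lemma:x>y=>x=PHIy} then supplies warping matrices $\Phi, \Psi$ with $w = \Phi x = \Psi y$, and since both have $\abs{w}$ rows the pair $(\Phi,\Psi)$ is a genuine warping walk in $\S{W}_{\abs{x},\abs{y}}$; Prop.~\ref{prop:dtw} gives $\delta(x,y) \leq \norm{\Phi x - \Psi y} = 0$.

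With this characterization in hand, transitivity is immediate: if $\delta(x,y) = 0$ and $\delta(y,z) = 0$, then $x^* = y^*$ and $y^* = z^*$, hence $x^* = z^*$, and the characterization returns $\delta(x,z) = 0$. Alternatively, one can bypass the explicit condensed forms by taking common expansions $w_1 \succ x, y$ and $w_2 \succ y, z$ and gluing them into a single word dominating both, then concluding via transitivity of expansions (Prop.~\ref{prop:expansions-are-transitive}); either route works. I expect the main obstacle to be the characterization itself rather than the final assembly—in particular the backward direction, where I must exhibit an explicit warping walk realizing distance zero, together with the care needed to confirm that $(\Phi,\Psi)$ is truly a warping walk (equal domains, hence equal row counts) and not merely an unmatched pair of warping matrices.
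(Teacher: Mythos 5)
Your proof is correct, but it takes a genuinely different route from the paper's. The paper handles transitivity by taking optimal warping paths $(\Phi,\Psi)$ and $(\Phi',\Psi')$ witnessing $\dtw(x,y)=\dtw(y,z)=0$ and then invoking the pullback construction (Lemma~\ref{lemma:pullback}) to produce warping matrices $\Theta,\Theta'$ equalizing the two maps into $y$; composing gives the warping walk $(\Theta\Phi,\Theta'\Psi')$, and a norm estimate (Euclidean triangle inequality plus compatibility of the Frobenius norm) forces $\norm{\Theta\Phi x - \Theta'\Psi' z}=0$, whence $\dtw(x,z)=0$ by Prop.~\ref{prop:dtw}. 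You instead prove the characterization $\dtw(x,y)=0 \Leftrightarrow x^*=y^*$ and read off transitivity from equality of condensed forms. Your forward direction is sound ($\Phi x=\Psi y$ is a common expansion, and uniqueness of condensed forms gives $x^*=y^*$), and your backward direction is the key divergence: rather than deducing $x\sim x^*$ and $y\sim x^*$ and then appealing to transitivity (which would be circular here, and is in fact how the paper later proves the $\supseteq$ half of Prop.~\ref{prop:generator-of-[x]} \emph{using} Prop.~\ref{prop:warping-identification-class}), you explicitly build the common expansion $w=a_1^{\max(\alpha_1,\beta_1)}\cdots a_k^{\max(\alpha_k,\beta_k)}$ and exhibit a genuine warping walk of distance zero. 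In effect your $\max$-construction is a concrete, hands-on substitute for the pullback: both produce a least common refinement of two warpings into the same target. What the paper's route buys is locality — it never needs condensed forms and works directly at the level of the two given witnesses; what your route buys is that it front-loads the content of Prop.~\ref{prop:generator-of-[x]} without circularity and renders the entire pullback machinery (Lemmas~\ref{lemma:psi=psi'}--\ref{lemma:pullback}) dispensable for this result. Your closing "gluing" alternative is essentially the same idea applied to the two common expansions of $y$ and also works, via Prop.~\ref{prop:expansions-are-transitive}.
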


\begin{proof}
The relation $\sim$ is reflexive and symmetric by the properties of the dtw-distance. It remains to show that the warping-identification is transitive. Let $x, y, z \in \S{T}$ be time series with $x \sim y$ and $y \sim z$. Then $\dtw(x, y) = \dtw(y, z) = 0$. Hence, there are optimal warping paths $(\Phi, \Psi)$ and $(\Phi', \Psi')$ of length $\ell$ and $\ell'$, resp., such that $\norm{\Phi x - \Psi y} = \norm{\Phi'y - \Psi' z} = 0$. From Lemma \ref{lemma:pullback} follows that there are warping matrices $\Theta$ and $\Theta'$ of the same length $\ell$ such that $\Theta\Psi y = \Theta'\Psi'y$. Observe that
\begin{align*}
\norm{\Theta\Phi x - \Theta' \Psi' z} 
&= \norm{\Theta\Phi x - \Theta\Psi y + \Theta'\Phi' y - \Theta' \Psi' z}\\
&\leq \norm{\Theta\Phi x - \Theta\Psi y} + \norm{\Theta'\Phi' y - \Theta' \Psi' z}\\
&\leq \norm{\Theta}\norm{\Phi x - \Psi y} + \norm{\Theta'}\norm{\Phi'y - \Psi' z}\\
&= 0.
\end{align*}
Note that the second inequality in the third line follows from the fact that the Frobenius norm on matrices is compatible to the vector norm. From Lemma \ref{lemma:composition-01} follows that the products $\Theta\Phi$ and $\Theta' \Psi'$ are warping matrices. Since both products have the same length $\ell$, we find that the pair $(\Theta\Phi, \Theta' \Psi')$ is a warping walk. Then from Prop.~\ref{prop:dtw} follows that $\dtw(x, z) \leq \norm{\Theta\Phi x - \Theta' \Psi' z} = 0$. This proves that $\sim$ is transitive and completes the proof.
\QED\end{proof}

For every $x \in \S{T}$ let $[x] = \cbrace{y \in \S{T} \,:\, x \sim y}$ denote the \emph{warping-identification class} of $x$. The next result presents an equivalent definition of the warping-identification class. 

\begin{proposition}\label{prop:generator-of-[x]}
Let $x \in \S{T}$ be a time series with condensed form $x^*$. Then the warping-identification class of $x$ is of the form
\[
[x] = \cbrace{y \in \S{T} \,:\, y \succ x^*}.
\] 
\end{proposition}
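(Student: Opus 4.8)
The plan is to prove the set equality by establishing both inclusions, i.e.\ to show for every $y \in \S{T}$ that $y \sim x$ if and only if $y \succ x^*$. Throughout I write $m = \abs{x}$, $n = \abs{y}$, $k = \abs{x^*}$, and I freely identify a warping function with its warping matrix, as licensed by Lemma~\ref{lemma:x>y=>x=PHIy} and Prop.~\ref{prop:dtw}.

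For the inclusion $\cbrace{y : y \succ x^*} \subseteq [x]$, I would assume $y \succ x^*$ and construct an explicit warping walk witnessing $\dtw(x,y)=0$. Since $x \succ x^*$ by definition of the condensed form and $y \succ x^*$ by hypothesis, Lemma~\ref{lemma:x>y=>x=PHIy} supplies warping matrices $\Psi$ and $\Phi$, with warping functions $\psi:[m]\to[k]$ and $\phi:[n]\to[k]$, such that $x = \Psi x^*$ and $y = \Phi x^*$. The functions $\psi$ and $\phi$ share the co-domain $[k]$, so Lemma~\ref{lemma:pullback} yields warping functions $\theta:[\ell]\to[m]$ and $\theta':[\ell]\to[n]$ with $\psi\circ\theta = \phi\circ\theta'$. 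Passing to warping matrices, using that composition of warping functions is contravariant to the matrix product (Lemma~\ref{lemma:composition-01}), this reads $\Theta\Psi x^* = \Theta'\Phi x^*$, that is $\Theta x = \Theta' y$. Because $\theta$ and $\theta'$ have the common domain $[\ell]$, the pair $(\Theta,\Theta')$ is a warping walk in $\S{W}_{m,n}$, and Prop.~\ref{prop:dtw} gives $\dtw(x,y) \leq \norm{\Theta x - \Theta' y} = 0$, so $y \in [x]$.

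For the reverse inclusion $[x] \subseteq \cbrace{y : y \succ x^*}$, I would assume $\dtw(x,y)=0$ and use Prop.~\ref{prop:dtw} to obtain an optimal warping walk $(\Phi,\Psi)\in\S{W}_{m,n}$ with $\norm{\Phi x - \Psi y}=0$, hence $\Phi x = \Psi y =: z$. By Lemma~\ref{lemma:x>y=>x=PHIy}, $z$ is a common expansion of $x$ and $y$, so $x \prec z$ and $y \prec z$. Writing $z^*$ for the condensed form of $z$, Prop.~\ref{prop:compression-is-expansion-of-condensed-form} gives $z^* \prec x$ and $z^* \prec y$. The word $z^*$ is irreducible and a compression of $x$, so by uniqueness of the condensed form (Prop.~\ref{prop:existence-of-condensed-form}) it must coincide with the condensed form of $x$, i.e.\ $z^* = x^*$. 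Consequently $x^* = z^* \prec y$, that is $y \succ x^*$, which is the desired inclusion.

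The main obstacle is the forward inclusion, specifically the step where two expansions $x$ and $y$ of the common root $x^*$ must be re-aligned into a single word of length $\ell$. This is precisely the place where the missing triangle inequality would ordinarily obstruct the argument, and it is resolved not by transitivity but by the pullback construction of Lemma~\ref{lemma:pullback}; the crucial bookkeeping is to check that the resulting pair $(\theta,\theta')$ is a genuine warping walk, namely that $\theta$ and $\theta'$ share the domain $[\ell]$, so that Prop.~\ref{prop:dtw} applies. The reverse inclusion is comparatively routine once one observes that the common expansion $z$ forces $x$ and $y$ to share the irreducible compression $z^*$, which uniqueness then pins down as $x^*$.
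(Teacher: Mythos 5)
Your proof is correct, and both inclusions go through. The reverse inclusion $[x] \subseteq \cbrace{y \,:\, y \succ x^*}$ is essentially identical to the paper's: extract an optimal alignment with $\Phi x = \Psi y = z$, observe that $z$ is a common expansion of $x$ and $y$, and use Prop.~\ref{prop:compression-is-expansion-of-condensed-form} together with uniqueness of condensed forms to conclude $x^* \prec y$. For the forward inclusion $\cbrace{y \,:\, y \succ x^*} \subseteq [x]$ you take a slightly different route: you invoke Lemma~\ref{lemma:pullback} to equalize the two expansions $x = \Psi x^*$ and $y = \Phi x^*$ into a single warping walk $(\Theta, \Theta') \in \S{W}_{\abs{x},\abs{y}}$ with $\Theta x = \Theta' y$, which is in effect an inlined replay of the paper's proof that $\sim$ is transitive (Prop.~\ref{prop:warping-identification-class}). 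The paper instead gets this direction almost for free: it notes $\dtw(x, x^*) = 0$ and $\dtw(y, x^*) = 0$ via the trivial warping walks $(\Phi, I_{\abs{x}})$ and $(\Psi, I_{\abs{y}})$ and then cites the already-established transitivity of $\sim$. Your version is self-contained and makes explicit exactly where the pullback machinery compensates for the missing triangle inequality, at the cost of redoing work; the paper's version is shorter but leans on Prop.~\ref{prop:warping-identification-class} as a black box. One cosmetic remark: the contravariance $\Theta\Psi$ of matrices versus $\psi \circ \theta$ of functions is stated in the discussion following Lemma~\ref{lemma:composition-01} rather than in the lemma itself, so the citation is slightly off, but the fact you use is established in the paper.
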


\begin{proof}
The warping-identification class is defined by 
\[
[x] = \cbrace{y \in \S{T} \,:\, x \sim y} = \cbrace{y \in \S{T} \,:\, \dtw(x,y) = 0}.
\] 
Let $\S{E}(x^*) = \cbrace{y \in \S{T} \,:\, y \succ x^*}$ denote the set of expansions of $x^*$. We show that $[x] = \S{E}(x^*)$.

\medskip

\noindent
$\subseteq$: Let $y \in [x]$. There is an optimal warping path $(\Phi, \Psi)$ such that $\dtw(x, y) = \norm{\Phi x - \Psi y} = 0$. From Lemma \ref{lemma:x>y=>x=PHIy} follows that there is a warping matrix $\Theta$ with $x = \Theta x^*$. Hence, $\norm{\Phi\Theta x^* - \Psi y} = 0$ and we obtain $\Phi\Theta x^* = \Psi y$. From Lemma \ref{lemma:composition-01} follows that the product $\Phi\Theta$ of warping matrices $\Phi$ and $\Theta$ is a warping matrix. We set $z = \Phi\Theta x^* = \Psi y$. Then $z \succ x^*$ and $z \succ y$ by Lemma \ref{lemma:x>y=>x=PHIy}. From Prop.~\ref{prop:compression-is-expansion-of-condensed-form} follows that $y \succ x^*$. This shows that $y \in \S{E}(x^*)$.

\medskip

\noindent
$\supseteq$: Let $y \in \S{E}(x^*)$. We assume that $\abs{x} = n$, $\abs{y}=m$, and $\abs{x^*} = k$. From Lemma \ref{lemma:x>y=>x=PHIy} follows that there are warping matrices $\Phi \in \B^{n \times k}$ and $\Psi \in \B^{m \times k}$ with $x = \Phi x^*$ and $y = \Psi x^*$, respectively. Then $(\Phi, I_n)$ and $(\Psi, I_{m})$ are warping walks of length $n$ and $m$ respectively. We have
\begin{align*}
0 \leq \dtw(x, x^*) &\leq \norm{I_n x - \Phi x^*} = 0 \\
0 \leq \dtw(y, x^*) &\leq \norm{I_{m} y - \Psi x^*} = 0
\end{align*}
giving $\dtw(x, x^*) = \dtw(y, x^*) = 0$. Hence, we have $x \sim x^*$ and $y \sim x^*$. Since $\sim$ is an equivalence relation, we have $x \sim y$ by Prop.~\ref{prop:warping-identification-class}. This proves $y \in [x]$. 
\QED\end{proof}

\medskip

Proposition \ref{prop:expansion-inequality} states that expansions do not decrease the dtw-distance to other time series.

\begin{proposition}\label{prop:expansion-inequality} 
Let $x,y \in \S{T}$ be time series such that $x \succ y$. Then 
\begin{enumerate}
\itemsep0em
\item $\dtw(x, y) = 0$ 
\item $\dtw(x, z) \geq \dtw(y, z)$ for all $z \in \S{T}$. 
\end{enumerate}
\end{proposition}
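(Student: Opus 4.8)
The plan is to prove the two assertions separately, in both cases exploiting the matrix description of expansions from Lemma~\ref{lemma:x>y=>x=PHIy} together with the warping-walk formulation of the dtw-distance in Prop.~\ref{prop:dtw}. Write $n = \abs{x}$ and $m = \abs{y}$. Since $x \succ y$, Lemma~\ref{lemma:x>y=>x=PHIy} supplies a warping matrix $\Phi \in \B^{n \times m}$, associated with a warping function $\phi:[n] \rightarrow [m]$, such that $x = \Phi y$. This single matrix drives both parts of the argument.

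For the first assertion I would exhibit one warping walk whose cost is zero. The identity map $\id:[n] \rightarrow [n]$ is a warping function with matrix $I_n$, and it shares its domain $[n]$ with $\phi$; hence the pair $(I_n, \Phi)$ is a warping walk of order $n \times m$. By Prop.~\ref{prop:dtw},
\[
\dtw(x, y) \leq \norm{I_n x - \Phi y} = \norm{x - x} = 0,
\]
and since the dtw-distance is non-negative this forces $\dtw(x, y) = 0$.

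For the second assertion, fix $z \in \S{T}$ with $p = \abs{z}$ and choose an optimal warping path $(\Psi, \Xi) \in \S{P}_{n,p}$ realizing $\dtw(x, z) = \norm{\Psi x - \Xi z}$; such a path exists because $\S{P}_{n,p}$ is finite, and it is in particular a warping walk. Substituting $x = \Phi y$ gives $\Psi x = \Psi \Phi\, y$. The key step is that the matrix product $\Psi\Phi$ is again a warping matrix: by Lemma~\ref{lemma:composition-01} the composition $\phi \circ \psi$ of the underlying warping functions is a warping function, and its associated matrix is exactly $\Psi\Phi$. Since $\Psi\Phi$ and $\Xi$ have the same number of rows as $\Psi$ and $\Xi$, the pair $(\Psi\Phi, \Xi)$ is a warping walk of order $m \times p$. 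Applying Prop.~\ref{prop:dtw} to this walk then yields
\[
\dtw(y, z) \leq \norm{\Psi\Phi\, y - \Xi z} = \norm{\Psi x - \Xi z} = \dtw(x, z),
\]
which is the claimed inequality.

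I expect the only delicate point to be the composition step. One cannot simply compose the two warping paths, because warping paths are not closed under composition; the composite object may contain zero-steps and hence violate the strict step condition, so it need not be a warping path at all. This is precisely where the passage to warping walks pays off: Lemma~\ref{lemma:composition-01} guarantees that the composite $\Psi\Phi$ remains a valid warping matrix, and Prop.~\ref{prop:dtw} guarantees that admitting walks rather than only paths leaves the dtw-distance unchanged, so the bound obtained from the composite walk is a genuine upper bound on $\dtw(y, z)$. Everything else is routine substitution using $x = \Phi y$.
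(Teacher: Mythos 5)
Your proof is correct and follows essentially the same route as the paper: represent the expansion as $x = \Phi y$ via Lemma~\ref{lemma:x>y=>x=PHIy}, compose an optimal warping path for $(x,z)$ with $\Phi$ (using Lemmas~\ref{lemma:composition-01}--\ref{lemma:composition-02}) to obtain a warping walk for $(y,z)$, and invoke Prop.~\ref{prop:dtw}. The only immaterial difference is that the paper deduces the first assertion from the second by setting $z=x$, whereas you exhibit the zero-cost warping walk $(I_n,\Phi)$ directly.
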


\begin{proof}
We first show the second assertion. Let $z \in \S{T}$ be a time series and let $(\Phi, \Psi)$ be an optimal warping path between $x$ and $z$. Then we have $\delta(x, z) = \norm{\Phi x - \Psi z}$ by Prop.~\ref{prop:dtw}. From $x \succ y$ and Lemma \ref{lemma:x>y=>x=PHIy} follows that there is a warping matrix $\Theta$ such that $x = \Theta y$. We obtain
\begin{align*}
\delta(x, z) = \norm{\Phi x - \Psi z} = \norm{\Phi \Theta y - \Psi z} \geq \delta(y, z).
\end{align*}
From Lemma \ref{lemma:composition-02} follows that $(\Phi\Theta, \Psi)$ is a warping walk. The inequality holds, because $(\Phi\Theta, \Psi)$ is not necessarily an optimal warping walk between $y$ and $z$. 

The first assertion follows from the second one by setting $z = x$. We obtain
\[
0 = \dtw(x, x) \geq \dtw(y, x) \geq 0. 
\]
This implies $\dtw(y, x) = 0$ and completes the proof.
\QED\end{proof}

The set $\S{T}^* = \cbrace{[x] \,:\, x \in \S{T}}$ is the quotient space of $\S{T}$ under warping identification $\sim$. We define the distance function 
\[
\delta^*: \S{T}^* \times \S{T}^* \rightarrow \R_{\geq 0}, \quad ([x], [y]) \mapsto \inf_{x' \in [x]}\; \inf_{y' \in [y]}\; \delta(x', y').
\]
We call $\delta^*$ the \emph{quotient distance} induced by $\delta$. 

\begin{theorem}\label{theorem:semi-metric}
The quotient distance $\delta^*$ induced by the dtw-distance $\delta$ is a well-defined semi-metric satisfying $\delta^*([x], [y]) = \delta(x^*,y^*)$ for all $x, y \in \S{T}$. 
\end{theorem}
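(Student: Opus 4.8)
The plan is to first derive the closed-form expression $\delta^*([x],[y]) = \delta(x^*,y^*)$, from which well-definedness and every semi-metric axiom follow with little extra work. By Prop.~\ref{prop:generator-of-[x]} the classes are $[x] = \cbrace{x' \in \S{T} \,:\, x' \succ x^*}$ and $[y] = \cbrace{y' \in \S{T} \,:\, y' \succ y^*}$, so the infimum defining $\delta^*$ ranges precisely over expansions of the two condensed forms. I would establish the formula by two matching inequalities. For the lower bound, fix arbitrary $x' \succ x^*$ and $y' \succ y^*$. Since $x' \succ x^*$, Prop.~\ref{prop:expansion-inequality} gives $\delta(x',y') \geq \delta(x^*,y')$; using symmetry together with $y' \succ y^*$, the same proposition yields $\delta(x^*,y') = \delta(y',x^*) \geq \delta(y^*,x^*) = \delta(x^*,y^*)$. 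Hence every term in the infimum is bounded below by $\delta(x^*,y^*)$, so $\delta^*([x],[y]) \geq \delta(x^*,y^*)$. For the upper bound, note that $x^* \succ x^*$ and $y^* \succ y^*$ by reflexivity of expansion, so $x^* \in [x]$ and $y^* \in [y]$; thus $\delta(x^*,y^*)$ itself appears in the infimum and $\delta^*([x],[y]) \leq \delta(x^*,y^*)$. Combining the two bounds yields the formula and, as a byproduct, shows the infimum is attained at the pair of condensed forms.

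Well-definedness is then immediate: by Prop.~\ref{prop:existence-of-condensed-form} every time series has a unique condensed form, and every member of $[x]$ shares the condensed form $x^*$, so the right-hand side $\delta(x^*,y^*)$ depends only on the classes $[x]$ and $[y]$. Since $\delta$ takes finite, non-negative values, $\delta^*$ is a well-defined map into $\R_{\geq 0}$.

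It remains to verify the semi-metric axioms through the formula. Non-negativity, $\delta^*([x],[y]) = \delta(x^*,y^*) \geq 0$, and symmetry, $\delta^*([x],[y]) = \delta(x^*,y^*) = \delta(y^*,x^*) = \delta^*([y],[x])$, transfer directly from the corresponding properties of $\delta$. For the identity of indiscernibles I would argue that $\delta^*([x],[y]) = 0$ holds if and only if $\delta(x^*,y^*) = 0$, i.e.\ $x^* \sim y^*$. Because $x^*$ and $y^*$ are both irreducible, warping identification forces $y^* \succ x^*$ (via Prop.~\ref{prop:generator-of-[x]} applied to $x^*$, whose condensed form is itself), and the only compression of an irreducible word is the word itself, so $x^* = y^*$ and hence $[x] = [y]$. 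The converse is clear, since equal classes share the same condensed form.

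The main obstacle is the lower-bound step in the formula: one must control expansions of \emph{both} arguments simultaneously, which is why Prop.~\ref{prop:expansion-inequality} is invoked twice, once on each coordinate with the aid of symmetry. Once this two-sided monotonicity is in place, well-definedness and the three retained metric axioms are routine bookkeeping.
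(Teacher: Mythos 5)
Your proof is correct and follows essentially the same route as the paper: both establish the closed form $\delta^*([x],[y]) = \delta(x^*,y^*)$ by applying Prop.~\ref{prop:expansion-inequality} once in each coordinate (the paper's chain $\dtw(x^*,y^*) \leq \dtw(x^*,y') \leq \dtw(x',y')$ is exactly your two-sided lower bound, with attainment at the condensed forms), and then read off the axioms from the formula. The only cosmetic difference is in the identity of indiscernibles, where the paper passes from $x^* \sim y^*$ to $[x^*]=[y^*]$ via Prop.~\ref{prop:warping-identification-class}, while you deduce $x^*=y^*$ from irreducibility and the fact that an irreducible word has no proper compression; both are valid consequences of results already proved.
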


\begin{proof}
Let $x^*$ and $y^*$ be the condensed forms of $x$ and $y$, respectively. Repeatedly applying Prop.~\ref{prop:expansion-inequality} gives 
\begin{align*}
\dtw(x^*,y^*) \leq \dtw(x^*,y') \leq \dtw(x',y')
\end{align*}
for all $x' \in [x]$ and all $y' \in [y]$. Hence, the infimum exists and $\dtw^*([x], [y]) = \dtw(x^*, y^*)$ is well-defined. 

We show that $\delta^*$ is a semi-metric. Non-negativity and symmetry of $\dtw^*$ follow from non-negativity and symmetry of $\dtw$. To show the identity of indiscernibles, we assume that $\dtw^*([x], [y]) = 0$. From the identity $\delta^*([x], [y]) = \delta(x^*,y^*)$ follows $\dtw(x^*, y^*) = 0$. This implies that $x^*$ and $y^*$ are warping identical, that is $x^* \sim y^*$. By Prop.~\ref{prop:warping-identification-class} we have the following relations $[x^*] = [y^*]$, $[x] = [x^*]$, and $[y] = [y^*]$. Combining these relations gives $[x] = [y]$. This shows that $\dtw^*$ satisfies the identity of indiscernibles. Hence, $\delta^*$ is a semi-metric.
\QED\end{proof}

\subsection{Warping-Invariance}\label{subsec:warping-invariance}

This section shows that the canonical extension of the quotient distance is warping-invariant. A distance function $d:\S{T} \times \S{T} \rightarrow \R_{\geq 0}$ is \emph{warping-invariant} if 
\[
d(x, y) = d(x', y')
\]
for all time series $x, y, x', y' \in \S{T}$ with $x \prec x'$ and $y \prec y'$. The quotient distance $\delta^*$ extends to a distance on $\S{T}$ by virtue of
\[
\delta^\sim: \S{T} \times \S{T} \rightarrow \R_{\geq 0}, \quad (x, y) \mapsto \delta^*([x], [y]).
\]
We call $\dtw^\sim$ the \emph{canonical extension} of $\dtw^*$.

\begin{theorem}\label{theorem:warping-invariance} The canonical extension $\dtw^\sim$ of the quotient distance $\dtw^*$ is warping-invariant.
\end{theorem}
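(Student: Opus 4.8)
The plan is to reduce warping-invariance to the single observation that the canonical extension depends only on the warping-identification classes of its arguments. Concretely, suppose $x \prec x'$ and $y \prec y'$; I must show $\dtw^\sim(x,y) = \dtw^\sim(x',y')$. Since $\dtw^\sim(x,y) = \dtw^*([x],[y])$ holds by definition of the canonical extension, it suffices to prove the two class equalities $[x] = [x']$ and $[y] = [y']$, for then both sides of the desired identity evaluate to $\dtw^*([x],[y])$.

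To establish $[x]=[x']$, I would first unfold the hypothesis $x \prec x'$, which says that $x'$ is an expansion of $x$, i.e.\ $x' \succ x$. Applying the first assertion of Prop~\ref{prop:expansion-inequality} then yields $\dtw(x',x) = 0$. By the definition of warping identification, $\dtw(x',x)=0$ is precisely the relation $x' \sim x$; and because $\sim$ is an equivalence relation by Prop~\ref{prop:warping-identification-class}, this gives $[x] = [x']$. The identical argument applied to $y \prec y'$ gives $[y]=[y']$. With these two class equalities in hand the conclusion is immediate: $\dtw^\sim(x',y') = \dtw^*([x'],[y']) = \dtw^*([x],[y]) = \dtw^\sim(x,y)$, where well-definedness of each evaluation of $\dtw^*$ is guaranteed by Theorem~\ref{theorem:semi-metric}.

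I expect no genuine technical obstacle to remain at this stage, since essentially all the work has already been done: the facts that expansions have distance zero (Prop~\ref{prop:expansion-inequality}) and that warping identification is a bona fide equivalence relation (Prop~\ref{prop:warping-identification-class}) are exactly what make the quotient construction legitimate in the first place. The only point needing care is conceptual rather than computational, namely recognizing that warping-invariance is essentially built into the definition of $\dtw^\sim$ as a function on the quotient space, so the theorem amounts to checking that the notion of compression matches the equivalence relation used to form the quotient.

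As an alternative route that bypasses the equivalence relation, I could instead invoke the identity $\dtw^\sim(x,y) = \dtw(x^*,y^*)$ from Theorem~\ref{theorem:semi-metric} and argue that $x$ and $x'$ share a condensed form. From $x^* \prec x$ and $x \prec x'$, transitivity of compression (Prop~\ref{prop:expansions-are-transitive}) gives $x^* \prec x'$; since $x^*$ is irreducible, uniqueness of condensed forms (Prop~\ref{prop:existence-of-condensed-form}) forces $x^* = (x')^*$, and likewise $y^* = (y')^*$. Substituting into the identity then yields $\dtw^\sim(x,y) = \dtw(x^*,y^*) = \dtw((x')^*,(y')^*) = \dtw^\sim(x',y')$, completing the proof.
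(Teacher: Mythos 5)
Your proof is correct, and your ``alternative route'' is essentially the paper's own argument: the paper takes condensed forms, uses transitivity of $\prec$ (Prop.~\ref{prop:expansions-are-transitive}) to show $x$ and $x'$ share the condensed form $x^*$, and then concludes $[x]=[x']$ via Prop.~\ref{prop:generator-of-[x]}. Your primary route is a mild variant that reaches $[x]=[x']$ through $\dtw(x',x)=0$ (Prop.~\ref{prop:expansion-inequality}) and the equivalence-relation property (Prop.~\ref{prop:warping-identification-class}) instead of through condensed forms; both are sound and rest on the same machinery. One point worth noting: the paper's proof actually establishes the stronger statement corresponding to the definition of warping-invariance in Section~\ref{sec:results}, namely invariance whenever $x$ and $x'$ merely admit a \emph{common compression} $u \prec x, x'$ (and likewise for $y, y'$), whereas your primary argument, as written, only treats the literal hypothesis $x \prec x'$ restated just before the theorem. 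This is not a gap for the statement as locally defined, and your argument extends immediately to the common-compression version by applying the same step to both arms ($[u]=[x]$ and $[u]=[x']$, hence $[x]=[x']$), but you should make that extension explicit if the Section~\ref{sec:results} definition is the one intended.
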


\begin{proof}
Let $x, x', y, y' \in \S{T}$ be time series such that there are common compressions $u \prec x, x'$ and $v \prec y, y'$. We show that $\delta^\sim(x, y) = \delta^\sim(x', y')$. Suppose that $x^*$ and $y^*$ are the condensed forms of $u$ and $v$, respectively. By repeatedly applying Prop.~\ref{prop:expansions-are-transitive} we obtain $x, x' \succ x^*$ and $y, y' \succ y^*$. From Prop.~\ref{prop:generator-of-[x]} follows that $[x] = [x']$ and $[y] = [y']$. This gives $\delta^\sim(x, y) = \delta^*([x], [y]) = \delta^*([x'], [y']) = \delta^\sim(x', y')$. The proof is complete. 
\QED\end{proof}

\commentout{
\begin{proposition}\label{prop:equivalent-formulation-warping-invariance}
Let $d:\S{T} \times \S{T} \rightarrow \R_{\geq 0}$ be a distance function. Then the following statements are equivalent:
\begin{enumerate}
\item $d$ is warping-invariant.
\item $d(x, y) = d(x', y')$ for all $x, y, x', y' \in \S{T}$ with $x \sim x'$ and $y \sim y'$. 
\end{enumerate}
\end{proposition}
\begin{proof}
We first assume that $d$ is warping-invariant. Let $x, y, x', y' \in \S{T}$ be time series such that $x \sim x'$ and $y \sim y'$. Proposition~\ref{prop:warping-identification-class} gives $[x] = [x']$ and $[y] = [y']$. From Prop.~\ref{prop:generator-of-[x]} follows that there are condensed forms $x^*$ and $y^*$ that generate the equivalence classes $[x] = [x']$ and $[y] = [y']$, respectively. Thus we have $x^* \prec x, x'$ and $y^* \prec y, y'$. Since $d$ is warping-invariant, we have 
\[
d(x^*, y^*) = d(x, y) \quad \text{ and } \quad d(x^*, y^*) = d(x', y').
\]
Combining both equations yields $d(x, y) = d(x', y')$. This shows that the first statement implies the second.

\medskip

We assume that the second statement holds. Let $x, y, x', y' \in \S{T}$ be time series such that $x \prec x'$ and $y \prec y'$. From Prop.~\ref{prop:expansion-inequality} follows that $\dtw(x, x') = 0$ and $\dtw(y, y') = 0$. Hence, we find that $x \sim x'$ and $y \sim y'$. This implies $d(x, y) = d(x', y')$ by assumption. Hence, the first statement holds. 
\QED\end{proof}
}

\section{Experiments}\label{sec:experiments}

The goal of these experiments is (i) to assess the prevalence of reducible (non-irreducible) time series and (ii) to assess the performance of the nearest-neighbor classifier in the semi-metric quotient space $\args{\S{T}^*, \delta^*}$.

\subsection{Dataset}

We used $85$ datasets of the UEA \& UCR Time Series Classification Repository \cite{Bagnall2018}. Every dataset consists of time series of identical length and comes with a predefined partition into a training and test set. Table \ref{tab:condensation} shows the datasets along with the length and number of time series.

\subsection{Prevalence of Reducible Time Series}

The goal of this experiment is to assess the prevalence of reducible time series. The purpose is to check to which extent the study of semi-metric quotient spaces is practically justified. For this, we computed the condensed form of every time series by collapsing consecutive replicates to singletons. A time series is reducible if it is longer than its condensed form. For every dataset, we recorded the percentage of reducible time series and the average number of deleted duplicates over the subset of reducible time series. 

\medskip

\begin{figure*}
\begin{minipage}{\textwidth}
\tiny
\centering
\begin{tabular}{l@{\quad}rr@{\quad}rrrcl@{\quad}rr@{\quad}rrr}
\toprule
Data & $\ell$ & $n$ & $p_{\text{red}}$ & $\mu_{\text{del}}$ & $\sigma_{\text{del}}$ 
& & Data & $\ell$ & $n$ & $p_{\text{red}}$ & $\mu_{\text{del}}$ & $\sigma_{\text{del}}$\\
\midrule
50Words & 270 & 905 & 8.1 & 1.1 & 0.4 & & MedicalImages & 99 & 1141 & 2.0 & 1.0 & 0.0\\
Adiac & 176 & 781 & 5.1 & 1.1 & 0.3 & & MiddlePhalOutAgeGroup & 80 & 554 & 40.3 & 28.4 & 4.1\\
ArrowHead & 251 & 211 & 76.8 & 6.2 & 6.3 & & MiddlePhalOutCorrect & 80 & 891 & 36.7 & 28.2 & 4.8\\
Beef & 470 & 60 & 31.7 & 1.2 & 0.4 & & MiddlePhalanxTW & 80 & 553 & 40.3 & 28.4 & 4.1\\
BeetleFly & 512 & 40 & 27.5 & 1.3 & 0.6 & & MoteStrain & 84 & 1272 & 23.7 & 6.3 & 14.8\\
BirdChicken & 512 & 40 & 37.5 & 16.7 & 26.0 & & NonInvasiveFatalECGThorax1 & 750 & 3765 & 97.1 & 11.4 & 12.2\\
Car & 577 & 120 & 86.7 & 1.6 & 0.8 & & NonInvasiveFatalECGThorax2 & 750 & 3765 & 98.4 & 11.1 & 9.2\\
CBF & 128 & 930 & 0.43 & 1.0 & 0.0 & & OliveOil & 570 & 60 & 15 & 1.1 & 0.3\\
ChlorineConcentration & 166 & 4307 & 5.2 & 20.5 & 21.2 & & OSULeaf & 427 & 442 & 5.2 & 1.3 & 0.6\\
CinCECGtorso & 1639 & 1420 & 72.2 & 114.0 & 155.0 & & PhalOutCorrect & 80 & 2658 & 44.9 & 28.3 & 4.3\\
Coffee & 286 & 56 & 7.1 & 1.0 & 0.0 & & Phoneme & 1024 & 2110 & 7.9 & 16.8 & 67.6\\
Computers & 720 & 500 & 99.6 & 513.9 & 168.7 & & Plane & 144 & 210 & 0 & 0.0 & 0.0\\
CricketX & 300 & 780 & 6.2 & 1.0 & 0.2 & & ProximalPhalOutAgeGroup & 80 & 605 & 46.1 & 28.0 & 5.2\\
CricketY & 300 & 780 & 1.8 & 1.1 & 0.3 & & ProximalPhalOutCorrect & 80 & 891 & 54.0 & 28.4 & 4.2\\
CricketZ & 300 & 780 & 5.9 & 1.1 & 0.2 & & ProximalPhalanxTW & 80 & 605 & 46.1 & 28.0 & 5.2\\
DiatomSizeReduction & 345 & 322 & 88.5 & 1.4 & 0.7 & & RefrigerationDevices & 720 & 750 & 99.5 & 463.5 & 127.3\\
DistalPhalOutAgeGroup & 80 & 539 & 49.5 & 28.5 & 3.8 & & ScreenType & 720 & 750 & 99.7 & 560.9 & 139.5\\
DistalPhalOutCorrect & 80 & 876 & 44.1 & 28.4 & 4.0 & & ShapeletSim & 500 & 200 & 1 & 1.0 & 0.0\\
DistalPhalanxTW & 80 & 539 & 49.6 & 28.5 & 3.8 & & ShapesAll & 512 & 1200 & 63.6 & 15.2 & 43.5\\
Earthquakes & 512 & 461 & 99.6 & 351.0 & 60.2 & & SmallKitchenAppliances & 720 & 750 & 99.7 & 625.4 & 154.9\\
ECG200 & 96 & 200 & 1 & 2.0 & 1.0 & & SonyAIBORobotSurface1 & 70 & 621 & 99.7 & 17.1 & 3.6\\
ECG5000 & 140 & 5000 & 1.3 & 4.1 & 5.7 & & SonyAIBORobotSurface2 & 65 & 980 & 99.7 & 14.5 & 3.9\\
ECGFiveDays & 136 & 884 & 99.6 & 6.7 & 2.7 & & StarLightCurves & 1024 & 9236 & 96.5 & 4.9 & 4.0\\
ElectricDevices & 96 & 16637 & 98.3 & 57.5 & 30.0 & & Strawberry & 235 & 983 & 95.3 & 1.7 & 0.8\\
FaceAll & 131 & 2250 & 54.4 & 2.4 & 2.1 & & SwedishLeaf & 128 & 1125 & 10.9 & 1.1 & 0.3\\
FaceFour & 350 & 112 & 98.2 & 166.3 & 19.9 & & Symbols & 398 & 1020 & 99.1 & 45.9 & 26.2\\
FacesUCR & 131 & 2250 & 54.0 & 2.4 & 2.1 & & Synthetic\_Control & 60 & 600 & 0 & 0.0 & 0.0\\
Fish & 463 & 350 & 17.7 & 1.1 & 0.3 & & ToeSegmentation1 & 277 & 268 & 4.1 & 1.0 & 0.0\\
FordA & 500 & 4921 & 45.9 & 2.4 & 1.4 & & ToeSegmentation2 & 343 & 166 & 26.5 & 45.6 & 31.4\\
FordB & 500 & 4446 & 2.4 & 1.0 & 0.1 & & Trace & 275 & 200 & 7.5 & 1.1 & 0.2\\
GunPoint & 150 & 200 & 33 & 2.3 & 0.5 & & TwoLeadECG & 82 & 1162 & 99.2 & 6.3 & 3.1\\
Ham & 431 & 214 & 98.1 & 12.2 & 6.4 & & TwoPatterns & 128 & 5000 & 99.7 & 44.1 & 6.8\\
HandOutlines & 2709 & 1370 & 99.9 & 34.2 & 31.5 & & UWaveGestureLibraryAll & 945 & 4478 & 99.9 & 267.6 & 108.4\\
Haptics & 1092 & 463 & 62.2 & 1.6 & 0.9 & & UWaveGestureLibraryX & 315 & 4478 & 99.9 & 84.0 & 40.4\\
Herring & 512 & 128 & 20.3 & 1.2 & 0.4 & & UWaveGestureLibraryY & 315 & 4478 & 99.9 & 101.6 & 38.2\\
InlineSkate & 1882 & 650 & 87.7 & 10.9 & 66.0 & & UWaveGestureLibraryZ & 315 & 4478 & 99.9 & 82.0 & 41.0\\
InsectWingbeatSound & 256 & 2200 & 27.1 & 1.8 & 1.3 & & Wafer & 152 & 7164 & 93.3 & 80.9 & 25.8\\
ItalyPowerDemand & 24 & 1096 & 60.3 & 1.6 & 0.8 & & Wine & 234 & 111 & 79.3 & 1.7 & 0.8\\
LargeKitchenAppliances & 720 & 750 & 99.7 & 591.0 & 153.4 & & WordSynonyms & 270 & 905 & 8.1 & 1.1 & 0.4\\
Lighting2 & 637 & 121 & 98.3 & 108.3 & 11.0 & & Worms & 900 & 258 & 98.8 & 47.2 & 49.2\\
Lighting7 & 319 & 143 & 98.6 & 54.0 & 5.3 & & WormsTwoClass & 900 & 258 & 98.8 & 47.2 & 49.2\\
Mallat & 1024 & 2400 & 64.5 & 1.7 & 0.9 & & Yoga & 426 & 3300 & 46.2 & 1.3 & 0.5\\
Meat & 448 & 120 & 22.5 & 1.0 & 0.2 & & & & & & & \\
\midrule
\textbf Average & & & & & & & & \textbf 418.1 & \textbf 1597.6 & \textbf 67.8 & \textbf 73.8 & \textbf 30.1\\
\bottomrule
\end{tabular}
\captionof{table}{Results of condensation. Legend: $\ell = $ length of time series $\bullet$ $n = $ number of time series $\bullet$ $p_{\text{red}} = $ percentage of reducible time series $\bullet$ $\mu_{\text{del}} = $ average number of deleted elements $\bullet$ $\sigma_{\text{del}} = $ standard deviation of deleted elements.}
\label{tab:condensation}
\end{minipage}

\vspace{2cm}

\begin{minipage}{\textwidth}
\includegraphics[width=0.4\textwidth]{./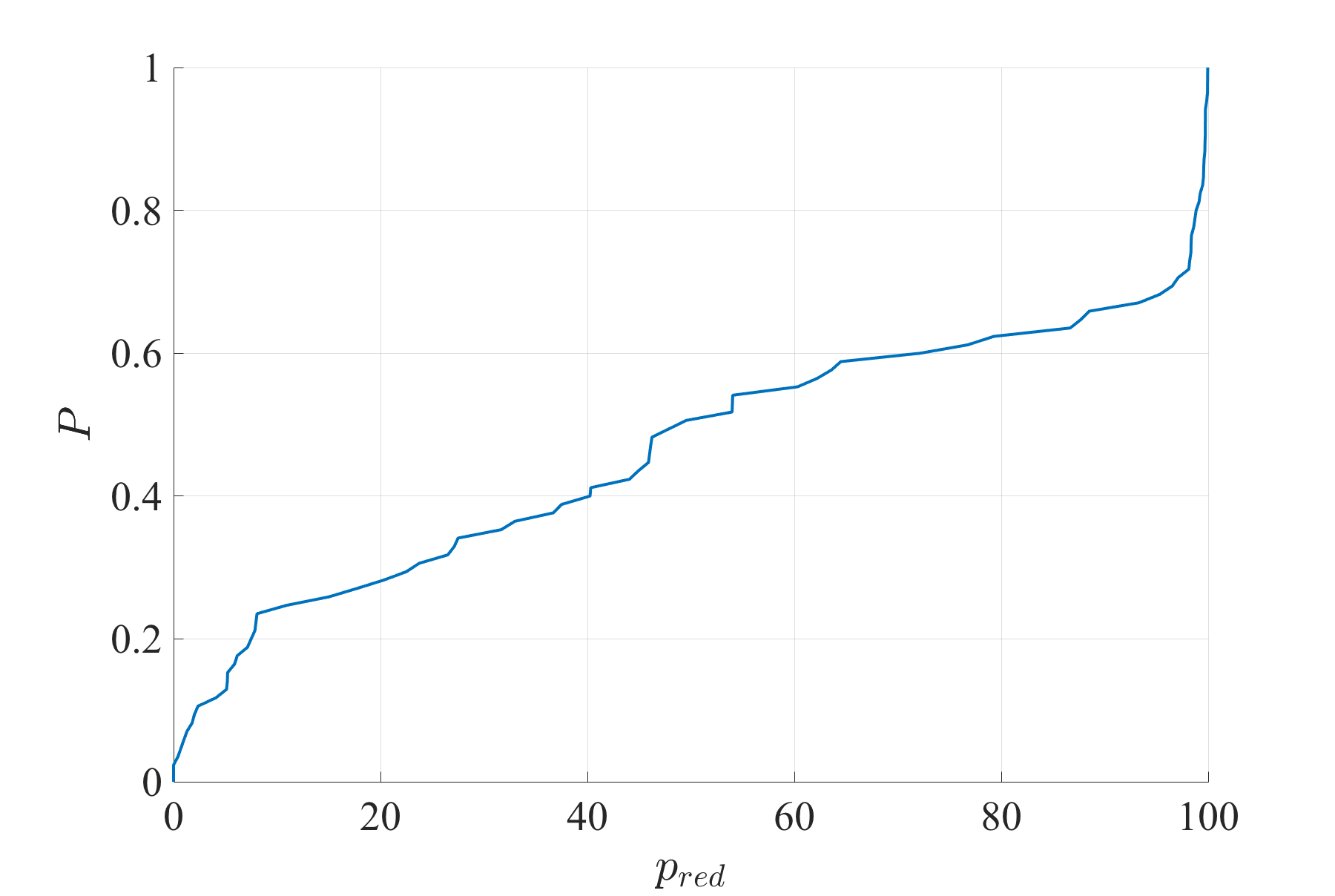}
\hfill
\includegraphics[width=0.4\textwidth]{./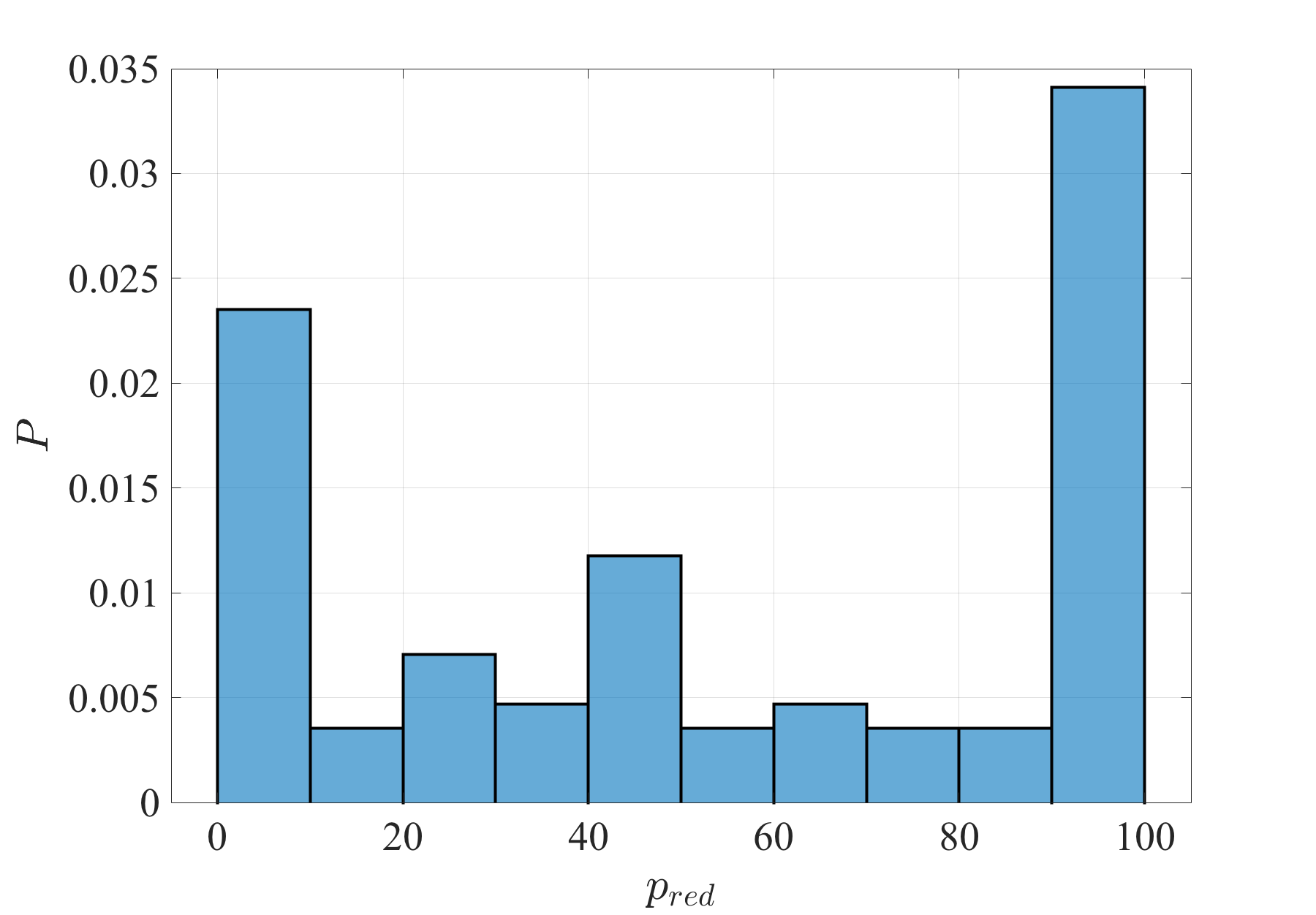}
\captionof{figure}{Estimated cumulative distribution function of the percentage $p_{\text{red}}$ of reducible time series (left) and a histogram as an estimate of the probability density of $p_{\text{red}}$ (right).}
\label{fig:pdf}
\end{minipage}
\end{figure*}

Table \ref{tab:condensation} summarizes the results. The last line shows the weighted averages of every numerical column over the $85$ datasets. The averages are weighted by the sizes $n$ of the datasets. 

The results show that on average $67.8 \%$ of all time series are reducible. There are only two datasets that contain no reducible time series, five datasets with at most $1 \%$ reducible time series, and $17$ datasets with at least $99 \%$ reducible time series. Figure \ref{fig:pdf} shows the estimated cumulative distribution function and a histogram of the percentage $p_{\text{red}}$ of reducible time series. These results indicate that reducible time series occur frequently and therefore justify to study data mining methods on the proposed quotient space $\args{\S{T}^*, \delta^*}$. 

The Pearson correlation coefficient $\rho$ and Spearman rank correlation coefficient $r$ between the length $\ell$ of time series and the percentage $p_{\text{del}}$ of reducible time series is $\rho = 0.32$ and $r = 0.28$, respectively. These results indicate a weak positive correlation between the length of time series and the percentage of reducible time series. The respective p-values $p_\rho = 0.0032$ and $p_r = 0.0086$ suggest that the the correlations are significantly different from zero. As expected, this finding suggests that it is more likely to encounter consecutive duplicates in longer time series.

Condensing reducible time series shortens their lengths by approximately $28.0 \%$ on average.\footnote{For every dataset $d$ compute the number $n_{\text{red}}^d = n \cdot p_{\text{red}}$ of reducible time series and the average percentage $p_{\text{del}}^d = 100 \cdot \mu_{\text{del}}/\ell$ of length difference between the original reducible time series and their condensed forms. Then compute $\sum_d p_{\text{del}}^d \,/\, \sum_d n_{\text{red}}^d$ to obtain the average percentage of length difference over all datasets.} This implies that dtw-comparisons of irreducible time series with a condensed form $x^*$ are on average $1.4$-times faster than with the corresponding reducible time series $x$. This result indicates that computing the quotient distance $\delta^*$ 
gives a slight speed advantage over computing the dtw-distance $\delta$.

\subsection{Nearest-Neighbor Classification}
In this experiment, we compare the classification accuracies of the nearest-neighbor (nn) classifiers using the dtw-distance $\delta$ and the proposed semi-metric $\delta^*$. 

\medskip

The nn-classifiers used the training examples as prototypes and the test examples for estimating the classification accuracy. To apply the $\dtw^*$-nn classifier, we transformed all reducible time series to their condensed forms. For every dataset, we recorded the classification accuracy \emph{acc} of the $\dtw$-nn classifier, the classification accuracy \emph{acc}$^*$ of the $\dtw^*$-nn classifier, and the error percentage $\text{err} = 100 \cdot (\text{acc} - \text{acc}^*)/\text{acc}$. Positive (negative) error percentages mean that the accuracy of the $\dtw$-nn classifier is higher (lower) than the accuracy of the $\dtw^*$-nn classifier.

Table \ref{tab:nn-classifier} presents the classification accuracies of both classifiers. The results show an accuracy-record of $30$ wins ($35.3\%$), $29$ ties ($34.1 \%$), and $26$ losses ($30.6 \%$) of the $\dtw^*$-nn classifier giving a winning percentage of $w^* = (30+0.5\cdot 29)/85 = 52.4$. This finding indicates that both classifiers are comparable with slight advantages for the 
$\dtw^*$-nn classifier. On average, the error percentage is $0.14 \%$. This result also suggests that both classifiers are comparable but with slight advantages for the $\dtw$-nn classifier. The Wilcoxon signed rank test fails to reject the null hypothesis that the differences $\text{acc} - \text{acc}^*$ come from a distribution whose median is zero at significance level $\alpha = 0.05$. The corresponding p-value of the test is $0.87$. This result indicates that the differences in accuracy of both classifiers are not statistically significant. Figure \ref{fig:scatter} visually confirms that the classification accuracies of both classifiers are comparable. 

To conclude, the results suggest to study data mining methods such as k-means, learning vector quantization, and self-organizing maps on condensed time series to overcome the peculiarities caused by the dtw-distance as discussed in Section \ref{sec:peculiarities}. 

\begin{figure*}
\begin{minipage}{\textwidth}
\tiny
\centering
\begin{tabular}{l@{\qquad}rrrcl@{\qquad}rrr}
\toprule
Data & acc & acc$^*$ & err && Data & acc & acc$^*$ & err \\
\midrule
50Words & 69.0 & 69.0 & 0.00 & & MedicalImages & 73.7 & 73.7 & 0.00 \\
Adiac & 60.4 & 59.8 & 0.85 & & MiddlePhalanxOutlineAgeGroup & 50.0 & 48.7 & 2.60 \\
ArrowHead & 70.3 & 72.0 & -2.44 & & MiddlePhalanxOutlineCorrect & 69.8 & 70.1 & -0.49 \\
Beef & 63.3 & 63.3 & 0.00 & & MiddlePhalanxTW & 50.6 & 50.6 & 0.00 \\
BeetleFly & 70.0 & 70.0 & 0.00 & & MoteStrain & 83.5 & 83.8 & -0.38 \\
BirdChicken & 75.0 & 75.0 & 0.00 & & NonInvasiveFatalECGThorax1 & 79.0 & 79.3 & -0.39 \\
Car & 73.3 & 73.3 & 0.00 & & NonInvasiveFatalECGThorax2 & 86.5 & 86.4 & 0.12 \\
CBF & 99.7 & 99.7 & 0.00 & & OliveOil & 83.3 & 86.7 & -4.00 \\
ChlorineConcentration & 64.8 & 64.8 & 0.00 & & OSULeaf & 59.1 & 59.5 & -0.70 \\
CinCECGtorso & 65.1 & 63.5 & 2.45 & & PhalangesOutlinesCorrect & 72.8 & 73.4 & -0.80 \\
Coffee & 100.0 & 100.0 & 0.00 & & Phoneme & 22.8 & 22.8 & 0.00 \\
Computers & 70.0 & 63.2 & 9.71 & & Plane & 100.0 & 100.0 & 0.00 \\
CricketX & 75.4 & 75.1 & 0.34 & & ProximalPhalanxOutlineAgeGroup & 80.5 & 80.5 & 0.00 \\
CricketY & 74.4 & 74.4 & 0.00 & & ProximalPhalanxOutlineCorrect & 78.4 & 80.1 & -2.19 \\
CricketZ & 75.4 & 75.4 & 0.00 & & ProximalPhalanxTW & 76.1 & 76.1 & 0.00 \\
DiatomSizeReduction & 96.7 & 96.7 & 0.00 & & RefrigerationDevices & 46.4 & 47.7 & -2.87 \\
DistalPhalanxOutlineAgeGroup & 77.0 & 78.4 & -1.87 & & ScreenType & 39.7 & 40.5 & -2.01 \\
DistalPhalanxOutlineCorrect & 71.7 & 70.7 & 1.52 & & ShapeletSim & 65.0 & 64.4 & 0.86 \\
DistalPhalanxTW & 59.0 & 59.7 & -1.22 & & ShapesAll & 76.8 & 76.7 & 0.22 \\
Earthquakes & 71.9 & 69.8 & 3.00 & & SmallKitchenAppliances & 64.3 & 72.8 & -13.28 \\
ECG200 & 77.0 & 77.0 & 0.00 & & SonyAIBORobotSurface1 & 72.5 & 71.5 & 1.38 \\
ECG5000 & 92.4 & 92.5 & -0.02 & & SonyAIBORobotSurface2 & 83.1 & 82.8 & 0.38 \\
ECGFiveDays & 76.8 & 76.7 & 0.15 & & StarLightCurves & 90.7 & 90.7 & -0.09 \\
ElectricDevices & 60.1 & 55.9 & 7.00 & & Strawberry & 94.0 & 94.0 & 0.00 \\
FaceAll & 80.8 & 80.4 & 0.51 & & SwedishLeaf & 79.2 & 79.2 & 0.00 \\
FaceFour & 83.0 & 83.0 & 0.00 & & Symbols & 95.0 & 93.9 & 1.16 \\
FacesUCR & 90.5 & 90.4 & 0.05 & & Synthetic\_Control & 99.3 & 99.3 & 0.00 \\
Fish & 82.3 & 82.3 & 0.00 & & ToeSegmentation1 & 77.2 & 77.2 & 0.00 \\
FordA & 56.2 & 56.2 & -0.05 & & ToeSegmentation2 & 83.8 & 85.4 & -1.84 \\
FordB & 59.4 & 59.5 & -0.09 & & Trace & 100.0 & 100.0 & 0.00 \\
GunPoint & 90.7 & 92.0 & -1.47 & & TwoLeadECG & 90.5 & 87.5 & 3.30 \\
Ham & 46.7 & 47.6 & -2.04 & & TwoPatterns & 100.0 & 99.7 & 0.35 \\
HandOutlines & 79.8 & 78.7 & 1.38 & & UWaveGestureLibraryAll & 89.2 & 91.6 & -2.72 \\
Haptics & 37.7 & 38.6 & -2.59 & & UWaveGestureLibraryX & 72.8 & 71.1 & 2.26 \\
Herring & 53.1 & 54.7 & -2.94 & & UWaveGestureLibraryY & 63.4 & 63.6 & -0.31 \\
InlineSkate & 38.4 & 38.2 & 0.47 & & UWaveGestureLibraryZ & 65.8 & 66.5 & -1.06 \\
InsectWingbeatSound & 35.5 & 35.6 & -0.14 & & Wafer & 98.0 & 98.4 & -0.38 \\
ItalyPowerDemand & 95.0 & 92.3 & 2.86 & & Wine & 57.4 & 57.4 & 0.00 \\
LargeKitchenAppliances & 79.5 & 71.5 & 10.07 & & WordSynonyms & 64.9 & 64.9 & 0.00 \\
Lighting2 & 86.9 & 88.5 & -1.89 & & Worms & 46.4 & 46.4 & 0.00 \\
Lighting7 & 72.6 & 68.5 & 5.66 & & WormsTwoClass & 66.3 & 63.5 & 4.17 \\
Mallat & 93.4 & 93.4 & -0.05 & & Yoga & 83.7 & 83.8 & -0.16 \\
Meat & 93.3 & 93.3 & 0.00 & & & & & \\
\midrule
\textbf Average & & & & & & & & \textbf 0.14 \\
\bottomrule
\end{tabular}
\captionof{table}{Result of nearest-neighbor classification. Legend: acc $=$ accuracy of the $\dtw$-nn classifier $\bullet$ acc$^* =$ accuracy of the $\dtw^*$-nn classifier $\bullet$ err $=$ error percentage $100 \cdot (\text{acc}-\text{acc}^*) / \text{acc}$.}
\label{tab:nn-classifier}
\end{minipage}

\vspace{2cm}

\begin{minipage}{\textwidth}
\includegraphics[width=0.45\textwidth]{./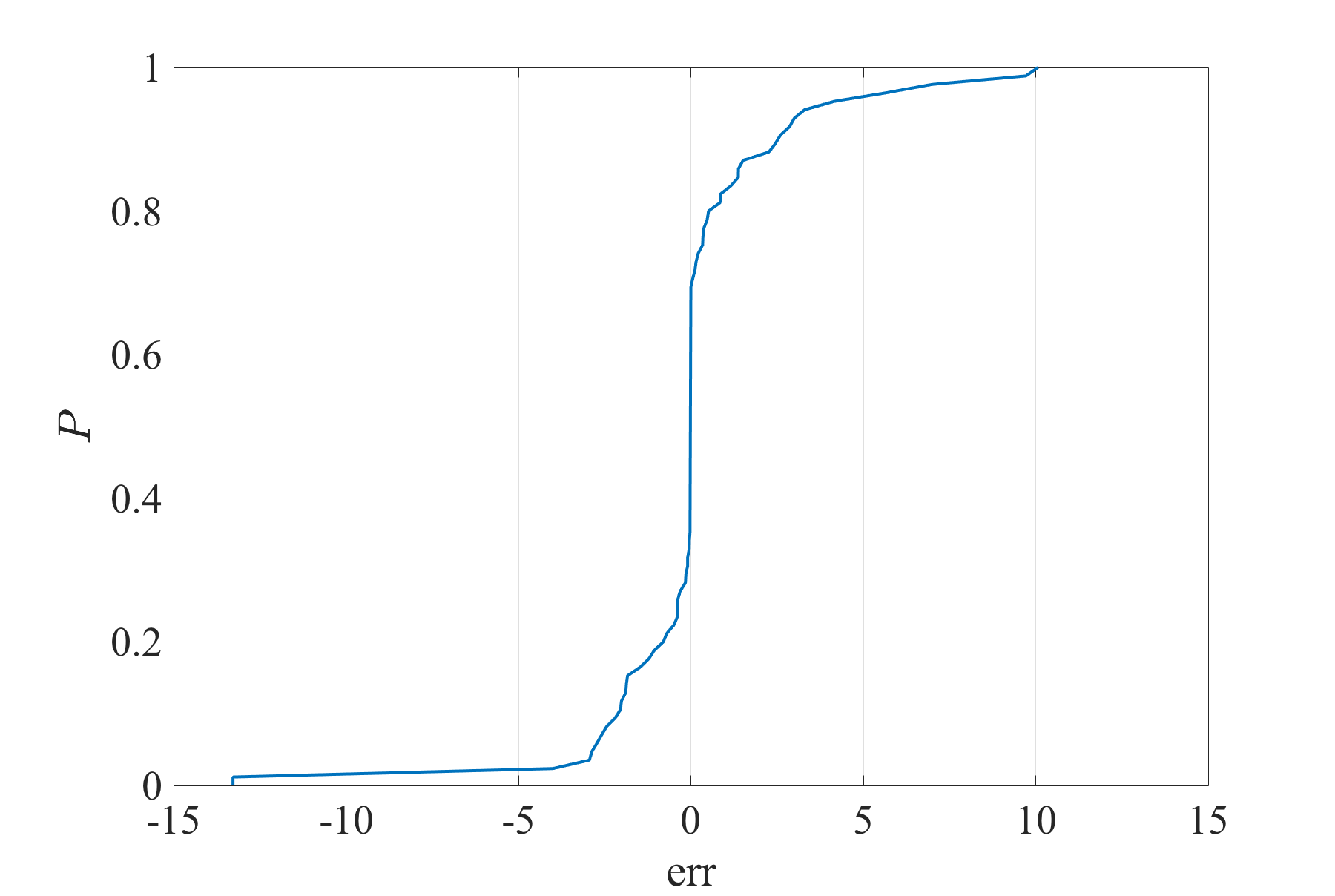}
\hfill
\includegraphics[width=0.45\textwidth]{./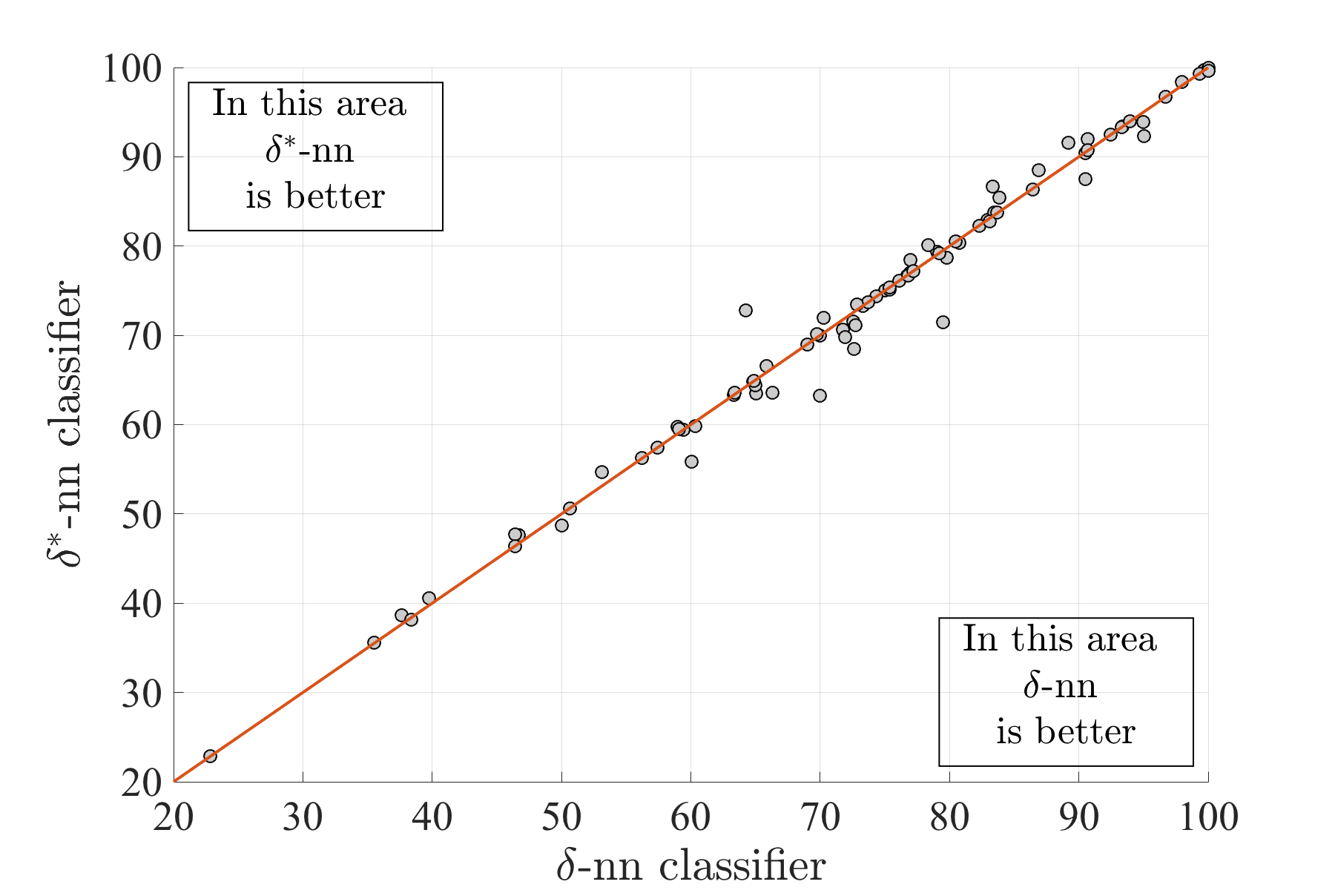}
 \captionof{figure}{Estimated cumulative probability distribution function of the error percentage \emph{err} (left) and scatterplot of classification accuracies of the $\delta$-nn and $\delta^*$-nn classifier (right).}
\label{fig:scatter}
\end{minipage}
\end{figure*}

\section{Conclusion}\label{sec:conclusion}

The dtw-distance fails to satisfy the triangle inequality and the identity of indiscernibles. As a consequence, the dtw-distance is not warping-invariant, although it has been designed to eliminate temporal variations. Lack of warping-invariance of the dtw-distance results in peculiarities of data mining tasks in dtw-spaces. To overcome these peculiarities, this article converts the dtw-distance into a semi-metric whose canonical extension is warping-invariant. Empirical results on nearest-neighbor classification in the proposed semi-metric space and the original dtw-space show that the respective classification accuracies are comparable. This finding suggests to further explore data mining applications in the semi-metric spaces induced by the dtw-distance.


\clearpage


\begin{thebibliography}{00}
\setlength{\parskip}{0pt}
\setlength{\itemsep}{0pt plus 0.3ex}
\small

\bibitem{Abanda2018}
A.~Abanda, U.~Mori, and J.A.~Lozano. 
\newblock A review on distance based time series classification. 
\newblock arXiv:1806.04509, 2018.

\bibitem{Abdulla2003}
W.H.~Abdulla, D.~Chow, and G.~Sin. 
\newblock Cross-words reference template for DTW-based speech recognition systems. 
\newblock \emph{Conference on Convergent Technologies for Asia-Pacific Region}, 2003.

\bibitem{Aghabozorgi2015}
S.~Aghabozorgi, A.S.~Shirkhorshidi, and T.-Y.~Wah.
\newblock Time-series clustering -- A decade review.
\newblock \emph{Information Systems}, 53:16--38, 2015.

\bibitem{Bagnall2017}
A.~Bagnall, J.~Lines, A.~Bostrom, J.~Large, and E.~Keogh. 
\newblock The great time series classification bake off: a review and experimental evaluation of recent algorithmic advances. 
\newblock \emph{Data Mining and Knowledge Discovery}, 31(3):606--660, 2017.

\bibitem{Bagnall2018}
A.~Bagnall, J.~Lines, W.~Vickers, and E.~Keogh.
\newblock \emph{The UEA \& UCR Time Series Classification Repository}, 
\newblock \url{www.timeseriesclassification.com}. Accessed: 08/2018.

\bibitem{Bulteau2018}
L.~Bulteau, V.~Froese, and R.Ñiedermeier.
\newblock Hardness of Consensus Problems for Circular Strings and Time Series Averaging.
\newblock \emph{CoRR}, abs/1804.02854, 2018.

\bibitem{Casacuberta1987}
F.~Casacuberta, E.~Vidal, and H.~Rulot.
\newblock On the metric properties of dynamic time warping. 
\newblock \emph{IEEE Transactions on Acoustics, Speech, and Signal Processing}, 35(11):1631--1633, 1987.

\bibitem{Cuturi2011}
M.~Cuturi. 
\newblock Fast global alignment kernels. 
\newblock \emph{International Conference on Machine Learning} (ICML~'11), 2011.

\bibitem{Cuturi2017}
M.~Cuturi and M.~Blondel.
\newblock Soft-DTW: A Differentiable Loss Function for Time-Series.
\newblock \emph{International Conference on Machine Learning} (ICML~'17), 2017.

\bibitem{Esling2012}
P.~Esling and C.~Agon. 
\newblock Time-series data mining. 
\newblock \emph{ACM Computing Surveys}, 45(1), 2012.

\bibitem{Fu2011}
T.-C.~Fu. 
\newblock A review on time series data mining.
\newblock \emph{Engineering Applications of Artificial Intelligence}, 24(1):164--181, 2011.

\bibitem{Frechet1948}
M.~Fr\'{e}chet.
\newblock Les \'el\'ements al\'eatoires de nature quelconque dans un espace distanci\'e.
\newblock \emph{Annales de l'institut Henri Poincar\'e}, 215--310, 1948.

\bibitem{Hautamaki2008}
V.~Hautamaki, P.~Nykanen, P.~Franti.
\newblock Time-series clustering by approximate prototypes.
\newblock \emph{International Conference on Pattern Recognition}, 2008.

\bibitem{Jain2016b}
B.J.~Jain and D.~Schultz.
\newblock On the Existence of a Sample Mean in Dynamic Time Warping Spaces.
\newblock arXiv:1610.04460, 2016.

\bibitem{Jain2018}
B.J.~Jain and D.~Schultz.
\newblock Asymmetric learning vector quantization for efficient nearest neighbor classification in dynamic time warping spaces
\newblock \emph{Pattern Recognition} 76, 349--366, 2018.

\bibitem{Kohonen1998} 
T.~Kohonen and P.~Somervuo.
\newblock Self-organizing maps of symbol strings. 
\newblock \emph{Neurocomputing}, 21(1-3):19--30, 1998.

\bibitem{Kruskal1983}
J.B.~Kruskal and M.~Liberman. 
\newblock The symmetric time-warping problem: From continuous to discrete.
\newblock \emph{Time warps, string edits and macromolecules: The theory and practice of sequence comparison}, 1983.

\bibitem{Lemire2009}
D.~Lemire. 
\newblock Faster retrieval with a two-pass dynamic-time-warping lower bound. 
\newblock \emph{Pattern Recognition}, 42(9):2169--2180, 2009.

\bibitem{Marteau2009}
P.F.~Marteau. 
\newblock Time warp edit distance with stiffness adjustment for time series matching. 
\newblock \emph{IEEE Transactions on Pattern Analysis and Machine Intelligence}, 31(2):306--318, 2009.

\bibitem{Morel2018}
M.~Morel, C.~Achard, R.~Kulpa, and S.~Dubuisson.
\newblock Time-series Averaging Using Constrained Dynamic Time Warping with Tolerance. 
\newblock \emph{Pattern Recognition}, 74, 2018.

\bibitem{Mueller2007}
M.~M\"uller.
Dynamic time warping
\emph{Information retrieval for music and motion}, 69--84, 2007.

\bibitem{Petitjean2011}
F.~Petitjean, A.~Ketterlin, and P.~Gancarski. 
\newblock A global averaging method for dynamic time warping, with applications to clustering.
\newblock \emph{Pattern Recognition} 44(3):678--693, 2011.

\bibitem{Petitjean2016}
F.~Petitjean, G.~Forestier, G.I.~Webb, A.E.~Nicholson, Y.~Chen, and E.~Keogh.
\newblock Faster and more accurate classification of time series by exploiting a novel dynamic time warping averaging algorithm. 
\newblock \emph{Knowledge and Information Systems}, 47(1):1--26, 2016.

\bibitem{Rabiner1979}
L.R.~Rabiner and J.G. Wilpon.
\newblock Considerations in applying clustering techniques to speaker-independent word recognition. 
\newblock \emph{The Journal of the Acoustical Society of America}, 66(3): 663--673, 1979.

\bibitem{Sakoe1978}
H.~Sakoe and S.~Chiba. 
\newblock Dynamic programming algorithm optimization for spoken word recognition. 
\newblock \emph{IEEE Transactions on Acoustics, Speech, and Signal Processing}, 26(1):43--49, 1978.

\bibitem{Schultz2018}
D.~Schultz and B.~Jain.
\newblock Nonsmooth analysis and subgradient methods for averaging in dynamic time warping spaces.
\newblock \emph{Pattern Recognition}, 74, 2018.

\bibitem{Soheily-Khah2016}
S.~Soheily-Khah, A.~Douzal-Chouakria, and E.~Gaussier.
\newblock Generalized k-means-based clustering for temporal data under weighted and kernel time warp.
\newblock \emph{Pattern Recognition Letters}, 75:63--69, 2016.

\bibitem{Somervuo1999} 
P. Somervuo and T. Kohonen,
\newblock Self-organizing maps and learning vector quantization for feature sequences.
\newblock \emph{Neural Processing Letters}, 10(2):151--159, 1999.
\end{thebibliography}
\end{document}